\let\icml@origaddcontentsline\addcontentsline
\def\eqref#1{equation~\ref{#1}}
\def\1{\bm{1}}
\def\vx{{\bm{x}}}
\def\vy{{\bm{y}}}
\def\vz{{\bm{z}}}
\def\mX{{\bm{X}}}
\def\mY{{\bm{Y}}}
\def\mZ{{\bm{Z}}}
\DeclareMathAlphabet{\mathsfit}{\encodingdefault}{\sfdefault}{m}{sl}
\SetMathAlphabet{\mathsfit}{bold}{\encodingdefault}{\sfdefault}{bx}{n}
\newcommand{\E}{\mathbb{E}}
\newcommand{\R}{\mathbb{R}}
\newcites{app}{Appendix References}
\theoremstyle{plain}
\newtheorem{theorem}{Theorem}[section]
\newtheorem{proposition}[theorem]{Proposition}
\newtheorem{lemma}[theorem]{Lemma}
\newtheorem{corollary}[theorem]{Corollary}
\theoremstyle{definition}
\theoremstyle{remark}
\newtheorem{remark}[theorem]{Remark}
\icmltitlerunning{Supervised Graph Contrastive Learning for Gene Regulatory Networks}
\begin{document}

\twocolumn[
  \icmltitle{Supervised Graph Contrastive Learning for Gene Regulatory Networks}

  \icmlsetsymbol{equal}{*}

  \begin{icmlauthorlist}
    \icmlauthor{Sho Oshima}{equal,KyotoU}
    \icmlauthor{Yuji Okamoto}{equal,KyotoU,RIKEN}
    \icmlauthor{Taisei Tosaki}{KyotoU,RIKEN}
    \icmlauthor{Ryosuke Kojima}{KyotoU,RIKEN}
  \end{icmlauthorlist}

  \icmlaffiliation{KyotoU}{Department of Graduate School of Medicine, Kyoto University, Kyoto, Japan}
  \icmlaffiliation{RIKEN}{RIKEN, Kobe, Japan}

  \icmlcorrespondingauthor{Yuji Okamoto}{yuji.0001@gmail.com}

  \icmlkeywords{Representation Learning, Graph Contrastive Learning, Gene Regulatory Network, ICML}

  \vskip 0.3in
]

\printAffiliationsAndNotice{
  \icmlEqualContribution
}

\begin{abstract}
Graph Contrastive Learning (GCL) is a powerful self-supervised learning framework that performs data augmentation through graph perturbations, with growing applications in the analysis of biological networks such as Gene Regulatory Networks (GRNs).
The artificial perturbations commonly used in GCL, such as node dropping, induce structural changes that can diverge from biological reality.
This concern has contributed to a broader trend in graph representation learning toward augmentation-free methods, which view such structural changes as problematic and should be avoided.
However, this trend overlooks the fundamental insight that structural changes from biologically meaningful perturbations are not a problem to be avoided, but rather a rich source of information, thereby ignoring the valuable opportunity to leverage data from real biological experiments. Motivated by this insight, we propose SupGCL (Supervised Graph Contrastive Learning), a new GCL method for GRNs that directly incorporates biological perturbations from gene knockdown experiments as supervision.
SupGCL is a probabilistic formulation that continuously generalizes conventional GCL, linking artificial augmentations with real perturbations measured in knockdown experiments, and using the latter as explicit supervision. 
On patient-derived GRNs from three cancer types, we train GRN representations with SupGCL and evaluate it in two regimes: (i) embedding space analysis, where it yields clearer disease-subtype structure and improves clustering, and (ii) task-specific fine-tuning, where it consistently outperforms strong graph representation learning baselines on 13 downstream tasks spanning gene-level functional annotation and patient-level prediction.

\end{abstract}
\section{Introduction}
Graph representation learning has recently attracted attention in various fields to learn a meaningful latent space to represent the connectivity and attributes in given graphs ~\citep{Ju_2024}.
Applications of graph representation learning are advancing in numerous areas where network data exists, such as analysis in social networks, knowledge graphs~\citep{hu2023cost,shen2023knowledge}, and biological network analysis in bioinformatics ~\citep{liu2023muse,wu2021self}.

The application of graph representation learning to Gene Regulatory Networks (GRNs), which contain information about intracellular functions and processes, is particularly important in the fields of biology and drug discovery.
It is expected to contribute to the identification of therapeutic targets and the elucidation of disease mechanisms.
Representation learning for GRNs has been applied to tasks such as transcription factor inference~\citep{yu2025gclink} and predicting drug responses in cancer cell lines~\citep{liu2022graphcdr}. 
Advances in gene expression measurement and analysis technologies have enabled the construction of patient-specific GRNs, highlighting gene regulation patterns that differ from the population as a whole~\citep{nakazawa2021novel}.
Hereafter, this paper will refer to such individualized networks simply as GRNs.

Among the various graph representation learning methods, Graph Contrastive Learning (GCL) has emerged as a powerful self-supervised learning framework~\citep{you2020graph}.
GCL learns by maximizing the similarity between node representations across different views of the same graph generated via data augmentation, a process that assumes the preservation of essential features like graph topology~\citep{zhu2021graph,wang2024hetergcl}.
While augmentation methods have been refined in the application of GCL to GRNs, there have been concerns that conventional artificial perturbations like node dropping can cause structural changes so significant—disrupting the function of networks including critical nodes like master regulators—that they hinder learning~\citep{paull2021modular}.
In response to these challenges, Augmentation-Free approaches have been developed, which perturb model parameters instead of the graph structure itself, and have shown high performance~\citep{thakoor2021bootstrapped,he2024exploitation}.

However, this trend overlooks a fundamental insight: that the representational shifts caused by graph augmentation are not an obstacle to overcome but rather a rich source of information to be exploited.
Consequently, it ignores the valuable opportunity to leverage data obtained from actual biological experiments, such as knockdown experiments, even though they have become readily accessible enabled by advances in high-throughput profiling technologies~\citep{LINCS_L1000_2017}. 

To address these issues, we propose a novel supervised GCL method (SupGCL) that leverages gene knockdown perturbations within GRNs.
Our method uses experimental data from actual gene knockdowns as supervision, enabling biologically faithful representation learning.
In gene knockdown experiments, the suppression of specific genes causes biological perturbations, leading to the observation of a new altered GRN.
By using these perturbations as supervision signals for GCL, we can perform data augmentation that retains biological characteristics.
Moreover, since our method naturally extends traditional GCL models in the direction of supervised augmentation within a probabilistic framework, node-level GCL approaches emerge as special cases of our proposed model.

To evaluate the effectiveness of the proposed method, we benchmark SupGCL on patient-derived GRNs from three cancer types in both embedding space analysis and fine-tuning.
Without any task-specific training, SupGCL produces markedly better disease-subtype structure in the embedding space, improving clustering quality. 
With fine-tuning, it consistently surpasses strong graph representation learning baselines—including previous GCL models—across 13 downstream tasks spanning gene-level functional annotation (Gene Ontology categories and cancer-gene classes) and patient-level tasks (survival risk and subtype classification).
Our main contributions are as follows:
\begin{itemize}
\item \textbf{Redefining Graph Augmentation by Introducing Biological Perturbations:}
We develop a new GCL method for GRNs that incorporates gene knockdown data as the supervision of graph perturbations to enhance biological plausibility.

\item \textbf{Theoretical Extension of Node-level GCL:}
We formulate supervised GCL, incorporating augmentation selection into a unified probabilistic modeling framework, and theoretically demonstrate that existing node-level GCL methods correspond to singular solutions under this formulation.

\item \textbf{Empirical Validation:}
We quantitatively demonstrate that SupGCL achieves higher clustering metrics (NMI/ARI) for disease subtypes without task-specific training. Furthermore, it consistently outperforms state-of-the-art baselines across 13 downstream tasks for three cancer types through fine-tuning.

\end{itemize}

Our implementation and all experimental codes and all GRNs datasets are available on \url{https://github.com/shobioinfo/SupGCL} and \url{https://zenodo.org/records/15496012}, respectively.

\section{Related Works}

\paragraph{Augmentation-Free GCLs : }
\label{sec:related-works-aug-free}

GCL learns representations by maximizing agreement between multiple augmented views of a graph.
While early node-level methods such as GRACE achieved strong performance~\citep{zhu2020deep}, graph-level methods like GraphCL often suffered from losing key node features under augmentation~\citep{you2021graph,sun2025understanding}.
These methods remained a common limitation that is the heavy dependence on augmentation choices.
This limitation motivated augmentation-free GCL, initiated by BGRL~\citep{thakoor2021bootstrapped}, which replaces heuristic (and potentially structure-damaging) augmentations with a bootstrapping mechanism, inspiring follow-up work such as simGRACE~\citep{xia2022simgrace} and AFGRL~\citep{Lee_Lee_Park_2022}.
SGRL~\citep{he2024exploitation} further incorporates principles such as feature uniformity to mitigate representation collapse observed in methods like BGRL, yielding stable and high-performing self-supervised learning.
However, these approaches treat graph perturbations primarily as nuisances to avoid, and thus miss opportunities to exploit real-world supervisory data by structural changes.

\paragraph{Augmentation-Adjustment GCLs : }
\label{sec:related-works-aug-adjustment}

To reduce reliance on manually designed augmentations, another line of work learns augmentations by directly optimizing graph perturbations.
In this setting, AD-GCL~\citep{suresh2021adversarial} and ArieL~\citep{feng2024ariel} generate robust views via adversarial training, while AutoGCL~\citep{yin2022autogcl} learns diverse augmentation policies that preserve the labels of contrastive pairs.
By endogenizing augmentation rather than treating it as a fixed noise source, these frameworks support robust learning across diverse graph structures.
However, since their perturbations are driven solely by optimization objectives, they do not effectively expand data with variations that reflect real-world phenomena.

\paragraph{Statistical Formulations of Contrastive Learning : }

The theoretical formulation of contrastive learning has been actively evolving. 
The I-Con framework~\citep{alshammari2025a} presents a unified statistical formulation of representation learning by adopting a variational-inference perspective to organize contrastive learning methods as the minimization of KL divergences between an ideal reference distribution and a model-induced one.
Within this framework, SupCon~\citep{khosla2020supervised}, proposed as a supervised contrastive learning model, constructs the reference distribution using label data to concentrate features of the same class.
However, such methods fundamentally rely on one-to-one supervised categorical labels for each sample and remain restricted to probabilistic modeling at the sample level.

\paragraph{Representation Learning for GRNs : }

Applying Graph Neural Networks to known GRNs is a growing area of research for downstream tasks such as gene function classification and patient survival prediction\citep{zohari2025graph}.
The prevailing approach in this field has been supervised learning without pre-training~\citep{liu2022graphcdr,yu2025gclink}.
While link prediction addresses the important task of inferring the unknown structure of GRNs~\citep{huynh2010inferring,yu2025gclink}, our work focuses on learning representations from known GRN structures for downstream applications.

Recently, self-supervised methods have been introduced to this domain. Initial efforts focused on applying general augmentation-free methods, such as the Graph Autoencoder, which learns representations by reconstructing the graph's structure~\citep{jung2024cancergate}.
More recently, the focus has begun to shift towards specialized GCL frameworks tailored for GRNs, although such approaches are still uncommon. A notable example is MuSe-GNN~\citep{liu2023muse}, which leverages different data modalities (e.g., scRNA-seq and scATAC-seq) as natural "views," avoiding the need for artificial augmentations by using biologically diverse information. In contrast to these methods, our work, SupGCL, uniquely incorporates real-world perturbations from within a single modality as supervisory signals for GCL.

\section{Preliminaries}

\subsection{Background of Graph Contrastive Learning}

Although there are various definitions of contrastive learning, it can be expressed using a probabilistic model based on KL divergence over pairs of augmentations or node instances ~\citep{alshammari2025a}.
Let $\mathcal{X}$ denote a set of entities and let $(x, y) \in \mathcal{X} \times \mathcal{X}$ be a pair from that set.
The contrastive loss is formulated as follows:
\begin{align}
\mathrm{Loss}_{\text{I-Con}} \triangleq \frac{1}{|\mathcal{X}|}\sum_{x\in \mathcal{X}} D_{\mathrm{KL}}(p_\theta (y|x)|q_\phi (y|x)). \label{Eq:Loss_I-CON}
\end{align}
Here, $q_\phi (y|x)$ is the probability distribution of the learned model with parameter $\phi$, and $p_\theta (y|x)$ is a reference distribution.
To avoid trivial solutions when training both $p_\theta$ and $q_\phi$ simultaneously, the reference distribution $p_\theta$ is almost fixed.
The reference model $p_\theta (y|x)$ is often designed as a probability that assigns a non-zero constant to positive pairs $(x,y)$ and zero to negative pairs $(x,y)$.

Graph Contrastive Learning (GCL) handles the learned model $q_\phi (j|i)$ corresponding to a pair of nodes $(i,j)$.
Consider graph operations for augmentation, order them, and represent the index of these operations by $a$.
Let $\vz^a_i\in \R^{d}$ be the graph embedding of the $i$-th node obtained from the Graph Neural Network under the $a$-th augmentation operation.
For two augmentation operations $(a,b)$, the pair of probability models $(p,q_\phi^{a,b})$ used in GCL is defined by
\begin{align*}
p(j|i) \triangleq \delta_{ij},\quad q^{a,b}_\phi(j|i) \triangleq \frac{\exp(\mathrm{sim}(\vz_i^a,\vz_j^b)/\tau_{\rm n})}{\sum_{k\in \mathcal{V}} \exp( \mathrm{sim}(\vz_i^a,\vz_k^b)/\tau_{\rm n})}.
\end{align*}
Here, $\mathcal{V}$ is the set of nodes in the given graph, $\delta_{ij}$ is the Kronecker delta, $\tau_{\rm n}>0$ is a temperature parameter and ${\rm sim}(\cdot,\cdot)$ denotes cosine similarity. 
This setting is often extended so that the definitions of $(p,q^{a,b}_\phi)$ vary according to how positive and negative pairs are sampled.
Note that the target model $q_\phi$ depends on the sampling method of augmentation operators, so the probability model also depends on $(a,b)$.

GCL trains the model using the following loss function on the pair of probability models $(p, q_\phi^{a,b})$ induced by augmentation operations $(a,b)$, according to the formulation of contrastive learning loss~(\ref{Eq:Loss_I-CON}).
\begin{align}
\mathrm{Loss}_{\rm Node}^{a,b} \triangleq \frac{1}{|\mathcal{V}|}\sum_{i\in \mathcal{V}} D_{\mathrm{KL}}(p (j|i)|q^{a,b}_\phi (j|i)). \label{Eq:loss_node}
\end{align}

This encourages the embeddings at the node level $\vz_i^a$ and $\vz_i^b$ of the same node under different augmentation operations to be close to each other.
Typically, augmentation operations $a,b$ are chosen by uniform sampling from a set of candidates $\mathcal{A}$.
Hence, in practice, the expected value is minimized under the uniform distribution ${\rm U}_\mathcal{A}$ over $\mathcal{A}$:
\begin{align}
\mathrm{Loss}_{\rm Node} \triangleq \E_{a,b\sim \mathrm{U}_\mathcal{A}}[\mathrm{Loss}_{\rm Node}^{a,b}].
\end{align}

While GCL achieves node-level representation learning via the procedure described above, in many cases the augmentation operations themselves rely on artificial perturbations such as randomly adding and/or deleting nodes and/or edges. In this study, we introduce gene knockdown -- a biological perturbation -- as a supervision for these augmentation operations.

\begin{figure*}[ht]
  \vskip 0.2in
  \begin{center}
    \centerline{\includegraphics[width=0.95\linewidth]{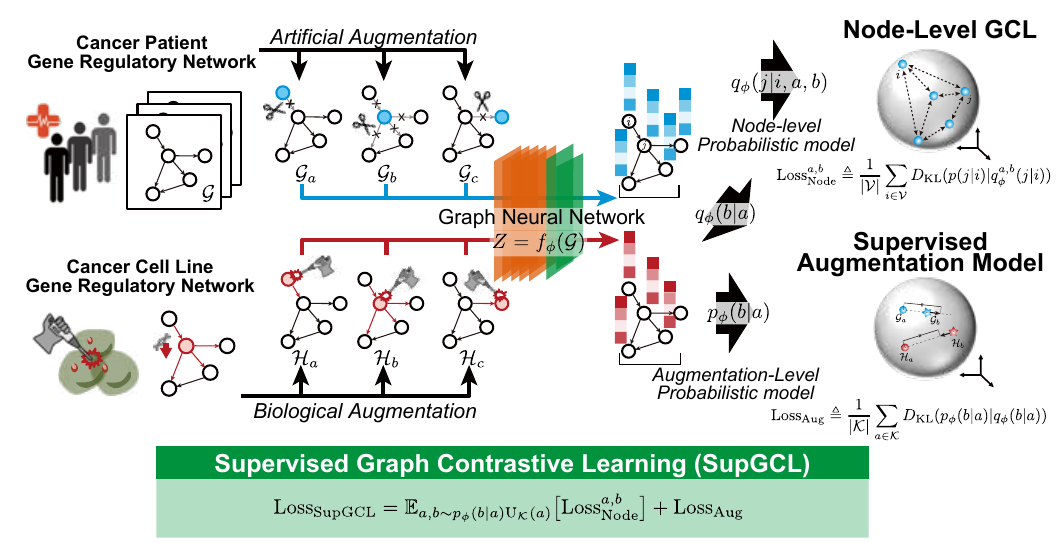}}
    \caption{
    {\bf Schematic overview of SupGCL.} Artificial augmentations are generated by simulating gene knockdowns in a patient GRN, while the teacher GRNs for supervision are derived from real-world knockdown experiments. Embeddings are extracted using a shared GNN, and both node-level and augmentation-level contrastive losses are computed via KL divergence.
    }
\label{fig:SupGCL}
  \end{center}
\end{figure*}

\subsection{Notation and Problem Definition}

In this study, we describe a GRN as a directed graph $\mathcal{G}\triangleq (\mathcal{V},\mathcal{E},\mX^\mathcal{V},\mX^\mathcal{E})$ that contains information on nodes and edges. 
Here, $\mathcal{V}$, and $\mathcal{E}$ are the sets of nodes and edges, respectively, and each node represents a gene.
$\mX_{:,i}^\mathcal{V}$ is the feature of the $i$-th gene, and $\mX_{:,i}^\mathcal{E}$ is the feature of the $i$-th edge in the network. 
The augmentation operation corresponding to the knockdown of the $i$-th gene is modeled by setting the feature of the $i$-th gene to zero and also setting the features of all edges connected to the $i$-th gene to zero.

We associate the $a$-th augmentation operation with the knockdown of the $a$-th gene. 
In what follows, we denote by $\mathcal{G}_a$ the graph obtained by applying the $a$-th augmentation operation to $\mathcal{G}$.
Moreover, in this study, let $\mathcal{H}_a$ be the teacher GRN for the knockdown of the $a$-th gene , and let $\mathcal{K}$ be the set of all augmentation operations for which such teacher GRNs exist. 
In other words, $\mathcal{H}_a$ is a GRN that serves as a teacher for artificial augmentation for the $a$-th gene.

Our goal is to use the original GRN $\mathcal{G}$ and its teacher GRNs $\{\mathcal{H}_a\}_{a\in \mathcal{K}}$ to train a Graph Neural Network (GNN) $f_\phi$. 
Defining embedded representations through the GNN~$f_\phi$ as
\begin{align*}
\mZ^a \triangleq f_\phi(\mathcal{G}_a) \in \mathbb{R}^{|\mathcal{V}|\times d},\quad \mY^a \triangleq f_\phi(\mathcal{H}_a) \in \mathbb{R}^{|\mathcal{V}|\times d},
\end{align*}
where $\vz_i^a$ and $\vy_i^a$ denote the embedding vectors of the $i$-th node in $\mZ^a$ and $\mY^a$, respectively, and $d$ is the embedding dimension.
Note that the same GNN $f_\phi$ is used to produce both $\mZ^a$ and $\mY^a$.

In this work, we train the neural network $f_\phi$ using the set of pairs $\{(\mZ^a,\mY^a)\}_{a\in \mathcal{K}}$, where $(\mZ^a,\mY^a)$ corresponds to the graph embedding obtained by the GRN augmentation operation and the embedding of the teacher GRN for the corresponding gene knockdown.

\begin{figure*}[ht]
  \vskip 0.2in
  \begin{center}
    \centerline{\includegraphics[width=1.0\linewidth]{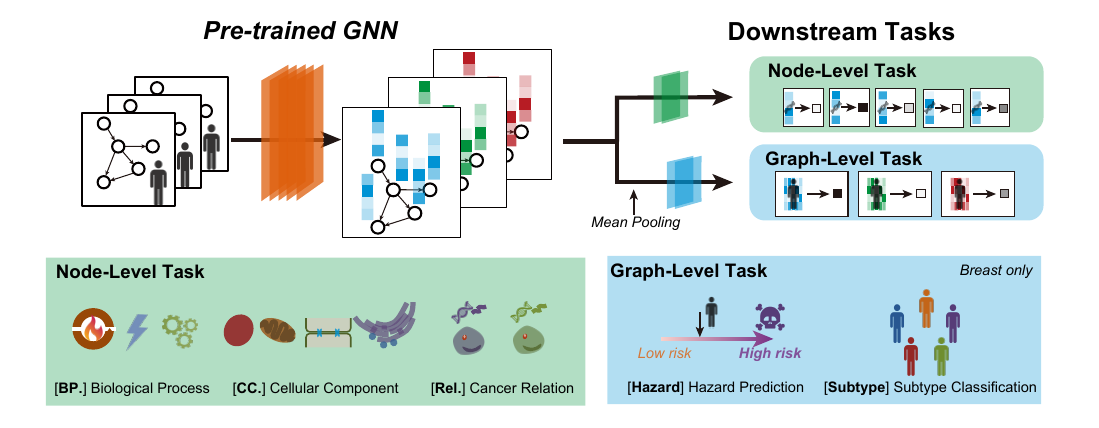}}
    \caption{
    {\bf Overview of downstream tasks.} Node-level tasks involve gene classification into Biological Process [{\bf BP.}], Cellular Component [{\bf CC.}], and cancer relevance [{\bf Rel.}]. Graph-level tasks include patient survival prediction [{\bf Hazard}] and breast cancer subtyping [{\bf Subtype}]. Mean pooling provides graph-level representations.}
    \label{fig:Downstream_Task}
\end{center}
\end{figure*}

\section{Method}

For the set of embedded representations $\{(\mZ^a,\mY^a)\}_{a\in \mathcal{K}}$, we consider the pair of augmentation operations $(a,b)$ and the pair of nodes $(i,j)$ according to the contrastive learning scheme.
First, for the pair of augmentation operations $(a,b)$, we clarify the supervised learning problem for augmentation operations using KL divergence and then propose SupGCL using a distribution over pairs of combinations of nodes and extension operations.
A sketch of the proposed method is shown in Figure~\ref{fig:SupGCL}.

The probability distribution of augmentation operations is naturally introduced by using similarities in the entire graph embedding space $\R^{|\mathcal{V}|\times d}$ (rather than per node).
By introducing the Frobenius inner product as the similarity in the matrix space, we define the probability models for the augmentation operations as:
\begin{align*}
p_\phi (b|a) &\triangleq \frac{\exp\left(\mathrm{sim}_F(\mY^a,\mY^b)/\tau_{\rm a}\right)}{\sum_{c \in \mathcal{K}} \exp\left(\mathrm{sim}_F(\mY^a,\mY^c)/\tau_{\rm a}\right)},\\
q_\phi (b|a) &\triangleq \frac{\exp\left(\mathrm{sim}_F(\mZ^a,\mZ^b)/\tau_{\rm a}\right)}{\sum_{c \in \mathcal{K}} \exp\left(\mathrm{sim}_F(\mZ^a,\mZ^c)/\tau_{\rm a}\right)},    
\end{align*}
where $\mathrm{sim}_F(\cdot,\cdot)$ denotes the cosine similarity via the Frobenius inner product, and $\tau_{\rm a} > 0$ is a temperature parameter.
Unlike node-level learning, $p_\phi(b|a)$ is not a fixed constant but rather a reference distribution based on the supervised embeddings $\{\mY^a\}_{a\in\mathcal{K}}$.
Both probability models $p_\phi$ and $q_\phi$ are parameterized by the same GNN~$f_\phi$.

Using these probability distributions, substituting the reference model $p_\phi(b|a)$ and the learned model $q_\phi(b|a)$ into the formulation of contrastive learning in (\ref{Eq:Loss_I-CON}) yields the loss function for augmentation operations:
\begin{align}
    \mathrm{Loss}_{\rm Aug}\triangleq   \frac{1}{|\mathcal{K}|}\sum_{a\in \mathcal{K}} D_\mathrm{KL}(p_\phi(b|a)|q_\phi(b|a)).\label{Eq:Loss_aug}
\end{align}
Minimizing this loss reduces the discrepancy in embedding distributions between artificially augmented graphs and biologically grounded knockdown graphs.
However, if both $p_\phi$ and $q_\phi$ are optimized simultaneously, the model may converge to a trivial solution.
For example, if the GNN outputs constant embeddings, both distributions become uniform, and $\mathrm{Loss}_{\rm Aug}=0$.
Thus, minimizing $\mathrm{Loss}_{\rm Aug}$ alone is insufficient for learning meaningful graph representations.

To address this issue, here we first introduce a reference model $p_\phi(j,b|i,a)$ and a learned model $q_\phi(j,b|i,a)$ that use conditional probabilities for each pair of nodes and augmentation $(i,a), (j,b)\in \mathcal{V}\times \mathcal{K}$.
By substituting these into the contrastive learning formulation in (\ref{Eq:Loss_I-CON}), we derive the loss function of Supervised Graph Contrastive Learning:
\begin{align}
    &\mathrm{Loss}_{\rm SupGCL} \nonumber\\
    &\triangleq \frac{1}{|\mathcal{V}||\mathcal{K}|} \sum_{i\in \mathcal{V},a\in  \mathcal{K}}D_{\rm KL}(p_\phi(j,b|i,a)|q_\phi(j,b|i,a)).
\end{align}

Since data augmentation affects the entire graph, we assume that the graph-level teacher distribution $p_\phi(b|a)$, which evaluates the similarity between augmentation operations, and the node-level teacher distribution $p(j|i)$, which evaluates node identity, are independent.
This leads to the following theorem.

\begin{theorem} \label{Thm:main_theorem}
Assuming $p_\phi(i,j,a,b) = p(i,j)p_\phi(a,b)$, then
\begin{align}
    &\mathrm{Loss}_{\rm SupGCL}  \nonumber\\
    &= \E_{a,b\sim p_\phi (b|a)\mathrm{U}_\mathcal{K}(a) } \big[{\rm Loss}_{\rm Node}^{a,b}\big] + \mathrm{Loss}_{\rm Aug}.
\end{align}
\end{theorem}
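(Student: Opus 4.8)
The plan is to exploit the fact that a Kullback--Leibler divergence between two product measures decomposes additively, so that the joint node--augmentation loss separates into a node-level and an augmentation-level contribution. The first step is to turn the stated independence of the reference \emph{joint}, $p_\phi(i,j,a,b)=p(i,j)p_\phi(a,b)$, into a factorization of the reference \emph{conditional}. Marginalizing the product form gives $p_\phi(i,a)=p(i)p_\phi(a)$, and dividing yields
\begin{align}
p_\phi(j,b|i,a) = \frac{p(i,j)p_\phi(a,b)}{p(i)p_\phi(a)} = p(j|i)\,p_\phi(b|a).
\end{align}
I would then record the matching factorization of the target model, $q_\phi(j,b|i,a)=q^{a,b}_\phi(j|i)\,q_\phi(b|a)$, which holds by the construction of the combined model out of its node-level factor $q^{a,b}_\phi(j|i)$ and its augmentation-level factor $q_\phi(b|a)$.

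The core computation is to substitute both factorizations into the summand $D_{\rm KL}(p_\phi(j,b|i,a)|q_\phi(j,b|i,a))$. Since the logarithm of a product is a sum of logarithms, the log-ratio splits as $\log\big(p(j|i)/q^{a,b}_\phi(j|i)\big)+\log\big(p_\phi(b|a)/q_\phi(b|a)\big)$, and the double sum over $(j,b)$ against the product weight $p(j|i)p_\phi(b|a)$ breaks into two pieces. Using the normalizations $\sum_j p(j|i)=1$ and $\sum_b p_\phi(b|a)=1$ to collapse the factor that does not appear in each piece, I would obtain
\begin{align}
D_{\rm KL}(p_\phi(j,b|i,a)|q_\phi(j,b|i,a)) = \sum_{b} p_\phi(b|a)\,D_{\rm KL}(p(j|i)|q^{a,b}_\phi(j|i)) + D_{\rm KL}(p_\phi(b|a)|q_\phi(b|a)).
\end{align}

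It then remains to average over $(i,a)\in\mathcal{V}\times\mathcal{K}$ with weight $1/(|\mathcal{V}||\mathcal{K}|)$ and identify each piece with the claimed terms. The augmentation-level summand is independent of $i$, so the sum over $i$ contributes a factor $|\mathcal{V}|$ that cancels the prefactor, leaving $\tfrac{1}{|\mathcal{K}|}\sum_{a}D_{\rm KL}(p_\phi(b|a)|q_\phi(b|a))=\mathrm{Loss}_{\rm Aug}$. For the node-level piece, I would recognize the inner average $\tfrac{1}{|\mathcal{V}|}\sum_i D_{\rm KL}(p(j|i)|q^{a,b}_\phi(j|i))$ as precisely $\mathrm{Loss}_{\rm node}^{a,b}$, and read the remaining weights $\tfrac{1}{|\mathcal{K}|}\sum_a\sum_b p_\phi(b|a)$ as the expectation under drawing $a\sim\mathrm{U}_\mathcal{K}$ and then $b\sim p_\phi(b|a)$, which is exactly $\mathrm{E}_{a,b\sim p_\phi(b|a)\mathrm{U}_\mathcal{K}(a)}[\mathrm{Loss}_{\rm node}^{a,b}]$. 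Summing the two reproduces the statement.

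I expect the only genuine subtlety to be bookkeeping rather than depth: the additivity of KL over product measures is elementary, so the point requiring care is that \emph{both} distributions factorize. The reference factorization is supplied by the independence hypothesis, but the target factorization is a structural property of how $q_\phi(j,b|i,a)$ is assembled, so I would make this product form explicit before starting the computation. A secondary point is interpreting the emergent sampling law correctly: the uniform marginal over $a$ comes from the $1/|\mathcal{K}|$ prefactor, while the conditional $p_\phi(b|a)$ comes from the reference weighting carried along inside the node term. The particular choice $p(j|i)=\delta_{ij}$ makes each node-level KL a standard cross-entropy, but this is not needed for the decomposition itself.
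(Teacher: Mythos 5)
Your proposal is correct and follows essentially the same route as the paper: both rest on the KL chain-rule decomposition combined with the independence assumption on the reference distribution, with you working per-conditional $(i,a)$ where the paper first lifts the averaged conditional KL to a joint KL and then conditions on $(a,b)$. Your one improvement is making explicit the structural factorization $q_\phi(j,b|i,a)=q^{a,b}_\phi(j|i)\,q_\phi(b|a)$ of the target model (and noting that the node factor still depends on $b$, so the first term carries the weight $p_\phi(b|a)$), which the paper's proof uses only implicitly.
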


\begin{proof}
This follows directly from the standard decomposition of KL divergence:
$D_{\rm KL}(p(x, y)|q(x, y)) = \E_{x \sim p(x)}[D_{\rm KL}(p(y|x)|q(y|x))] + D_{\rm KL}(p(x)|q(x))$.
See Appendix~\ref{App:proof_of_theorem} for more details.
\end{proof}

The first term in Theorem~\ref{Thm:main_theorem} corresponds to the expectation of the node-level GCL loss  $\mathrm{Loss}_{\rm Node}^{a,b}$ (as defined in (\ref{Eq:loss_node})) with respect to the supervised augmentation distribution $p_\phi(b|a)$.
Importantly, since the theorem is independent of the specific choice of the node-level model $(p, q_\phi^{a,b})$, any contrastive loss described by KL divergence can be used in practice.
Meanwhile, the second term reduces the distributional difference between the artificially generated augmentation-based GRN and the teacher GRN.
Minimizing these two terms enable robust graph-controlled learning that naturally avoids selecting genes that could dramatically alter the structure of gene regulatory networks (GRNs).

Moreover, the performance of the node-level representation learning and the biological validity following teacher data for the augmentation operations can be controlled by the temperature parameters $\tau_{\rm n}$ and $\tau_{\rm a}$ of each probability model.
In particular, when the temperature parameter $\tau_{\rm a}$ involved in the augmentation operation is sufficiently large, the augmentation operation becomes independent of the teacher GRNs $\{\mY_a\}_{a\in \mathcal{K}}$, and coincides with the conventional node-level GCL loss function.
\begin{corollary}
\label{coro:connection}
        $\lim_{\tau_{\rm a}\rightarrow \infty }\mathrm{Loss}_{\rm SupGCL} = \mathrm{Loss}_{\rm Node}$.
\end{corollary}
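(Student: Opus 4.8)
The plan is to start from the decomposition supplied by Theorem~\ref{Thm:main_theorem}, which expresses $\mathrm{Loss}_{\rm SupGCL}$ as the node-level term $\mathbb{E}_{a,b\sim p_\phi(b|a)\mathrm{U}_\mathcal{K}(a)}[\mathrm{Loss}_{\rm node}^{a,b}]$ plus the augmentation term $\mathrm{Loss}_{\rm Aug}$. The key observation is that both summands depend on $\tau_{\rm a}$ only through the two softmax distributions $p_\phi(b|a)$ and $q_\phi(b|a)$, while the factors $\mathrm{Loss}_{\rm node}^{a,b}$ are independent of $\tau_{\rm a}$ (they involve only $\tau_{\rm n}$ and the fixed embeddings). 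Since $\mathcal{V}$ and $\mathcal{K}$ are finite, the entire expression is a finite sum of terms each continuous in $\tau_{\rm a}$, so I may pass the limit inside every sum with no convergence machinery required.

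First I would establish the high-temperature limit of the softmax. For fixed embeddings, every exponent $\mathrm{sim}_F(Y^a,Y^b)/\tau_{\rm a}$ tends to $0$ as $\tau_{\rm a}\to\infty$, so each $\exp(\cdot)\to 1$ and hence $p_\phi(b|a)\to 1/|\mathcal{K}|=\mathrm{U}_\mathcal{K}(b)$ for every pair $(a,b)$; the identical computation gives $q_\phi(b|a)\to\mathrm{U}_\mathcal{K}(b)$. This is just the standard fact that a Boltzmann distribution flattens to the uniform distribution at infinite temperature. With this in hand I would treat the two terms separately. For the augmentation term, both limiting distributions are the strictly positive uniform distribution, so $D_{\rm KL}$ is finite and continuous near the limit and $D_{\rm KL}(p_\phi(b|a)|q_\phi(b|a))\to D_{\rm KL}(\mathrm{U}_\mathcal{K}|\mathrm{U}_\mathcal{K})=0$, giving $\mathrm{Loss}_{\rm Aug}\to 0$. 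For the node term, the sampling weights converge to the product $\mathrm{U}_\mathcal{K}(a)\mathrm{U}_\mathcal{K}(b)$, so the finite weighted sum $\mathbb{E}_{a,b\sim p_\phi(b|a)\mathrm{U}_\mathcal{K}(a)}[\mathrm{Loss}_{\rm node}^{a,b}]$ converges to $\mathbb{E}_{a,b\sim \mathrm{U}_\mathcal{K}}[\mathrm{Loss}_{\rm node}^{a,b}]=\mathrm{Loss}_{\rm node}$. Adding the two limits yields the claim.

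The only point needing care, rather than a genuine obstacle, is the bookkeeping around the two roles of $\tau_{\rm a}$ (it enters both $p_\phi$ and $q_\phi$) and confirming that the joint sampling law in the first term of Theorem~\ref{Thm:main_theorem} collapses to the product uniform $\mathrm{U}_\mathcal{K}\times\mathrm{U}_\mathcal{K}$ appearing in the definition of $\mathrm{Loss}_{\rm node}$ (here understood with $\mathcal{K}$ in the role of the candidate set $\mathcal{A}$). Because everything is finite-dimensional and the softmax is jointly continuous in $\tau_{\rm a}$, no dominated-convergence argument is needed, and the whole corollary reduces to the elementary infinite-temperature limit of the softmax together with the identity $D_{\rm KL}(u|u)=0$.
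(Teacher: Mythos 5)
Your proposal is correct and follows essentially the same route as the paper's own proof: take the infinite-temperature limit of the softmax so that both $p_\phi(b|a)$ and $q_\phi(b|a)$ flatten to $\mathrm{U}_\mathcal{K}$, conclude $\mathrm{Loss}_{\rm Aug}\to 0$ from $D_{\rm KL}(\mathrm{U}_\mathcal{K}|\mathrm{U}_\mathcal{K})=0$, and let the sampling law in the node term collapse to the uniform expectation defining $\mathrm{Loss}_{\rm node}$. The only difference is that you spell out the finiteness/continuity justification for exchanging limit and summation (and flag the $\mathcal{A}$-versus-$\mathcal{K}$ identification), which the paper leaves implicit.
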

\begin{proof}
As $\tau_{\rm a} \rightarrow \infty$, we have $p_\phi(b|a) \rightarrow \mathrm{U}_{\mathcal{K}}(b)$ and $q_\phi(b|a) \rightarrow \mathrm{U}_{\mathcal{K}}(b)$.
Therefore, the expectation term becomes 
$\lim_{\tau_{\rm a}\rightarrow \infty }\E_{a,b\sim p_\phi (b|a)\mathrm{U}_\mathcal{K}(a) } \big[{\rm Loss}_{\rm Node}^{a,b}\big] =  \E_{a,b\sim {\rm U}_\mathcal{K}}[\mathrm{Loss}_{\rm Node}^{a,b}]$, $\lim_{\tau_{\rm a}\rightarrow \infty }D_\mathrm{KL} (p_\phi(b|a)|q_\phi(b|a))=0$
, thus proving the corollary.
\end{proof}

In this study, we train the GNN using standard gradient-based optimization applied to the loss function defined in Theorem~\ref{Thm:main_theorem}.
The corresponding pseudocode is provided in Appendix~\ref{App:Algorithm}.

\begin{remark}
When supervision is available for graph augmentations, one may consider learning the augmentation function itself as a possible strategy.
However, such an approach results in an ill-posed formulation within the contrastive learning framework admitting trivial solutions.
An analysis of the validity of SupGCL in these augmentation settings, together with a quantitative comparison across different augmentation schemes, is provided in the Appendix~\ref{App:Augmentation}. 
\end{remark}
\begin{remark}
The independence assumption $p_{\phi}(j,i,b,a)=p_{\phi}(b,a)p(j,i)$ implies that the vertex set of the teacher graph does not need to match that of the target graph.
In general, if the vertex set $V(H_*)$ of the teacher graph differs from the vertex set $V(G)$ of the target graph, the KL divergence between $p_{\phi}(j,b\mid i,a)$ and the target distribution $q_{\phi}(j,b\mid i,a)$ is not defined because their domains are different.
In contrast, under this assumption, the proposed method defines the loss function even when the node-level domains of the teacher and target distributions are different.
Therefore, the proposed formulation can be applied to GRNs with different gene sizes and to GRN datasets composed of diverse cancer types.
\end{remark}

\section{Experiments}

\begin{figure*}[t]
    \centering
    \includegraphics[width=0.95\linewidth]{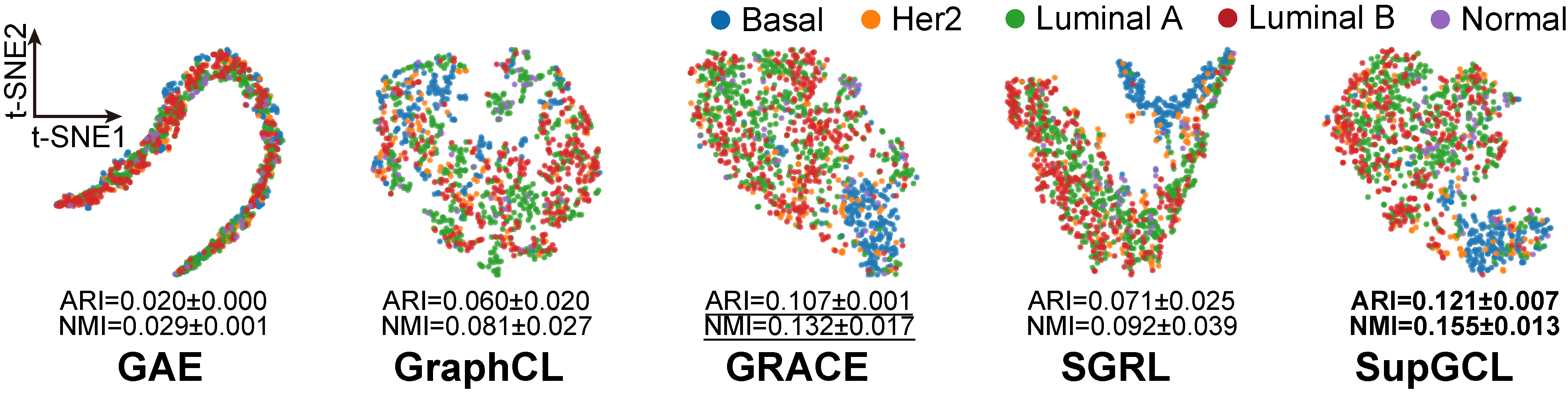}    
    \caption{t-SNE visualization of pre-trained graph-level embeddings on breast cancer GRNs. 
    Each point represents the readout feature of an individual patient’s network. 
    NMI and ARI scores indicate quantitative clustering metrics of the embeddings across 5 experimental runs.}
    \label{fig:GCL-latent-space}
\end{figure*}

\begin{table*}[t]
  \caption{Fine-tuning results on downstream tasks. \textbf{BP}, \textbf{CC}, and \textbf{Rel} are node-level tasks, while \textbf{Hazard} and \textbf{Subtype} are graph-level tasks. \textbf{Breast}, \textbf{Lung}, and \textbf{Colorectal} denote cancer types.}
\begin{center}
\begin{small}
\begin{sc}
\label{tab:benchmark-node-task}
  \begin{tabular}{lcccccc}
    \toprule
    \textbf{Task} 
      & \text{w/o-pretrain} & \text{GAE}
      & \text{GraphCL}     & \text{GRACE}
      & \text{SGRL}        & \textbf{SupGCL} \\
    \midrule
    \multicolumn{2}{l}{\textbf{BP.} [Subset Accuracy $\uparrow$]}  &&&&&\\
     \hspace{2ex}Breast
                                   & \underline{0.232±0.031} & 0.230±0.029
                                   & 0.167±0.042 & 0.230±0.051
                                   & 0.220±0.052 & \textbf{0.243±0.052} \\
     \hspace{2ex}Lung  & \underline{0.259±0.056} & 0.247±0.038
                                   & 0.115±0.024 & 0.259±0.063
                                   & 0.233±0.027 & \textbf{0.282±0.037} \\
     \hspace{2ex}Colorectal  & 0.231±0.062        & 0.245±0.023
                              & 0.207±0.058        & \underline{0.249±0.050}
                              & 0.146±0.029        & \textbf{0.262±0.030} \\
    \midrule
    \multicolumn{2}{l}{\textbf{CC.} [Subset Accuracy $\uparrow$]}&&&&&\\
    \hspace{2ex}Breast       & \underline{0.264±0.042}        & 0.250±0.034
                              & 0.131±0.050        & 0.236±0.026
                              & 0.249±0.030        & \textbf{0.291±0.026} \\
    \hspace{2ex}Lung         & \underline{0.267±0.041}        & 0.245±0.033
                              & 0.069±0.041        & 0.255±0.043
                              & 0.248±0.037        & \textbf{0.274±0.044} \\
    \hspace{2ex}Colorectal   & \underline{0.278±0.098}        & 0.256±0.042
                              & 0.190±0.062        & 0.265±0.030
                              & 0.133±0.081        & \textbf{0.279±0.052} \\
    \midrule
    \multicolumn{2}{l}{\textbf{Rel.} [Accuracy $\uparrow$]}&&&&&\\
    
    \hspace{2ex}Breast      & 0.573±0.033        & 0.561±0.059
                              & 0.553±0.051        & 0.575±0.035
                              & \underline{0.580±0.055}       & \textbf{0.600±0.057} \\
    \hspace{2ex}Lung        & 0.575±0.053        & 0.568±0.029
                              & 0.555±0.036        & 0.592±0.038
                              & \underline{0.593±0.034}        & \textbf{0.604±0.053} \\
    \hspace{2ex}Colorectal  & 0.563±0.071        & 0.574±0.049
                              & 0.535±0.056        & 0.576±0.071
                              & \underline{0.580±0.042}        & \textbf{0.594±0.039} \\
    \midrule
    \multicolumn{2}{l}{\textbf{Hazard} [C-index $\uparrow$]}&&&&&\\
    \hspace{2ex} Breast     & 0.601±0.035 & 0.625±0.035
                                   & 0.638±0.049 & \underline{0.642±0.064}
                                   & 0.640±0.077 & \textbf{0.650±0.059} \\
    \hspace{2ex} Lung       & 0.611±0.052 & \underline{0.619±0.062}
                                   & 0.616±0.049 & 0.609±0.055
                                   & 0.611±0.060 & \textbf{0.627±0.051} \\
    \hspace{2ex} Colorectal & 0.621±0.070 & 0.631±0.091
                                   & \underline{0.657±0.071} & 0.647±0.059
                                   & 0.616±0.123 & \textbf{0.698±0.085} \\
    \midrule
    \multicolumn{2}{l}{\textbf{Subtype} [Accuracy $\uparrow$]} &&&&&\\
    \hspace{2ex} Breast    & 0.804±0.031 & 0.834±0.028
                                   & 0.719±0.077 & \underline{0.841±0.026}
                                   & 0.829±0.030 & \textbf{0.847±0.036} \\
    \bottomrule
  \end{tabular}%
\end{sc}
\end{small}
\end{center}
\end{table*}
\subsection{Benchmark of Gene Regulatory Networks}
\paragraph{Evaluation Protocol:}
We evaluated the proposed method through the following procedure.
First, we constructed patient-specific GRNs from cancer patient gene expression data and teacher GRNs from gene knockdown experiment data.
Second, we pre-trained the proposed model on patient-specific and teacher GRNs, and evaluated the embedding space using NMI and ARI with respect to disease subtype clustering.
Finally, we evaluated 13 fine-tuning downstream task performance (e.g., classification accuracy and regression performance) using the pre-trained models.
The compared five baselines listed below.

\textbf{w/o-pretrain:}  Directly performs classification or regression for downstream tasks.\\
\textbf{GAE\citep{kipf2016variational}:} Reconstruction based graph representation learning method.\\
\textbf{GraphCL\citep{you2021graph}:} Graph contrastive learning method using positive pairs between graphs.\\
\textbf{GRACE\citep{zhu2020deep}:} Node-level graph contrastive learning method.\\
\textbf{SGRL \citep{he2024exploitation}}: State-of-the-art (SOTA) augmentation-free GCL method.

These five models consist of representative methods for four types of graph representation learning model, together with a SOTA GCL model.
Other comparison results (augmentation-free GCLs, augmentation-adjustment GCLs, and biological foundation models) are presented in Appendix \ref{sec:comparison-other-models}.

\paragraph{Datasets:}
We benchmarked on real-world datasets, constructing patient-specific GRNs from cancer samples in The Cancer Genome Atlas (TCGA) ~\citep{TCGA_PanCancer2013} and teacher GRNs from gene knockdown experiments in cancer cell lines from the Library of Integrated Network-based Cellular Signatures (LINCS) ~\citep{LINCS_L1000_2017}.
The TCGA and the LINCS are both large-scale and widely-used public platforms providing gene expression dataset from cancer patients and cell lines, respectively.

We considered three cancer types available in both resources: breast, lung, and colorectal cancer.
To ensure a consistent node set across TCGA- and LINCS-derived networks, we restricted all GRNs to the intersection of genes measured in TCGA and the LINCS L1000 landmark set, resulting in 975 genes.
The same 975-gene set was used for SupGCL and all baseline methods to avoid confounding the comparison by differences in the available input genes.
This restriction also keeps GRN estimation computationally tractable while retaining a LINCS-defined representative landmark-gene space.
TCGA included N=1092 (breast), 1011 (lung), and 288 (colorectal) patient samples.
For LINCS, the total numbers of knockdown experiments were 8793 (breast), 11843 (lung), and 15926 (colorectal); among the 975 common genes, the numbers of unique knockdown targets were 768, 948, and 948, respectively.

Beyond expression profiles, we used additional annotations for downstream evaluation.
TCGA provides survival status and disease subtype labels for each patient sample.
At the gene level, we assigned multi-label Gene Ontology (GO) annotations—Biological Process (BP; metabolism, signaling, cell organization; 3 classes) and Cellular Component (CC; nucleus, mitochondria, ER, membrane; 4 classes)—as well as binary cancer-relevance labels based on the OncoKB cancer-related gene list (Rel).
These labels were used for downstream tasks. (see Appendices~\ref{App:Dataset} and \ref{App:experimental_setting} in detail.)

\paragraph{Pre-processing:}
To estimate the network structure of each GRN from gene expression data, we used a Bayesian network structure learning algorithm based on B-spline regression~\citep{imoto_estimation_2002}.
For each experiment, gene expression values were used as node features, while edge features were defined as the linear sum of estimated regression coefficients and the parent node's gene expression ~\citep{biom10020306}.
This structure estimation was performed per cancer type per dataset using the above algorithm.
Further details are provided in Appendix~\ref{App:MakeGRN}.

\paragraph{Model Setting:}
For all methods including the SupGCL and comparative models, we used the same 5-layer Graph Transformer architecture~\citep{ijcai2021p214}.
Hyperparameters for pre-training were tuned with Optuna~\citep{akiba2019optuna}, and the model was optimized using the AdamW optimizer~\citep{loshchilov2018decoupled}.
All training runs were performed on a single NVIDIA H100 SXM5 GPU.
Additional experimental details are provided in Appendix~\ref{App:experimental_setting}, and empirical runtime and computational complexity analyses are reported in Appendix~\ref{app:discuss-computation-all}.

\begin{table*}[htbp]
  \caption{Ablation study of SupGCL on the augmentation temperature $\tau_{\rm a}$ for breast cancer.}
  \label{tab:tau_a-analysis}
\begin{center}
\begin{small}
\begin{sc}
  \begin{tabular}{lccccc}
    \toprule
    \textbf{$\tau_{\rm a}$} & 
    \textbf{BP. $\uparrow$} & 
    \textbf{CC. $\uparrow$} & 
    \textbf{Rel. $\uparrow$} &
    \textbf{Hazard $\uparrow$} & 
    \textbf{Subtype $\uparrow$}  
    \\
    \midrule
    0.10 & $\mathbf{0.262 \pm 0.035}$ & $0.289 \pm 0.049$ & $0.586 \pm 0.047$ & $\mathbf{0.670 \pm 0.078}$ & $0.837 \pm 0.029$ \\
    0.25 (default) & $0.243 \pm 0.052$ & $\mathbf{0.291 \pm 0.026}$ & $0.600 \pm 0.057$ & $0.650 \pm 0.059$ & $\mathbf{0.847 \pm 0.036}$ \\
    0.50 & $0.261 \pm 0.034$ & $0.284 \pm 0.061$ & $\mathbf{0.606 \pm 0.039}$ & $0.648 \pm 0.053$ & $0.846 \pm 0.032$ \\
    1.00 & $0.244 \pm 0.042$ & $0.280 \pm 0.032$ & $0.596 \pm 0.048$ & $0.640 \pm 0.056$ & $0.835 \pm 0.031$ \\
    2.00 & $0.237 \pm 0.024$ & $0.277 \pm 0.058$ & $0.590 \pm 0.044$ & $0.656 \pm 0.060$ & $0.842 \pm 0.031$ \\
    \multicolumn{1}{l}{\small $\infty$ (w/o supervised GCL)} & $0.233 \pm 0.055$ & $0.259 \pm 0.054$ & $0.573 \pm 0.070$ & $0.633 \pm 0.051$ & $0.832 \pm 0.034$ \\
    \midrule
    \multicolumn{1}{l}{\small Reference: GRACE} & $0.230 \pm 0.051$ & $0.236 \pm 0.026$ & $0.575 \pm 0.035$ & $0.642 \pm 0.064$ & $0.841 \pm 0.026$ \\
    \bottomrule
  \end{tabular}%
\end{sc}
\end{small}
\end{center}
\end{table*}

\subsection{Result 1: Embedding Space Analysis}
\label{sec:result_experiment3}

We visualized the graph-level embedding spaces from the breast cancer type (Figure~\ref{fig:GCL-latent-space}).
The embeddings, colored by disease subtype, show that GAE and GraphCL fail to separate subtypes, while GRACE and SGRL exhibit moderate separation. 
SupGCL significantly outperforming  other methods on both the Normalized Mutual Information (NMI) and the Adjusted Rand Index (ARI), affirming that its embeddings capture subtype-specific structure.
This advantage is statistically supported by quantitative metrics across 5 pre-training runs.

Furthermore, we evaluated the biological validity of the node-level embeddings performing enrichment analysis (Appendix ~\ref{app:enrichment}).
These analyses demonstrated that SupGCL captured the biological context more effectively than existing methods.

\subsection{Result 2: Evaluation by Downstream Task}
\label{sec:result_experiment1}

In this experiment, pre-training of the proposed and conventional methods was conducted using patient-specific GRNs from TCGA and teacher GRNs from LINCS.
Subsequently, fine-tuning was performed on the pre-trained models using patient GRNs, and downstream task performance was evaluated (see Figure~\ref{fig:Downstream_Task}).
During fine-tuning, two additional fully connected layers were appended to the node-level representations and graph-level representations (obtained via mean pooling), and downstream tasks were performed.

Graph-level tasks (hazard prediction, subtype classification) used survival and subtype labels from TCGA.
Note that subtype classification was conducted only for breast cancer.
Node-level tasks (BP., CC., and Rel.) used gene-level annotations from Gene Ontology and OncoKB.
Each downstream task was evaluated using 10-fold cross-validation.
For cancer gene classification, due to label imbalance, we performed undersampling over 10 random seeds.
The results are reported as mean $\pm$ standard deviation.

Table~\ref{tab:benchmark-node-task} reports the results of node-level and graph-level downstream tasks, with the best and second-best performances indicated in bold and underlined, respectively.
SupGCL achieves the highest performance across all datasets and tasks compared to other pre-training methods.
This performance advantage extends to a broader range of methods, including augmentation-free and augmentation-adjustment GCL methods and biological foundation models, as detailed in Appendix~\ref{sec:comparison-other-models}.

Note that for the node-level task, existing graph representation learning methods exhibit little difference compared to the results without pretraining.
This indicates that graph representation learning methods without a teacher GRN fail to learn meaningful representations that capture gene-level characteristics relevant to this downstream task.
In contrast, our proposed method leverages a teacher GRN, enabling the learning of gene representations associated with biological perturbations, which leads to improved performance.

In addition, we evaluated the generalization ability and robustness of SupGCL (see Appendices~\ref{App:Cross_domain} and \ref{app:robastness}, respectively).
SupGCL demonstrated that pre-training remained effective even when the cancer types of the teacher and patient GRNs differed. 
Furthermore, our model exhibited strong robustness under limited availability of patient or teacher GRNs, as well as in the presence of noise in the estimated GRNs.

\subsection{Result 3: Effectiveness of Supervised Learning}
\label{sec:result_experiment2}

In this section, to verify the effectiveness of the reference model $p_{\phi}$, we conducted additional experiments by varying the augmentation-level temperature parameter $\tau_{\rm a}$, which controls its degree of influence.
This experiment also serves to empirically validate the theoretical relationship established in Corollary~\ref{coro:connection}, which states that, as $\tau_{\rm a}$ increases, the SupGCL objective asymptotically approaches the objective of unsupervised node-level graph contrastive learning (GCL).
The experimental results on the breast cancer dataset, summarized in Table~\ref{tab:tau_a-analysis}, demonstrate the performance of SupGCL under different values of $\tau_{\rm a}$.
As $\tau_{\rm a}$ becomes larger—corresponding to a weaker influence of the knockdown-derived supervised information—the performance gradually deteriorates and converges to that of the model without supervised information ($\tau_{\rm a} = \infty$).
This observation is consistent with the theoretical result in Corollary~\ref{coro:connection}.
Furthermore, when $\tau_{\rm a} = \infty$, the performance converges to a level comparable to that of GRACE, an unsupervised node-level GCL choising different probablistic distribution $q_\phi(j\mid i)$.
These results indicate that the knockdown-derived supervised information contributes substantially to improving contrastive learning performance, while its influence is effectively controlled by $\tau_{\rm a}$.
The same tendency is consistently observed for the Lung and Colorectal datasets as well (see Appendix~\ref{App:addition_ablation}).

\section{Conclusion}

In this study, we proposed SupGCL, a supervised graph contrastive learning method for Gene Regulatory Networks (GRNs) that incorporates biological perturbations from real-world gene knockdown experiments as supervision for graph augmentation. This proposed method formulates supervised graph augmentation within a unified probabilistic framework and theoretically derived that existing node-level GCL methods can be derived as special cases of our formulation.

In our empirical evaluation, an analysis of the learned embedding space first confirmed that SupGCL significantly improves clustering metrics (NMI/ARI) for cancer subtypes, capturing latent structures with high biological plausibility.
Furthermore, evaluating the pre-trained model on downstream tasks with fine-tuning, SupGCL consistently outperformed state-of-the-art baselines across 13 tasks, including gene function classification and patient survival analysis.

A limitation of this study is that the empirical validation of the proposed method does not cover larger-scale graphs, such as those with tens or hundreds of thousands of nodes. 
This is because the dimensionality of the target graphs is constrained by biological factors, such as the accuracy of gene regulatory network inference and the biological importance of the genes themselves.
Consequently, we have not investigated how the performance of SupGCL is affected as the size of the target graph increases, particularly with respect to the extent to which supervisory information contributes to performance improvements in larger-scale graphs.

Another limitation of our study is the challenge of cross-domain generalization caused by distributional shifts across different cancer types.
However, preliminary results presented in the Appendix~\ref{App:Cross_domain} suggest that pan-cancer learning—integrating multiple cancer types—is effective, enriching representations without compromising performance.
Future work will focus on expanding the range of cancer types and evaluating SupGCL on larger-scale GRNs, paving the way for the development of large-scale, general-purpose biological foundation models.

\section*{Acknowledgments}
This work was supported by JST Moonshot R\&D (JPMJMS2021, JPMJMS2024), JST CREST(No. JPMJCR22D3),  JST Research and Development Program for Next-generation Edge AI Semiconductors (JPMJES2511), JSPS KAKENHI (JP25K00148, JP25H02626, JP26K14994), a project (JPNP14004) commissioned by the New Energy and Industrial Technology Development Organization (NEDO), and RIKEN TRIP initiative (AGIS).
This work used computational resources of the supercomputer Fugaku provided by RIKEN through the HPCI System Research Project (Project IDs: hp150272, ra000018).
Taisei Tosaki received financial support from RIKEN Jr. Research-associated Programs.

\section*{Impact Statement}
This work proposes a pretraining framework based on Supervised Graph Contrastive Learning (SupGCL) that learns representations of patient-derived gene regulatory networks (GRNs) using experimentally measured perturbations from gene knockdown experiments as supervision. By aligning synthetic graph perturbations with biologically meaningful perturbations, the proposed approach can better elucidate disease subtype structure in patient GRNs for breast, lung, and colorectal cancers, and is expected to improve performance on multiple downstream tasks such as gene-level functional inference and patient-level prediction. This may facilitate hypothesis generation in cancer research, deepen mechanistic understanding, and improve prioritization of candidate genes and pathways, thereby supporting exploration from basic research to translational studies. All data used in this study are obtained from public resources and do not involve additional collection of personal information. However, the learned representations are statistical in nature and do not directly establish causal effects or guarantee clinical utility.

\newpage
\onecolumn
\appendix

\makeatletter
\let\addcontentsline\icml@origaddcontentsline
\makeatother
\setcounter{tocdepth}{2}
\renewcommand{\contentsname}{Appendix Contents}
\tableofcontents
\bigskip

\newpage

\section{Detail Proof of Theorem~\ref{Thm:main_theorem}}\label{App:proof_of_theorem}

Since the loss function of contrastive learning~(\ref{Eq:Loss_I-CON}) is formulated as the KL divergence between the joint distributions $p(x,y)$ and $q(x,y)$, under the assumption that $p(x)$ and $q(x)$ follow uniform distributions.
Then, the following derived for $\mathrm{Loss}_{\rm I-Con}$:
\begin{align*}
\mathrm{Loss}_{\rm SupGCL} &= \frac{1}{|V||\mathcal{K}|} \sum_{i\in V,a\in  \mathcal{K}}D_{\rm KL}(p_\phi(j,b|i,a)|q_\phi(j,b|i,a)).\\
&=D_\mathrm{KL}(p_\phi(i,j,a,b)|q_\phi(i,j,a,b))\\
&= \E_{(a,b)\sim p_\phi (a,b)}\Big[ D_\mathrm{KL} (p_\phi(i,j|a,b)|q_\phi(i,j|a,b)) \Big]+  D_\mathrm{KL} (p_\phi(a,b)|q_\phi(a,b))\\
&= \E_{(a,b)\sim p_\phi (a,b)}\Big[ D_\mathrm{KL} (p(i,j)|q_\phi(i,j|a,b)) \Big] +  D_\mathrm{KL} (p_\phi(a,b)|q_\phi(a,b))\\
&=\E_{a,b\sim p_\phi (b|a)\mathrm{U}_\mathcal{K}(a) } \big[\mathrm{Loss}_{\rm Node}^{a,b}\big] + \mathrm{Loss}_{\rm Aug}
\end{align*}
The derivation from the second to the third line utilizes the basic decomposition of KL divergence: $D_{\rm KL}(p(x,y)|q(x,y)) =  \E_{x\sim p(x)}[D_{\rm KL}(p(y|x)|q(y|x))] + D_{\rm KL}(p(x)|q(x))$.

\section{Algorithm}\label{App:Algorithm}
The learning algorithm of this research is presented in Algorithm~\ref{Alg:SupGCL}.
We train the Graph Neural Network (GNN) $f_\phi$ using the target GRN dataset for all patients, ~$\mathcal{G}_{\rm all} = \{ \mathcal{G}^{(i)}\}_{i=1}^N$, and the teacher GRN dataset, $\mathcal{H}_{\rm all} = \{ \mathcal{H}_a^{(i)}\}_{i \in \mathcal{I}_a, a \in \mathcal{K}}$.
Here, $\mathcal{I}_a$ is the set of indices for teacher GRNs, $\{\mathcal{H}_a^{(i)}\}_{i \in \mathcal{I}_a}$, which correspond to data augmentations for the $a$-th node.

Our algorithm follows a standard training loop, consisting of the calculation of $\text{Loss}_{\rm SupGCL}$ and the optimization of $f_\phi$ using AdamW.
Furthermore, to reduce computational costs, we employ sampling-based estimation of the normalization constant for the calculation of softmax functions $p_\phi(b|a)$ and $q_\phi(b|a)$, and use importance sampling for the calculation of $\text{Loss}_{\rm node}^{a,b}$ and $\text{Loss}_{\rm Aug}$.
For computational performance reasons, we adopted an Augmentation sampling size of 8.

We discuss the computational order and sampling of SupGCL in Appendix \ref{app:computational-cost-SupGCL} and Appendix \ref{app:sampling-SupGCL}.

\begin{algorithm}[h]
\caption{Training loop of SupGCL:}
\label{Alg:SupGCL} 
  \begin{algorithmic}
    \STATE {\bfseries Input:} 
    Graph Neural Network $f_\phi$; 
    patient GRNs $\mathcal{G}_{\rm all} = \{\mathcal{G}^{(i)}\}_{i=1}^N$; 
    teacher GRNs $\mathcal{H}_{\rm all} = \{\mathcal{H}_a^{(i)}\}_{i \in \mathcal{I}_a, a \in \mathcal{K}}$
    
    \FOR{each $\mathcal{G} \subset \mathcal{G}_{\rm all}$}
      \STATE Sample $a, b \sim \mathrm{U}_{\mathcal{K}}$
      \STATE Obtain augmented graphs $\mathcal{G}_a, \mathcal{G}_b$ from $\mathcal{G}$
      \STATE Select teacher graphs $\mathcal{H}_a, \mathcal{H}_b$ from $\mathcal{H}_{\rm all}$
      
      \STATE $\mZ^a, \mZ^b \gets f_\phi(\mathcal{G}_a), f_\phi(\mathcal{G}_b)$  {\footnotesize\itshape // Embed target GRNs}
      
      \STATE $\mY^a, \mY^b \gets f_\phi(\mathcal{H}_a), f_\phi(\mathcal{H}_b)$
      {\footnotesize\itshape // Embed teacher GRNs}
      
      \STATE $q_\phi(b|a) \gets 
      \text{softmax}\!\left(
      \Big[\frac{\langle \mZ^a, \mZ^* \rangle_F}{\tau_{\rm a}}\Big]
      \right)[b]$
      {\footnotesize\itshape // Target augmentation distribution}
      
      \STATE $p_\phi(b|a) \gets 
      \text{softmax}\!\left(
      \Big[\frac{\langle \mY^a, \mY^* \rangle_F}{\tau_{\rm a}}\Big]
      \right)[b]$
      {\footnotesize\itshape // Teacher augmentation distribution}
      
      \STATE $\text{Loss}_{\rm Aug} \gets 
      |\mathcal{K}|\, p_\phi(b|a)
      \big(\log p_\phi(b|a) - \log q_\phi(b|a)\big)$
      {\footnotesize\itshape // Importance-sampled augmentation loss}
      
      \STATE $q_\phi(j|i,a,b) \gets 
      \text{softmax}\!\left(
      \Big[\frac{\langle \vz^a_i, \vz^b_* \rangle}{\tau_n}\Big]
      \right)[j]$
      {\footnotesize\itshape // Node-level target distribution}
      
      \STATE $\text{Loss}_{\rm Node}^{a,b} \gets 
      \frac{1}{|\mathcal{V}|}
      \sum_{i \in \mathcal{V}}
      \log q_\phi(i|i,a,b)$
      
      \STATE $\text{Loss}_{\rm SupGCL} \gets 
      |\mathcal{K}|\, p_\phi(b|a)\,
      \text{Loss}_{\rm Node}^{a,b}
      + \text{Loss}_{\rm Aug}$
      {\footnotesize\itshape // Importance-sampled SupGCL loss}
      
      \STATE Update $f_\phi$ using AdamW optimizer
    \ENDFOR
    
    \STATE {\bfseries Output:} Trained Graph Neural Network $f_\phi$
  \end{algorithmic}
\end{algorithm}

\section{Details of Experimental Datasets}\label{App:Dataset}
This section provides details on data acquisition and preprocessing procedures.

\subsection{Details on the TCGA Dataset}
In this study, for gene expression data, we used normalized count data from the TCGA TARGET GTEx study provided by UCSC Xena~\citeapp{ucsc_xena_tcga_target_gtex} for the TCGA dataset. 
For the LINCS dataset, we used normalized gene expression data from the LINCS L1000 GEO dataset (GSE92742)~\citeapp{GSE92742}.

The TCGA platform contains four datasets: the gene expression dataset "dataset: gene expression RNAseq – RSEM norm\_count," the dataset for cancer type attributes of individual patients "dataset: phenotype – TCGA TARGET GTEx selected phenotypes," the patient prognosis dataset "dataset: phenotype – TCGA survival data," and the patient phenotype data "dataset: phenotype - Phenotypes."

In this study, based on the dataset of patient cancer type attributes, we extracted patient IDs corresponding to the TCGA cohorts listed in Table~\ref{tab:tcga-cohort}, and subsequently obtained the associated gene expression data.
Notably, the study population was limited to patients whose target cancer was a primary tumor.
Additionally, overall survival time (OS.time) and survival status (OS: alive = 0 / deceased = 1) were retrieved from the patient prognosis dataset.
For breast cancer specifically, disease subtype labels based on the PAM50 classification were acquired from the patient phenotype dataset.
PAM50 classification using RNA expression data was feasible for all samples, assigning all 1,092 breast cancer patients to one of five subtypes: Luminal A (N = 438), Luminal B (N = 311), Basal (N = 196), Her2 (N = 111), and Normal (N = 36).
Patient samples with missing values were excluded during the data extraction process.
The final sample sizes and mortality rates for each cancer type after processing are summarized in Table~\ref{tab:tcga-cohort}.

\begin{table}[h]
  \centering 
  \caption{Sample extraction conditions and survival data statistics for each cancer type}
  \label{tab:tcga-cohort}
  \begin{tabularx}{\columnwidth}{l >{\raggedright\arraybackslash}X r r r}
    \toprule
    Cancer Type & TCGA Cohort Name & \begin{tabular}{@{}c@{}}Number of samples with \\ valid survival time data\end{tabular} & \begin{tabular}{@{}c@{}}Number \\ of deaths\end{tabular} & \begin{tabular}{@{}c@{}}Mortality \\ rate (\%)\end{tabular} \\
    \midrule
    Breast cancer & Breast Invasive Carcinoma & 1,090 & 151 & 13.9 \\
    Lung cancer & Lung Adenocarcinoma \newline + Lung Squamous Cell Carcinoma
        &   996 & 394 & 39.6 \\
    Colon cancer & Colon Adenocarcinoma     &   286 &  69 & 24.1 \\
    \bottomrule
  \end{tabularx}

\end{table}

\subsection{Details on the LINCS Dataset}
\subsubsection{Preparing LINCS Dataset}
In this study, we used the Level 3 normalized gene expression data (filename on LINCS datasets: "GSE92742\_Broad\_LINCS\_Level3\_INF\_mlr12k\_n1319138x12328.gctx.gz") provided from the GEO dataset (GSE92742) of the LINCS L1000 project.
Additionally, by referring to the concurrently provided experimental metadata ("GSE92742\_Broad\_LINCS\_inst\_info.txt.gz" indicating cell lines and treatment conditions, and "GSE92742\_Broad\_LINCS\_pert\_info.txt.gz" indicating drug and gene knockdown information), we extracted only the shRNA-mediated knockdown experiment groups.
The cell lines and treatment durations were limited to samples from: MCF7 breast cancer cell line treated for 96 hours, A549 lung cancer cell line treated for 96 hours, and HT29 colon cancer cell line treated for 96 hours.
The expression data was limited to 978 landmark genes.

\subsubsection{Discussion for the Quality of LINCS Dataset}
It is well-documented that LINCS datasets contain significant noise, particularly off-target effects~\citep{subramanian2017next}. While aggregation methods such as Consensus Gene Signatures (CGS) have been proposed to isolate consistent on-target effects~\citep{subramanian2017next}, they inherently reduce the effective sample size—a trade-off we sought to avoid in this study. Despite the use of instance-level data without such aggregation, SupGCL demonstrated superior predictive performance on downstream tasks, especially graph-level classification, compared to existing methods. Furthermore, our enrichment analysis(Appendix ~\ref{app:enrichment}) reveals that SupGCL successfully acquired latent representations with high biological interpretability. These findings suggest that SupGCL is capable of robust pretraining even when learning from noisy teacher data.

\subsection{Extraction of Gene Label Data for BP/CC Tasks using Gene Ontology}

In this study, for Biological Process (BP) classification and Cellular Component (CC) classification in downstream tasks, we performed multi-label annotation for each of the 975 genes constituting the GRN, based on terms obtained from the GO database.
For BP labels, three categories whose importance is known were used: 'Metabolism', 'Signal Transduction', and 'Cellular Organization' ~\citeapp{paolacci2009identification}.
Similarly, for CC labels, four categories were used: 'Nucleus', 'Mitochondrion', 'Endoplasmic Reticulum', and 'Plasma Membrane' ~\citeapp{costa2010machine}.
These categories were selected to cover the major functions of the GRN.

To extract these multi-labels, the following steps were performed:
\begin{enumerate}
    \item Batch retrieve terms associated with each gene using the MyGene.info API~\citeapp{xin2015mygene} and the OBO file (available at https://geneontology.org/docs/download-ontology/).
    \item Aggregate multi-labels for the target genes using the GOATOOLS library ~\citeapp{ashburner2000gene}.
\end{enumerate}

Genes that could not be labeled into any of the BP or CC categories were excluded from the downstream tasks in this study.
The number and percentage of genes included in each category are shown in Table~\ref{tab:C2}.

\begin{table}[h]
  \centering
  \caption{Gene label distribution for high-level BP/CC categories (n = 975)}
  \label{tab:C2}
  \begin{tabular}{lrr}
    \toprule
    Category & Number of genes & Percentage of category (\%) \\ %
    \midrule
    BP: Metabolism                 & 533   & 54.67 \\
    BP: Signal Transduction        & 261   & 26.77  \\
    BP: Cellular Organization      & 369   & 37.85  \\
    BP: Not Applicable             & 204   & 20.92  \\
    CC: Nucleus                    & 388   & 39.80  \\
    CC: Mitochondrion              & 151   & 12.49  \\ 
    CC: Endoplasmic Reticulum      & 148   & 15.18  \\
    CC: Plasma Membrane            & 231   & 23.69  \\
    CC: Not Applicable             & 257   & 26.36  \\
    \bottomrule
  \end{tabular}
\end{table}

\subsection{Annotation by OncoKB}
For cancer-related gene classification in downstream tasks, cancer-related genes were obtained from the OncoKB™ Cancer Gene List provided by OncoKB (Oncology Knowledge Base)~\citep{Chakravarty2017}.
Using the list of 1188 cancer-related genes provided as of April 30, 2025, positive labels were assigned to the 975 genes used in this study.
As a result, 106 genes were labeled as cancer-related genes.

\section{Estimating Gene Regulatory Networks}\label{App:MakeGRN}
\subsection{Taxonomy of Gene Regulatory Network Estimation}

Estimating accurate gene regulatory networks (GRNs) is crucial for elucidating cellular processes and disease mechanisms.
Methods for computationally estimating GRNs from gene expression data can be categorized into correlation-based \citeapp{langfelder_wgcna_2008}, mutual information-based ~\citeapp{margolin_aracne_2006}, probabilistic graphical models-based such as Bayesian networks ~\citeapp{friedman_using_2000}, and deep learning-based approaches ~\citeapp{shu_modeling_2021}.
The number of experimentally validated GRNs is limited.
Therefore, GRN estimation methods need the ability to account for measurement errors and appropriately capture non-linear and multimodal interactions between genes while mitigating overfitting.
Thus, we adopted a method combining Bayesian network estimation using multiple sampling and non-parametric regression~\citep{imoto_estimation_2002} to identify patient- or sample-specific GRNs~\citep{nakazawa2021novel}.

\subsection{Details of Estimating GRNs}
To construct patient-specific GRNs, we estimate Bayesian Networks using B-spline regression.
First, we estimate the conditional probability density functions between genes using the entire gene expression dataset. Then, using these learned parameters, we construct patient- or sample-specific GRNs.

Let $x_1, x_2, ..., x_n$ be random variables for $n$ nodes, and let $\mathrm{pa}(i)$ be the set of parent nodes of the $i$-th node.
In this case, a Bayesian network using B-spline curves~\citep{imoto_estimation_2002} is defined as a probabilistic model decomposed into conditional distributions with parent nodes:
\begin{align*}
p(x_1,\ldots, x_n ) &= \prod_{i=1}^n p(x_i|x_{\mathrm{pa}(i)})\\
&=\prod_{i=1}^n \mathcal{N}\Bigg( x_i\Bigg|\sum_{j\in \mathrm{pa}(i)} m_{ij}(x_j), \sigma^2 \Bigg).
\end{align*}
Here, $\mathcal{N}$ is a Gaussian distribution, and $m_{ij}$ is a B-spline curve defined by B-spline basis functions $b_s:\R \rightarrow \R$ as
\begin{align}
    m_{ij}(x_j) = \sum_{s=1}^{M} w_{i,j}^{s} b_{s} (x_j)
    \label{eq: b-spline}
\end{align}
The Bayesian network is estimated by learning the relationships with parent nodes based on the model described above and the parameters of the conditional distributions.
In this study, the score function used for searching the structure of the Bayesian Network can be analytically derived using Laplace approximation~\citep{imoto_estimation_2002}.
Since the problem of finding a Directed Acyclic Graph (DAG) that maximizes this score is NP-hard, we performed structure search using a heuristic structure estimation algorithm, the greedy hill-climbing (HC) algorithm ~\citeapp{imoto2003bayesian}.
Furthermore, to ensure the reliability of the estimation results by the HC algorithm, we performed multiple sampling runs.

We extracted edges that appeared more frequently than a predefined threshold relative to the number of sampling runs in the estimated networks.
Finally, for each edge in the obtained network structure, the conditional probabilities were relearned using all input data.

Using the network and conditional probabilities learned here, we derive patient-specific GRNs~\citep{biom10020306}.
The node set and edge set of the graph for a patient-specific GRN are defined by the network learned with gene expression levels as random variables $x_1,\ldots, x_n$.
Furthermore, the feature of the $i$-th node $X_i^\mathcal{V}$ is the gene expression level of each sample, and the feature of an edge from $i$ to $j$ (designated as the $k$-th edge) is the realization of the learned B-spline curve $X_k^\mathcal{E} = m_{ij}(x_j)$.
Patient-specific networks using such edge features have led to the discovery of subtypes that correlate more strongly with prognosis than existing subtypes~\citep{nakazawa2021novel}.
Additionally, using differences in edge features between patients to extract patient-specific networks has been reported to contribute to the identification of novel diagnostic and therapeutic marker candidates in diseases such as idiopathic pulmonary fibrosis ~\citeapp{tomoto_idiopathic_2024} and chronic nonbacterial osteomyelitis ~\citeapp{yahara_whole_2022}

For GRN estimation, we used INGOR (version 0.19.0), a software that estimates Bayesian Networks based on B-spline regression, and executed it on the supercomputer Fugaku.
INGOR is based on SiGN-BN ~\citeapp{tamada_sign_2011}, a software that similarly estimates Bayesian Networks using B-spline regression, and achieves faster estimation by optimizing parallel computation on Fugaku.
In all GRN estimations, the number of sampling runs was set to 1000, and the threshold for adopting edges was set to 0.05.
Since network estimation with a large amount of sample data can lead to Out of Memory errors, the upper limit of gene expression data used for network estimation was set to 3000 (especially for LINCS data).
All other hyperparameters related to network estimation used the default settings of SiGN-BN.
The number of Fugaku nodes and the required execution times are shown in Table \ref{table: network1}.
In addition, the number of parallel threads was set to 4 for estimations using TCGA data and 2 for LINCS data.

\begin{table}[ht]
\centering
\caption{Estimated network times in supercomputer Fugaku}
\label{table: network1}
\begin{tabular}{@{}lcccccc@{}}
\toprule
& \multicolumn{2}{c}{\textbf{Breast cancer}} & \multicolumn{2}{c}{\textbf{Lung cancer}} & \multicolumn{2}{c}{\textbf{Colorectal cancer}} \\
\cmidrule{2-7}
& TCGA & LINCS & TCGA & LINCS & TCGA & LINCS \\
\midrule
\textbf{Number of Fugaku Nodes} & 288 & 288 & 288 & 528 & 288 & 528 \\
\textbf{Estimated Time} [hh:mm:ss] & 00:54:49 & 08:35:12 & 00:38:24 & 13:28:07 & 00:05:40 & 21:01:38 \\
\bottomrule
\end{tabular}
\end{table}

\subsection{Results of Estimating GRNs}
The statistics of the estimated networks are shown in Table \ref{table: network2}.
It should be noted that network estimation methods using sampling can extract highly reliable edges, but they may occasionally extract structures containing cyclic edges.
However, all networks estimated in this study maintained a DAG structure.

\begin{table}[ht]
\centering
\caption{Estimated network statistics}
\label{table: network2}
\begin{tabular}{@{}lcccccc@{}}
\toprule
& \multicolumn{2}{c}{\textbf{Breast cancer}} & \multicolumn{2}{c}{\textbf{Lung cancer}} & \multicolumn{2}{c}{\textbf{Colorectal cancer}} \\
\cmidrule{2-7}
& TCGA & LINCS & TCGA & LINCS & TCGA & LINCS \\
\midrule
\textbf{Number of Nodes} & 975 & 975 & 975 & 975 & 975 & 975 \\
\textbf{Number of Edges} & 13170 & 10498 & 13322 & 13968 & 13686 & 12541 \\
\textbf{Average Degree} & 13.5077 & 10.7671 & 13.6636 & 14.3262 & 14.0369 & 12.8626 \\
\bottomrule
\end{tabular}
\end{table}

\section{Experimental Setting}\label{App:experimental_setting}

\subsection{Validity of GNN Encoder}

In this study, we consistently employed the Graph Transformer as the GNN encoder architecture. 
The choice of encoder is a critical factor that directly affects model performance, and its validity must be carefully assessed for our method. 
Our decision to adopt the Graph Transformer was based on prior work on GRNs~\citep{liu2023muse} (Appendix E.1), where it achieved the best performance among various architectures. 
That study evaluated multiple GNN encoders—including GCN, GAT, TransformConv, SURGL, GPS, and GRACE—in the context of learning gene representations by integrating heterogeneous modalities such as scRNA-seq, scATAC-seq, and spatial transcriptomics. 
Their results showed that the Graph Transformer consistently outperformed other models, whereas GCN- and GAT-based models failed to sufficiently capture gene functional similarity across datasets. 
It should be noted, however, that this prior work focused on heterogeneous modalities, and is therefore not directly comparable to our setting, which relies exclusively on GRNs.

Motivated by these findings, we chose the Graph Transformer over GCN or GAT in our framework. 
Both GAT and Graph Transformer share the ability to incorporate real-valued edge features into node updates. 
However, in our experiments (using GRACE as an example), replacing the Graph Transformer with GAT sometimes led to degraded performance. 
The results are reported in Table~\ref{tab:gnn-encoder}.

\begin{table}[H]
  \centering
  \small
  \caption{Comparison of downstream task performance with different encoders in GRACE (Breast Cancer GRN).}
  \label{tab:gnn-encoder}
  \begin{tabular}{lcc}
    \toprule
    \textbf{Model Setting} & \textbf{Hazard (c-index)} & \textbf{BP (Subset Accuracy)} \\
    \midrule
    GRACE (Graph Transformer, in paper) & $0.642 \pm 0.064$ & $0.230 \pm 0.051$ \\
    GRACE (GAT) & $0.632 \pm 0.038$ & $0.241 \pm 0.040$ \\
    \bottomrule
  \end{tabular}
\end{table}

\subsection{Hyperparameters Settings}

In this study, we used Optuna~\citep{akiba2019optuna}, a Bayesian optimization tool, for hyperparameter search in pre-training.
The search space for all models included the AdamW learning rate $\mathrm{lr}\in[10^{-5},10^{-3}]$, batch size $\mathrm{batch\_size}\in\{4,\,8\}$, and model-specific hyperparameters.
Model-specific hyperparameters were the temperature parameter $\tau\in\{0.25,\,0.5,\,0.75,\,1.0\}$ for GraphCL, GRACE, and SupGCL, and the global-hop parameter $k\in\{1,2,3\}$ for SGRL.
The graph embedding dimension was unified to 64 for all models.
See Appendix~\ref{App:additional_result} for a discussion on embedding dimension.
For training the pre-trained models, the data was split into training and validation sets at an $8:2$ ratio, and the validation loss was used as the metric for various decisions.
The optimal hyperparameters determined by actual hyperparameter tuning are shown in Table~\ref{tab:hyperparams}.

\begin{table}[ht]
\centering
\small
\caption{Hyperparameter settings for each cancer type}\label{tab:hyperparams}
\medskip 
\begin{tabular}{llrrrrl}
\toprule
Cancer Type & Model   & learning rate & batch size & temperature & global hop \\
\midrule
\multirow{5}{*}{Breast cancer} 
  & GAE     & $5.74\times10^{-4}$ & 4 & ---  & --- \\
  & GRACE   & $9.71\times10^{-4}$ & 4 & 0.25 & --- \\
  & GraphCL & $1.23\times10^{-4}$ & 4 & 0.25 & --- \\
  & SGRL    & $2.39\times10^{-4}$ & 4 & ---  & 3 \\
  & SupGCL  & $2.37\times10^{-4}$ & 4 & 0.25 & --- \\
\midrule
\multirow{5}{*}{Lung cancer} 
  & GAE     & $2.16\times10^{-4}$ & 4 & ---  & --- \\
  & GRACE   & $4.44\times10^{-5}$ & 8 & 0.25 & --- \\
  & GraphCL & $6.24\times10^{-5}$ & 4 & 0.25 & --- \\
  & SGRL    & $8.18\times10^{-5}$ & 8 & ---  & 2 \\
  & SupGCL  & $1.89\times10^{-4}$ & 4 & 0.25 & --- \\
\midrule
\multirow{5}{*}{Colorectal cancer} %
  & GAE     & $2.26\times10^{-4}$ & 4 & ---  & --- \\
  & GRACE   & $4.03\times10^{-5}$ & 8 & 0.25 & --- \\
  & GraphCL & $8.83\times10^{-5}$ & 4 & 0.25 & --- \\
  & SGRL    & $7.10\times10^{-4}$ & 4 & ---  & 3 \\
  & SupGCL  & $3.32\times10^{-4}$ & 4 & 0.25 & --- \\
\bottomrule
\end{tabular}
\end{table}

\subsection{Fine-tuning Settings}
This section described the fine-tuning setting of downstream-task.
The overview is in Table~\ref{tab:task-overview}.

\begin{table}
  \centering
  \caption{Description of the downstream tasks}
  \label{tab:task-overview}
  \resizebox{\textwidth}{!}{%
  \begin{tabular}{lcc}
    \toprule
    \textbf{Task} & \textbf{Task Type} & \textbf{Metrics} \\
    \midrule
    \textbf{Node-Level Task} && \\
    \hspace{2ex}[\textbf{BP.}]: Biological process classification & Multi-label binary classification (with 3 labels) & Subset accuracy \\
    \hspace{2ex}[\textbf{CC.}]: Cellular component classification & Multi-label binary classification (with 4 labels) & Subset accuracy \\
    \hspace{2ex}[\textbf{Rel.}]: Cancer relation & Classification (binary) & Accuracy \\
    \midrule
    \textbf{Graph-Level Task} && \\    
    \hspace{2ex}[\textbf{Hazard}]: Hazard prediction & Survival analysis (1-dim risk score) & C-index \\
    \hspace{2ex}[\textbf{Subtype}]: Disease subtype prediction & Classification (5 groups) & Accuracy \\
    \bottomrule
  \end{tabular}
  }
\end{table}

\subsubsection{Graph-Level Task}
For fine-tuning graph-level tasks (Hazard Prediction, Subtype Classification), training was performed on a per-patient basis.
The latent states embedded by the Graph Neural Network were transformed into graph-level embeddings using mean-pooling, and then fed through a 2-layer MLP to train task-specific models.

We employed 10-fold cross-validation across patients to generate training/test datasets.
Fine-tuning was performed using AdamW with a learning rate of $1\times10^{-3}$.
For evaluation, we reported the mean and standard deviation of the scores across all folds.

\paragraph{Hazard Prediction : }
For the hazard prediction task, we adopted the classic Cox proportional hazards model.
In the Cox model, the hazard function for a patient at time \(t\) is defined as
\begin{align*}
h(t \mid \vx) = h_0(t)\exp\bigl(\bm{\beta}^\top \vx\bigr)
\end{align*}
Here, \(h_0(t)\) is the baseline hazard, \(\vx\) is the input variable, and \(\bm{\beta}\) is the regression coefficient to be learned.
The prognosis estimation is performed by connecting this input variable $\vx$ to the graph NN and its head.

This study performed training using partial likelihood maximization based on patient prognosis information and evaluated performance using the C-index.
\paragraph{Subtype Classification:}~
For the subtype classification task, we created a classification model using a 5-class softmax function and trained it using multi-class cross-entropy. 
Furthermore, performance was evaluated using Accuracy and Macro F1-score.
The F1-score results are shown in Appendix~\ref{App:additional_result}.

\subsubsection{Node‐Level Task}
For fine-tuning node-level tasks (BP/CC Classification, Cancer Rel. Classification), tasks were solved on a per-gene basis.
For the latent state of each node embedded by the Graph Neural Network, task-specific models were learned through a 2-layer MLP.

In BP/CC Classification, performance was evaluated using gene-wise 10-fold cross-validation. 
For Cancer Rel. Classification, it is necessary to mitigate class imbalance in positive and negative label data.
To achieve this, we prepared a dataset by undersampling the negative label data, split it into training and test data at an 8:2 ratio, and performed fine-tuning and accuracy evaluation.
This undersampling and data splitting process was repeated 10 times with different seeds to evaluate the performance on this task.

For optimization, AdamW was used with a batch size of 8 and a learning rate of $1\times10^{-3}$.
For evaluation, we reported the mean and standard deviation of the scores for each fold.

\paragraph{BP./CC. Classification :}
In Biological Process (BP) classification, three categories for each gene—"metabolism," "signal transduction," and "cellular organization"—are predicted as a multi-hot vector. In Cellular Component (CC) classification, four categories—"nucleus," "mitochondria," "endoplasmic reticulum," and "plasma membrane"—are predicted as a multi-hot vector. 
The model was structured using a sigmoid function for each category, and training was performed using binary cross-entropy for each respective category.
Performance was evaluated using Subset Accuracy, Macro F1-score, and Jaccard Index as evaluation metrics.
The Macro F1-score and Jaccard Index results are shown in Appendix~\ref{App:additional_result}.

~\paragraph{Cancer Rel. Classification}
In cancer-related gene classification, 106 genes defined as positive by OncoKB were labeled as "positive," and all other genes were labeled as "negative" for binary classification.
A model was created to estimate negative and positive cases using a sigmoid function, and training was performed using binary cross-entropy.
Performance was evaluated using Accuracy and F1-score as evaluation metrics.
The F1-score results are shown in Appendix~\ref{App:additional_result}.

~ %

\section{Empirical Runtime and Computational Complexity of SupGCL}
\label{app:discuss-computation-all}
In this section, we first report the empirical runtime of the proposed method and the comparative methods, and then discuss the computational complexity of the proposed and conventional GCL methods.

\subsection{Computational Environment and Computation Time}
\label{app:SupGCL-compute-time}
All experiments were conducted on an NVIDIA H100 SXM5 (95.83 GiB), and the computational run time for each model is shown in Table~\ref{tab:compute_time}.
Please note that although the number of training steps is based on 3000 epochs, the actual training time varies due to early stopping using the validation data.

\begin{table}[ht]
\centering
\small
\caption{Pre-training computational run time for each cancer type}\label{tab:compute_time}
\medskip 
\begin{tabular}{llrr}
\toprule
Cancer Type & Model   & Computation Time & Epochs \\
\midrule
\multirow{5}{*}{Breast cancer} 
  & GAE     & 3.354 hr & 3000 \\
  & GRACE   & 10.77 hr & 3000 \\
  & GraphCL & 8.306 hr & 2000 \\
  & SGRL    & 5.859 hr & 2000 \\
  & SupGCL  & 21.40 hr & 1500 \\
\midrule
\multirow{5}{*}{Lung cancer} 
  & GAE     & 1.875 hr & 1800 \\
  & GRACE   & 9.960 hr & 3000 \\
  & GraphCL & 3.303 hr & 1300 \\
  & SGRL    & 7.485 hr & 3000 \\
  & SupGCL  & 19.67 hr & 1500 \\
\midrule
\multirow{5}{*}{Colorectal cancer}
  & GAE     & 45.23 min & 2500 \\
  & GRACE   & 2.839 hr  & 3000 \\
  & GraphCL & 1.476 hr  & 2000 \\
  & SGRL    & 53.14 min & 1100 \\
  & SupGCL  & 9.551 hr  & 2500 \\
\bottomrule
\end{tabular}
\end{table}

\subsection{Computational Complexity of SupGCL and Reduction Methods}

The computational complexity of contrastive learning depends on the complexity of calculating the representation matrix, the dot product calculations in the softmax function, and the frequency with which each occurs.
In this chapter, we present the forward pass computational complexity of the SupGCL loss function and compare it with the complexity of node-level Graph Contrastive Learning (GCL), which serves as the baseline for this study.
Next, we demonstrate methods to reduce SupGCL complexity via sampling and analyze the resulting complexity.

The model variables related to sampling used hereafter are shown in Table~\ref{tab:notation_sampling}, and the order of complexity for each is shown in Table~\ref{tab:complexity}.
In the following, please note that the mapping destination by the Graph Neural Network $f_\phi$ lies on the hypersphere feature space $\mathbb{S}_d$.
Consequently, the cosine similarity of node features $\langle z_i|z_j\rangle$ and the graph-level cosine similarity $\mathrm{sim}_F(Z^a, Z^b)$ can be described as a simple dot product $\langle z_i|z_j\rangle$ and the Frobenius inner product $\mathrm{sim}_F(Z^a, Z^b)$, respectively, without loss of generality.

\begin{table}[ht]
\centering
\caption{Summary of Parameters and Notation}
\label{tab:notation_sampling}
\begin{tabular}{lp{10cm}}
\toprule
\textbf{Variable} & \textbf{Description} \\
\midrule
$d$ & Dimension of the feature vector for each node \\
$\mathcal{V}$ & Set of nodes \\
$\mathcal{K}$ & Set of augmentation methods. Basically, $\mathcal{K} \approx \mathcal{V}$. \\
$\mathbb{S}_d$ & $d$-dimensional spherical set, i.e., $\mathbb{S}_d \triangleq \{x\in \mathbb{R}^d~|~\|x\|^2=1 \}$ \\
$z_i^a \in\mathbb{S}_d$ & Feature vector of node $i$ under graph augmentation $a$ \\
$Z^a, Y^a\in\mathbb{S}_d^{|\mathcal{V}|}$ & Feature matrix of the graph under graph augmentation $a$ \\
$\mathcal{K}_{\mathcal{B}} \subset \mathcal{K}$ & Batch set of augmentation methods \\
$\mathcal{V}_{\mathcal{B}} \subset \mathcal{V}$ & Batch set of vertices (nodes) \\
\bottomrule
\end{tabular}
\end{table}

\begin{table}[ht]
\centering
\caption{Summary of Forward Pass Complexity}
\label{tab:complexity}
\begin{tabular}{lp{4.3cm}p{5cm}}
\toprule
\textbf{Target} & \textbf{Forward Complexity} & \textbf{Description} \\
\midrule
$Z^a = f_\phi(G^a)$ & $O(N_{\rm GNN})$ & Forward pass of GNN \\
$\langle z_i^a |z_j^b \rangle$ & $O(N_{\rm GNN} + d)$ & Node-level inner product \\
$\mathrm{sim}_F(Z^a, Z^b)$ & $O(N_{\rm GNN} + |\mathcal{V}|d)$ & Graph-level inner product \\
$\widehat{\mathrm{sim}}_F(Z^a, Z^b)$ & $O(N_{\rm GNN} + |\mathcal{V}_\mathcal{B}|d)$ & Batched graph-level inner product \\
\midrule
$\mathrm{Loss}_{\rm Node}$ & $O(|\mathcal{K}|N_{\rm GNN} + |\mathcal{K}|^2 |\mathcal{V}|^2 d)$ & Node-level loss function \\
$\mathrm{Loss}_{\rm SupGCL}$ & $O(|\mathcal{K}|N_{\rm GNN} + |\mathcal{K}|^2 |\mathcal{V}|^2 d)$ & SupGCL loss function \\
\midrule
$\widehat{\mathrm{Loss}}_{\rm Node}$ & $O( |\mathcal{K}_{\rm \mathcal{B}}|N_{\rm GNN} + |\mathcal{K}_{\rm \mathcal{B}}|^2|\mathcal{V}|^2 d)$ & Node-level loss function with sampled augmentation methods (used for standard node-level GCL training like GRACE) \\
$\widehat{\mathrm{Loss}}_{\rm SupGCL}$ & $O( |\mathcal{K}_{\rm \mathcal{B}}|N_{\rm GNN} + |\mathcal{K}_{\rm \mathcal{B}}|^2|\mathcal{V}|^2 d)$ & SupGCL loss function with sampled augmentation methods (used for result training) \\
\midrule
$\widehat{\widehat{\mathrm{Loss}}}_{\rm Node}$ & $O( |\mathcal{K}_{\rm \mathcal{B}}|N_{\rm GNN} + |\mathcal{K}_{\rm \mathcal{B}}|^2|\mathcal{V}_\mathcal{B}|^2 d)$ & Node-level loss function with sampled augmentation methods and vertices (used for large-scale graph representation learning) \\
$\widehat{\widehat{\mathrm{Loss}}}_{\rm SupGCL}$ & $O( |\mathcal{K}_{\rm \mathcal{B}}|N_{\rm GNN} + |\mathcal{K}_{\rm \mathcal{B}}|^2|\mathcal{V}_\mathcal{B}|^2 d)$ & SupGCL loss function with sampled augmentation methods and vertices \\
\bottomrule
\end{tabular}
\end{table}

\subsection{Computational Complexity of Graph Contrastive Learning}

\subsubsection{Node-Level GCL}
In node-level contrastive learning, the loss function is calculated using combinations of inner products of node features.
Since the computational complexity of the node feature inner product $\langle z_i^a |z_j^b \rangle$ is $O(d)$, the loss function for each graph augmentation pair $a,b \in \mathcal{K}$ is described as follows:
\begin{equation}
\begin{aligned}
 \mathrm{Loss}_{\rm Node}^{a,b} &\triangleq \frac{1}{|\mathcal{V}|}\sum_{i\in \mathcal{V}}D_{\mathrm{KL}}(\delta_{*i}| q_\phi (*|i,b,a)) = -\frac{1}{|\mathcal{V}|}\sum_{i\in \mathcal{V}} \log q_\phi (i|i,b,a)\\
 &= \frac{1}{|\mathcal{V}|} \sum_{i\in \mathcal{V}}\left( \log  \sum_{j\in \mathcal{V}} e^{\langle z^a_i|z^b_k\rangle/\tau_{\rm n} }+ \square\right)
\end{aligned}
\end{equation}
Here, overlapping parts of the inner product are denoted by ``$\square$'' and are omitted.
The loss function $\mathrm{Loss}_{\rm Node}^{a,b}$ involves two GNN computations and $|\mathcal{V}|^2$ inner product operations.
Therefore, assuming the complexity of calculating the representation matrix $Z^* = f_\phi(\mathcal{G}_*)$ using a GNN is $O(N_{\rm GNN})$, the complexity of $\mathrm{Loss}_{\rm Node}^{a,b}$ is $O(N_{\rm GNN} + |\mathcal{V}|^2 )$.

For the node-level loss function $\mathrm{Loss}_{\rm Node}^{a,b}$ with respect to graph augmentations $a, b\in\mathcal{K}$, the expected loss function is defined as:
\begin{equation}
\begin{aligned}
\mathrm{Loss}_{\rm Node} &\triangleq \mathrm{\E}_{a,b \sim \mathrm{U}_\mathcal{A}}\Big[\mathrm{Loss}_{\rm Node}^{a,b}\Big]=  \frac{1}{|\mathcal{K}|^2} \sum_{a,b \in \mathcal{K}} \mathrm{Loss}_{\rm Node}^{a,b}\\
&=   \frac{1}{|\mathcal{V}||\mathcal{K}|^2}  \sum_{a,b \in \mathcal{K}} \sum_{i\in \mathcal{V}}\left( \log  \sum_{j\in \mathcal{V}} e^{\langle z^a_i|z^b_k\rangle/\tau_{\rm n}} + \square \right)
\end{aligned}
\end{equation}
Since $\mathrm{Loss}_{\rm Node}$ involves $|\mathcal{K}|$ GNN computations and $O(|\mathcal{K}|^2|\mathcal{V}|^2)$ inner product operations, the complexity is $O(|\mathcal{K}|N_{\rm GNN} + |\mathcal{K}|^2|\mathcal{V}|^2 d)$.
Note that due to the symmetry of the inner product $\langle z_i^a|z_j^b \rangle = \langle z_j^b|z_i^a \rangle$, the actual number of operations is less than $|\mathcal{K}|^2|\mathcal{V}|^2$.

\subsubsection{Computational Complexity of SupGCL}
\label{app:computational-cost-SupGCL}

The calculation of the SupGCL loss function uses the Frobenius inner product $\mathrm{sim}_F(Z^a, Z^b)$, which is a graph-level inner product operation.
While the complexity of this operation is $O(|\mathcal{V}|d)$, please note that due to its relationship with node-level inner product operations:
\begin{equation}
\mathrm{sim}_F(Z^a,Z^b) = \frac{1}{|\mathcal{V}|}\sum_{i\in \mathcal{V}} \langle z_i^a|z_i^b  \rangle \label{Eq:node_graph_sim}
\end{equation}
it is not necessary to calculate the Frobenius inner product separately if node-level inner products have already been computed.

Now, the SupGCL loss function is defined by a combination of augmentation-level loss and node-level loss.
First, consider the complexity of the augmentation-level loss function $\mathrm{Loss}_{\rm Aug}$:
\begin{equation}
\begin{aligned}
\mathrm{Loss}_{\rm Aug} &\triangleq \frac{1}{|\mathcal{K}|}\sum_{a\in \mathcal{K}}D_{\mathrm{KL}}\big(p_\phi(*|a) \big| q_\phi(*|a)\big)
= \frac{1}{|\mathcal{K}|} \sum_{a,b\in \mathcal{K}} p_\phi(b|a)  \Big(\log p_\phi (b|a) - \log q_\phi (b|a) \Big)\\
&=  \frac{1}{|\mathcal{K}|} \sum_{a,b\in \mathcal{K}} \square \Big( \log  \sum_{c\in \mathcal{K}} e^{\mathrm{sim}_F (Z^a,Z^c)\tau_{\rm a}}  - \log  \sum_{c\in \mathcal{K}} e^{\mathrm{sim}_F(Y^a,Y^c)/\tau_{\rm a}} + \square \Big)
\end{aligned}
\end{equation}
The calculation of the augmentation-level loss function $\mathrm{Loss}_{\rm Aug}$ involves $2|\mathcal{K}|$ GNN computations and $O(|\mathcal{K}|^2)$ Frobenius inner product $\langle *|* \rangle_F$ operations, resulting in a complexity of $O(|\mathcal{K}|N_{\rm GNN} + |\mathcal{K}|^2 |\mathcal{V}| d)$.
The reason the complexity is reduced compared to the node-level loss function $\mathrm{Loss}_{\rm Node}$ is that it does not perform inner products between all features across all nodes; thus, the number of node-level inner product summations is reduced from $O(|\mathcal{V}|^2)$ to $O(|\mathcal{V}|)$.

Finally, we present the complexity of SupGCL.
The SupGCL loss function is:
\begin{equation}
\begin{aligned}
{\rm Loss}_{\rm SupGCL} &=\mathrm{\E}_{a,b\sim p_\phi(b|a)\mathrm{U}_\mathcal{K}(a) }\Big[   \mathrm{Loss}^{a,b}_ {\rm Node} \Big] +  \mathrm{Loss}_{\rm Aug}\\
&= \mathrm{\E}_{a,b\sim p_\phi(b|a)\mathcal{U}(a)} \left[  -\frac{1}{|\mathcal{V}|}\sum_{i\in \mathcal{V}}  \log q_\phi (i|i,b,a) + \log p_\phi(b|a) - \log q_\phi(b|a)\right]\\
&= \sum_{a ,b \in \mathcal{K}}\square \left[  \frac{1}{|\mathcal{V}|}\sum_{i\in \mathcal{V}}\left( \log  \sum_{k\in \mathcal{V}}e^{\langle z^a_i|z^b_k\rangle/\tau_{\rm n}} + \square \right)  - \log  \sum_{c\in \mathcal{K}} e^{\mathrm{sim}_F(Y^a,Y^c)/\tau_{\rm a}} + \square\right]
\end{aligned}
\end{equation}
Consequently, $2|\mathcal{K}|$ GNN computations are performed. For $Z$, $O(|\mathcal{K}|^2|\mathcal{V}|^2)$ inner products are computed, and for $Y$, $O(|\mathcal{K}|^2)$ Frobenius inner products $\langle *|* \rangle_F$ are computed. Thus, the total complexity is $O(2|\mathcal{K}|N_{\rm GNN} + |\mathcal{K}|^2 |\mathcal{V}|^2 d +|\mathcal{K}|^2 |\mathcal{V}| d) = O(|\mathcal{K}|N_{\rm GNN} + |\mathcal{K}|^2|\mathcal{V}|^2 d)$.
Therefore, it can be confirmed that the order of complexity is no different from that of node-level GCL.

\subsection{Reduction of Computational Complexity Using Sampling}

Accurate calculation of the loss functions for the proposed method and conventional node-level GCL requires $O(|\mathcal{K}|N_{\rm GNN} + |\mathcal{K}|^2|\mathcal{V}|^2 d)$ complexity, which depends heavily on the number of graph augmentations $|\mathcal{K}|$ and the number of graph nodes $|\mathcal{V}|$.
Therefore, it has been conventional to perform sampling on augmentation methods and vertices to approximate the loss function, thereby enabling representation learning on large-scale graphs.
This study similarly introduces sampling methods and calculates their computational complexity.

\subsubsection{Sampling in Node-Level GCL}
Generally, in contrastive learning, training is performed by randomly sampling augmentation methods and calculating the loss function for them.
Let $\mathcal{K}_\mathcal{B}\subset \mathcal{K}$ be a batch randomly selected from the augmentation set. The sampled loss function used for gradient calculation is described as follows:
\begin{equation}
\widehat{\mathrm{Loss}}_{\rm Node} \triangleq \frac{1}{|\mathcal{K}_{\mathcal{B}}|^2}\sum_{a,b \in \mathcal{K}^2_{\mathcal{B}}} \mathrm{Loss}_{\rm Node}^{a,b}
\end{equation}
Since the complexity of the loss function $\mathrm{Loss}_{\rm Node}^{a,b}$ for each augmentation method is $O( N_{\rm GNN} + |\mathcal{V}|^2 d )$, the complexity of the sampled loss function $\widehat{\mathrm{Loss}}_{\rm Node}$ becomes $O( |\mathcal{K}_{\rm \mathcal{B}}|N_{\rm GNN} + |\mathcal{K}_{\rm \mathcal{B}}|^2|\mathcal{V}|^2 d)$.
Although we consider only pairs within the batch $\mathcal{K}_\mathcal{B}$ to simplify the analysis, this is generally extended to sample each independently.

In large-scale graphs, sampling is performed not only in the batch direction but also in the node direction.
While there are sampling methods that sample the graph itself and use subgraphs, below we consider a method that approximates the normalization term of the softmax function $q(j|i,b,a)$ using sampling.
\begin{equation}
\hat{q}(j|i,b,a) \triangleq \frac{1}{\hat{D}_{\rm node}^i} e^{\langle z_i^a|z_j^b \rangle/\tau_{\rm n}},\quad \hat{D}_{\rm node}^i =\sum_{k\in \mathcal{V}_\mathcal{B}}e^{\langle z_i^a|z_k^b \rangle/\tau_{\rm n}}
\end{equation}
Here, $\mathcal{V}_\mathcal{B}\subset \mathcal{V}$ is a batch of vertices randomly selected from the vertex set, and its size is defined as $|\mathcal{V}_\mathcal{B}|$.

Using this, the loss function considering node-direction sampling is defined as follows:
\begin{equation}
\widehat{\widehat{\mathrm{Loss}}}_{\rm Node} \triangleq \frac{1}{|\mathcal{K}^2_{\mathcal{B}}|}\sum_{a,b \in \mathcal{K}^2_{\mathcal{B}}} \widehat{\mathrm{Loss}}_{\rm Node}^{a,b},\quad  \widehat{\mathrm{Loss}}_{\rm Node}^{a,b} = - \frac{1}{|\mathcal{V}_\mathcal{B}|}\sum_{i\in \mathcal{V}_\mathcal{B}}\log \hat{q}(i|i,b,a)
\end{equation}
This reduces the computational complexity to $O( |\mathcal{K}_{\rm \mathcal{B}}|N_{\rm GNN} + |\mathcal{K}_{\rm \mathcal{B}}|^2|\mathcal{V}_\mathcal{B}|^2 d)$.

\subsubsection{Sampling in SupGCL}
\label{app:sampling-SupGCL}

SupGCL, like standard node-level GCL, also performs batching in the augmentation direction and the node direction.
The probability distributions $\hat{p}, \hat{q}$ on the batch regarding augmentation methods are defined as:
\begin{equation}
\begin{aligned}
\hat{p}_\phi(b|a) \triangleq \frac{1}{\hat{C}^a_{\rm Aug}} e^{\mathrm{sim}_F(Y^a,Y^b)/\tau_{\rm a}},\quad \hat{C}^a_{\rm Aug} \triangleq   \sum_{c\in \mathcal{K}_\mathcal{B}} e^{\mathrm{sim}_F(Y^a,Y^c) /\tau_{\rm a}}, \\
\hat{q}_\phi(b|a) \triangleq \frac{1}{\hat{D}^a_{\rm Aug}}e^{\mathrm{sim}_F(Z^a,Z^b)/\tau_{\rm a}},\quad   \hat{D}^a_{\rm Aug} \triangleq\sum_{c\in \mathcal{K}_\mathcal{B}} e^{\mathrm{sim}_F(Z^a,Z^c) /\tau_{\rm a}}
\end{aligned}
\end{equation}
Accordingly, the loss function regarding augmentation becomes:
\begin{equation}
\widehat{\mathrm{Loss}}_{\rm Aug} \triangleq \frac{1}{|\mathcal{K}_\mathcal{B}|}\sum_{a\in \mathcal{K}_\mathcal{B}}D_{}(\hat{p}(*|a)| \hat{q}(*|a))
\end{equation}
and its complexity is reduced to $O(|\mathcal{K}_\mathcal{B}|N_{\rm GNN} + |\mathcal{K}_\mathcal{B}|^2|\mathcal{V}|d)$.
Similarly, the SupGCL loss function is naturally derived as follows:
\begin{equation}
\begin{aligned}
\widehat{\rm Loss}_{\rm SupGCL} &=\mathrm{\E}_{a,b\sim \hat{p}_\phi(b|a)U_{\mathcal{K}_\mathcal{B}}(a) }\Big[   \mathrm{Loss}^{a,b}_ {\rm Node} \Big] +  \widehat{\mathrm{Loss}}_{\rm Aug}\\
&= \frac{1}{|\mathcal{K}_\mathcal{B}|^2}\sum_{a,b\in \mathcal{K}_\mathcal{B}^2}\hat{p}_\phi(b|a)\left(-\frac{1}{\mathcal{V}}\sum_{i \in \mathcal{V}} q_\phi(i|i,b,a) + \log\hat{p}_\phi(b|a) - \log\hat{q}_\phi(b|a)\right) 
\end{aligned}
\end{equation}
Therefore, the complexity of $\widehat{\rm Loss}_{\rm SupGCL}$ is reduced to $O(|\mathcal{K}_\mathcal{B}|N_{\rm GNN} + |\mathcal{K}_\mathcal{B}|^2|\mathcal{V}|^2 d)$.

Although the computational complexity is stated to depend on the square of $|\mathcal{K}_\mathcal{B}|$, the number of samples for the softmax normalization term and the batch size for the loss calculation do not need to be identical.
From the perspective of unbiasedness discussed in the next section, increasing the batch size specifically for the normalization terms $\hat{C}^a_{\rm Aug}$ and $\hat{D}^a_{\rm Aug}$ leads to a reduction in estimation bias.
This study adopts this perspective and employs independent batch sizes for the normalization term and the loss function calculation.

Similarly, node-level batching is also possible.
Since the graph-level inner product can be described as the expected value over a uniform distribution in the node direction:
\begin{equation}
\mathrm{sim}_F(Z^a,Z^b)  = \sum_{i\in \mathcal{V}} \langle z_i^a|z_i^b\rangle=  |\mathcal{V}| \mathrm{\E}_{i\in \mathcal{U}_{\mathcal{V}}}\big[\langle z_i^a|z_i^b\rangle\big]
\end{equation}
the batched graph-level inner product becomes:
\begin{equation}
\widehat{\mathrm{sim}}_F(Z^a,Z^b) \triangleq \frac{1}{|\mathcal{V}_\mathcal{B}|} \sum_{i\in  \mathcal{V}_\mathcal{B}} \langle z_i^a|z_i^b\rangle
\end{equation}
Using this, the probability distributions $\hat{\hat{p}}, \hat{\hat{q}}$ on the batch concerning augmentation methods and node direction are defined as:
\begin{equation}
\begin{aligned}
\hat{\hat{p}}_\phi(b|a) \triangleq \frac{1}{\hat{\hat{C}}^a_{\rm Aug}} e^{\widehat{\mathrm{sim}}_F(Y^a,Y^b)/\tau_{\rm a}},\quad  \hat{\hat{C}}^a_{\rm Aug} \triangleq  \sum_{c\in \mathcal{K}_\mathcal{B}} e^{\widehat{\mathrm{sim}}_F( Y^a,Y^c) /\tau_{\rm a}}\\
\hat{\hat{q}}_\phi(b|a) \triangleq \frac{1}{\hat{\hat{D}}^a_{\rm Aug}} e^{\widehat{\mathrm{sim}}_F(Z^a,Z^b)/\tau_{\rm a}},\quad   \hat{\hat{D}}^a_{\rm Aug} \triangleq \sum_{c\in \mathcal{K}_\mathcal{B}} e^{\widehat{\mathrm{sim}}_F(Z^a,Z^c) /\tau_{\rm a}}
\end{aligned}
\end{equation}
and the loss function regarding augmentation becomes:
\begin{equation}
\widehat{\widehat{\mathrm{Loss}}}_{\rm Aug} \triangleq \frac{1}{|\mathcal{K}_\mathcal{B}|}\sum_{a\in \mathcal{K}_\mathcal{B}}D_{}(\hat{\hat{p}}(*|a)| \hat{\hat{q}}(*|a))
\end{equation}
Thus, the batched SupGCL loss function regarding augmentation methods and node direction is:
\begin{equation}
\widehat{\widehat{\rm Loss}}_{\rm SupGCL} =\mathrm{\E}_{a,b\sim \hat{\hat{p}}_\phi(b|a)\mathrm{U}_{\mathcal{K}_\mathcal{B}}(a) }\Big[ \widehat{\mathrm{Loss}}^{a,b}_{\rm Node} \Big] +  \widehat{\widehat{\mathrm{Loss}}}_{\rm Aug}
\end{equation}
and the complexity is reduced to $O(|\mathcal{K}_\mathcal{B}|N_{\rm GNN} + |\mathcal{K}_\mathcal{B}|^2|\mathcal{V}_\mathcal{B}|^2 d)$.

These results imply that SupGCL allows for extensions similar to Node-Level batching, and the order of its computational complexity remains unchanged.
Therefore, SupGCL can be extended to large-scale graph representation learning and various augmentation methods.
In the next section, we will present the statistical characteristics of this batching and compare SupGCL with node-level contrastive learning.

\subsection{Robustness to Sampling}

The batch-based computational reduction employed in this study relies on rigorous importance sampling. Consequently, the convergence of each statistic is guaranteed as $\mathcal{K}_\mathcal{B}$ and $\mathcal{V}_\mathcal{B}$ increase. However, in terms of unbiasedness, the behavior differs from that of the node-level loss function.

While the node-level loss function satisfies
\begin{equation}
|\mathrm{Loss}_{\rm Node} - \mathrm{E}_{\mathcal{K}_{\mathcal{B}}\sim\mathcal{K}}[\widehat{\mathrm{Loss}}_{\rm Node}] | = 0,
\end{equation}
the SupGCL loss function adheres to the following proposition:

\begin{proposition}[Sampling Error of SupGCL]\label{prop:supgcl_sampling_error}
\begin{align*}
\Big|\mathrm{Loss}_{\rm SupGCL}  -  \E_{\mathcal{K}_{\mathcal{B}}\sim\mathcal{K}}[\widehat{\mathrm{Loss}}_{\rm SupGCL}]  \Big| \approx  O\left( \frac{|\mathcal{K}|-|\mathcal{K_B}|}{|\mathcal{K_B}| |\mathcal{K}|}\cdot \frac{1}{|\mathcal{V}|}\cdot\frac{1}{\tau_{\rm a}^2}\right)     
\end{align*}
\end{proposition}
See Section \ref{sec:proof_of_error} for the proof.

The relationship above indicates that while the node-level loss function is unbiased with respect to sampling, the SupGCL loss function is not. This discrepancy arises from the difference between the augmentation-wise probability distributions $p_\phi(b|a), q_\phi(b|a)$ and their sampled counterparts $\hat{p}_\phi(b|a), \hat{q}_\phi(b|a)$. Consequently, the error converges as the batch size approaches the full set ($\mathcal{K_B} \rightarrow \mathcal{K}$) and as $\tau_{\rm a}$ increases. Furthermore, the augmentation-wise probability distribution employs the graph-level similarity described in Eq. (\ref{Eq:node_graph_sim}). Since graph-level similarity is the sample mean of node-level similarities, the batching error depends on $1/|\mathcal{V}|$. Additionally, the convergence with respect to $\tau_{\rm a}$ is linked to the relationship between the proposed method and the conventional $\mathrm{Loss}_{\rm Node}$:
\begin{equation*}
\lim_{\tau_{\rm a}\rightarrow \infty}  {\rm Loss}_{\rm SupGCL}={\rm Loss}_{\rm Node}.
\end{equation*}

On the other hand, Graph Contrastive Learning (GCL) has traditionally employed approximations in the node or sample dimensions.

\begin{proposition}[Error due to Node Sampling] \label{prop:node_sampling_error}
\begin{align*}     
\Big|\widehat{\mathrm{Loss}}_{\rm Node}  -  \E_{\mathcal{V_B}\sim\mathcal{V}}[\widehat{\widehat{\mathrm{Loss}}}_{\rm Node}]\Big| \approx   O\left( \frac{|\mathcal{V}|-|\mathcal{V_B}|}{|\mathcal{V_B}| |\mathcal{V}|}\cdot\frac{1}{\tau_{\rm n}^2}\right)
\end{align*}
\end{proposition}
See Section \ref{sec:proof_of_error} for the proof.

The above relationship implies that the loss function derived from the probability distribution $\hat{q}_\phi(j|i,b,a)$, sampled in the node dimension, is not an unbiased estimator. Furthermore, as previously established, the discrepancy due to sampling diminishes as the temperature parameter $\tau_{\rm n}$ increases \cite{wang2021understanding}.

These two propositions reveal that the accuracy degradation due to sampling in standard GCL (node-level) and the proposed augmentation-wise sampling differs by an order proportional to $1/|\mathcal{V}|$. Thus, the degradation in contrastive learning accuracy caused by our sampling strategy is estimated to be approximately $1/|\mathcal{V}|$ of that observed in node-level sampling. Although exact estimation is impossible solely via sampling error due to dataset dependency, this relationship allows us to interpret the convergence impact of sampling in contrast to node-level instances.

Note that while Propositions 1 and 2 focus on the deviation of the error, the gradients also exhibit the same order of magnitude. This stems from Lemma 2 used in the proofs, which establishes that the deviation of the gradient of error function decomposes into a term dependent on batch-related constants (temperature $\tau$, dimensionality $N$, and batch size $m$) and a term reflecting the GNN influence associated with the gradient method.

In practice, examining the sampling ratios in existing contrastive learning methods for image datasets reveals that the samples referenced in each update step constitute only a fraction of the total data. For instance, in pre-training on ImageNet-1K (approx. $1,281,167$ training images) using SimCLR \citep{chen2020simple}, a batch size of 4096 is standard, meaning the number of samples used per step is approximately $0.32\%$ of the total data. Even with a larger batch size of 8192, this ratio remains around $8,192/1,281,167 \approx 0.64\%$. Meanwhile, MoCo v2 \citep{he2020momentum}) and related methods typically use a batch size of around 256 on ImageNet-1K, yielding a sampling ratio of merely $256/1,281,167\approx 0.02\%$. Similarly, for SupCon \citep{khosla2020supervised} and subsequent SimCLR-based methods on CIFAR-10 (50,000 training images), batch sizes of 256--512 are widely adopted, resulting in a ratio of at most $0.5$--$1\%$.

The same applies to node-level Graph Contrastive Learning (GCL). For small-scale graphs like Cora or Citeseer, methods such as GRACE utilize all nodes as negative samples, effectively performing $100\%$ sampling. However, for large-scale graphs like Reddit (232,965 nodes), sampling is performed in mini-batch units (e.g., 256 nodes) following GraphSAGE \citep{hamilton2017inductive}, restricting the ratio to approximately $0.1\%$ (\citep{xia2022progcl}.
Comparing these trends, the augmentation-wise sampling ratio in SupGCL (8 batch size / $768\sim 948$ node $=0.84\sim1.04\%$) falls within or exceeds the range of general CL methods, indicating that it is at a practically reasonable level from a contrastive learning perspective.

Calculating the order of the error upper bound $\frac{|\mathcal{V}|-|\mathcal{V_B}|}{|\mathcal{V_B}| |\mathcal{V}|}$ based on Proposition \ref{prop:node_sampling_error} yields the comparisons in Table \ref{tab:comparison}. As shown, the order of the error upper bound for the proposed SupGCL is comparable to that of SimCLR with a batch size of $8,192$ and is sufficiently lower than that of other methods. In other words, in terms of the error upper bound order, the batch size in the augmentation direction employed in this study can be interpreted as sufficiently robust for practical use.

\begin{table}[ht]
\centering
\caption{Comparison of Sampling Errors Across Methods}
\label{tab:comparison}
\resizebox{\textwidth}{!}{%
\begin{tabular}{lcccc}
\toprule
Method & \begin{tabular}{c}No. of Nodes or Samples:\\ $|\mathcal{V}|$ ($|\mathcal{K}|$)\end{tabular} & \begin{tabular}{c}Batch Size (Node/Sample-level):\\ $|\mathcal{V_B}|$ or $|\mathcal{K_B}|$\end{tabular}    & Sampling Ratio & \begin{tabular}{c}Order of Sampling deviation\\ $\frac{|\mathcal{V}|-|\mathcal{V_B}|}{|\mathcal{V_B}| |\mathcal{V}|}$ or $\frac{|\mathcal{K}|-|\mathcal{K_B}|}{|\mathcal{K_B}| |\mathcal{K}|}\cdot \frac{1}{|\mathcal{V}|}$\end{tabular} \\
\midrule
SimCLR\citep{chen2020simple} & $1,281,167$ sample & $4096\sim 8192$ sample & $0.31\sim 0.63$\% & $(1.21 \sim 2.43) \times 10^{-4}$ \\
Moco v2\citep{he2020momentum} & $1,281,167$ sample & $256$ sample & $0.01$\% & $3.91 \times 10^{-3}$ \\
SupCon\citep{khosla2020supervised} & $1,281,167$ sample & $256 \sim 512$ sample & $0.01 \sim 0.03$\% & $(1.95\sim 3.91)\times 10^{-3}$ \\
ProGCL\citep{xia2022progcl} & $232,965$ node & $256$ node & $0.10$\% & $3.90 \times 10^{-3}$ \\
\textbf{SupGCL (Ours)} & $975(769\sim 946)$ node & $8$ node & $0.84 \sim 1.04$\% & $(1.26\sim 1.27)\times 10^{-4}$ \\
\bottomrule
\end{tabular}%
}
\end{table}

Additionally, regarding the temperature parameter, the settings for SupGCL remain within the scale of existing CL literature. In SimCLR, a temperature of $\tau\approx 0.5$ is standard, and recent reports suggest that lowering it to $\tau\approx 0.25$ can improve performance \cite{chen2020simple}. Similarly, MoCo v2 \cite{he2020momentum} widely adopts $\tau\approx 0.2$. In contrast, the augmentation-wise temperature $\tau_{\rm a}$ used in SupGCL is set in the range of $0.1$ to $2.0$, which is of the same order as existing methods. Therefore, from the perspective of the temperature parameter, the sampling error of SupGCL is expected to exhibit behavior similar to that of conventional CL.

While the practicality of the error upper bound was theoretically confirmed, we further analyzed the impact of increasing the augmentation-wise batch size in additional experiments. Table \ref{tab:downstream} compares the performance on Hazard and BP prediction tasks for breast cancer patients. We observed an improvement in accuracy as the batch size (more precisely, the sample size for the normalization terms $\hat{C}^*, \hat{D}^*$) increased. However, due to computational resource constraints, we were unable to perform this calculation across all tasks in this study.

\begin{table}[ht]
\centering
\caption{Downstream task performance vs. Augmentation Batch size}
\label{tab:downstream}
\begin{tabular}{lcc}
\toprule
 & Hazard (Breast) & BP (Breast) \\
\midrule
$|\mathcal{K_B}|=8$ (Setting in paper) & 0.650 $\pm$ 0.059 & 0.243 $\pm$ 0.052 \\
$|\mathcal{K_B}|=16$ & 0.654 $\pm$ 0.065 & 0.243 $\pm$ 0.031 \\
$|\mathcal{K_B}|=32$ & 0.664 $\pm$ 0.052 & 0.241 $\pm$ 0.043 \\
$|\mathcal{K_B}|=40$ & 0.663 $\pm$ 0.048 & 0.251 $\pm$ 0.039 \\
\bottomrule
\end{tabular}
\end{table}

\subsection{Proof of Bias Order}
\label{sec:proof_of_error}
The two propositions regarding the error upper bounds presented above are derived from the following lemma on the approximate convergence of KL divergence.

\begin{lemma}
Let the conditional probability distributions $p(j|i)$ and $q(j|i)$ be defined using the Softmax function over a set $V$ as follows:
\begin{align}
    p(j|i) &= \frac{e^{\beta a_{ij}}}{A_i}, \quad A_i = \sum_{k \in V} e^{\beta a_{ik}}, \\
    q(j|i) &= \frac{e^{\beta b_{ij}}}{B_i}, \quad B_i = \sum_{k \in V} e^{\beta b_{ik}}.
\end{align}
Define the loss function $L$ as the average Kullback-Leibler (KL) divergence:
\begin{equation}
    L = \frac{1}{|V|} \sum_{i \in V} D_{\mathrm{KL}}(p(\cdot|i) \,\|\, q(\cdot|i)).
\end{equation}
If $\beta \ll 1$, the loss function can be approximated as:
\begin{equation}
    L \approx \frac{\beta^2}{2|V|} \sum_{i \in V} \mathrm{Var}_j(a_{ij} - b_{ij}),
\end{equation}
where $\mathrm{Var}_j(\cdot)$ denotes the variance with respect to the uniform distribution over $j \in V$.
\end{lemma}

\begin{proof}
We analyze the KL divergence term for a fixed index $i$, denoted as $D_i = D_{\mathrm{KL}}(p(\cdot|i) \,\|\, q(\cdot|i))$. By definition:
\begin{align}
    D_i &= \sum_{j \in V} p(j|i) \ln \frac{p(j|i)}{q(j|i)} \nonumber \\
    &= \sum_{j \in V} p(j|i) \left( (\beta a_{ij} - \ln A_i) - (\beta b_{ij} - \ln B_i) \right) \nonumber \\
    &= \beta \sum_{j \in V} p(j|i) (a_{ij} - b_{ij}) - (\ln A_i - \ln B_i). \label{eq:kl_expansion}
\end{align}
Since $\beta \ll 1$, we employ the Taylor expansion for the exponential terms and the logarithm of the partition function. Let $N = |V|$. The partition function $A_i$ is expanded as:
\begin{align}
    A_i &= \sum_{j \in V} \left( 1 + \beta a_{ij} + \frac{\beta^2}{2} a_{ij}^2 + \mathcal{O}(\beta^3) \right) \nonumber \\
    &= N \left( 1 + \beta \mu_{a_i} + \frac{\beta^2}{2} m_{a_i}^{(2)} + \mathcal{O}(\beta^3) \right),
\end{align}
where $\mu_{a_i} = \frac{1}{N}\sum_j a_{ij}$ and $m_{a_i}^{(2)} = \frac{1}{N}\sum_j a_{ij}^2$ are the mean and second moment with respect to the uniform distribution.
Using the approximation $\ln(1+x) \approx x - \frac{x^2}{2}$, the log-partition function becomes:
\begin{align}
    \ln A_i &\approx \ln N + \left( \beta \mu_{a_i} + \frac{\beta^2}{2} m_{a_i}^{(2)} \right) - \frac{1}{2} (\beta \mu_{a_i})^2 \nonumber \\
    &= \ln N + \beta \mu_{a_i} + \frac{\beta^2}{2} \underbrace{(m_{a_i}^{(2)} - \mu_{a_i}^2)}_{\sigma_{a_i}^2}, \label{eq:log_A}
\end{align}
where $\sigma_{a_i}^2 = \mathrm{Var}_j(a_{ij})$ under the uniform distribution. Similarly, for $B_i$, we have:
\begin{equation}
    \ln B_i \approx \ln N + \beta \mu_{b_i} + \frac{\beta^2}{2} \sigma_{b_i}^2. \label{eq:log_B}
\end{equation}

Next, we approximate the expectation term $\sum_j p(j|i) (a_{ij} - b_{ij})$. First, approximate $p(j|i)$ up to $\mathcal{O}(\beta)$:
\begin{align}
    p(j|i) &= \frac{e^{\beta a_{ij}}}{A_i} \approx \frac{1 + \beta a_{ij}}{N(1 + \beta \mu_{a_i})} \approx \frac{1}{N}(1 + \beta a_{ij})(1 - \beta \mu_{a_i}) \nonumber \\
    &\approx \frac{1}{N} \left( 1 + \beta(a_{ij} - \mu_{a_i}) \right).
\end{align}
Let $\Delta_{ij} = a_{ij} - b_{ij}$. The expectation is:
\begin{align}
    \sum_{j \in V} p(j|i) \Delta_{ij} &\approx \frac{1}{N} \sum_{j \in V} (1 + \beta(a_{ij} - \mu_{a_i})) \Delta_{ij} \nonumber \\
    &= \frac{1}{N} \sum_{j \in V} \Delta_{ij} + \beta \left( \frac{1}{N}\sum_{j \in V} a_{ij}\Delta_{ij} - \mu_{a_i} \frac{1}{N}\sum_{j \in V} \Delta_{ij} \right) \nonumber \\
    &= (\mu_{a_i} - \mu_{b_i}) + \beta \mathrm{Cov}_j(a_{ij}, \Delta_{ij}), \label{eq:expect_term}
\end{align}
where $\mathrm{Cov}_j$ denotes covariance under the uniform distribution. Note that $\mathrm{Cov}_j(a_{ij}, a_{ij} - b_{ij}) = \sigma_{a_i}^2 - \mathrm{Cov}_j(a_{ij}, b_{ij})$.

Substituting \eqref{eq:log_A}, \eqref{eq:log_B}, and \eqref{eq:expect_term} back into \eqref{eq:kl_expansion}:
\begin{align}
    D_i &\approx \beta \left[ (\mu_{a_i} - \mu_{b_i}) + \beta (\sigma_{a_i}^2 - \mathrm{Cov}_j(a_{ij}, b_{ij})) \right] \nonumber \\
    &\quad - \left[ (\ln N + \beta \mu_{a_i} + \frac{\beta^2}{2} \sigma_{a_i}^2) - (\ln N + \beta \mu_{b_i} + \frac{\beta^2}{2} \sigma_{b_i}^2) \right].
\end{align}
The first-order terms $\beta(\mu_{a_i} - \mu_{b_i})$ cancel out. Collecting the $\beta^2$ terms:
\begin{align}
    D_i &\approx \beta^2 \left( \sigma_{a_i}^2 - \mathrm{Cov}_j(a_{ij}, b_{ij}) - \frac{1}{2}\sigma_{a_i}^2 + \frac{1}{2}\sigma_{b_i}^2 \right) \nonumber \\
    &= \frac{\beta^2}{2} \left( \sigma_{a_i}^2 - 2\mathrm{Cov}_j(a_{ij}, b_{ij}) + \sigma_{b_i}^2 \right) \nonumber \\
    &= \frac{\beta^2}{2} \mathrm{Var}_j(a_{ij} - b_{ij}).
\end{align}
Finally, averaging over all $i \in V$:
\begin{equation}
    L = \frac{1}{|V|} \sum_{i \in V} D_i \approx \frac{\beta^2}{2|V|} \sum_{i \in V} \mathrm{Var}_j(a_{ij} - b_{ij}).
\end{equation}
\end{proof}

\begin{lemma}
Define the KL divergence within a batch $U \subset V$ as:
\begin{align*}
\hat{p}(j|i) &= \frac{1}{\hat{A}_i}e^{\beta a_{ij}},\quad \hat{A}_i =\sum_{j\in U}e^{\beta a_{ij}}\\
\hat{q}(j|i) &= \frac{1}{\hat{B}_i}e^{\beta b_{ij}},\quad \hat{B}_i =\sum_{j\in U}e^{\beta b_{ij}}\\
\hat{L} &= \frac{1}{|U|}\sum_{i \in U} D_{\mathrm{KL}}(\hat{p}(j|i)| \hat{q}(j|i))
\end{align*}
Then, the following approximation holds:
\begin{equation}
L - \E_B[\hat{L}] \approx  \frac{|V| -|U|}{(|V|-1)|U|} \cdot\frac{\beta^2}{2} \cdot \frac{1}{|V|}  \sum_{i \in V} \mathrm{Var}_j(a_{ij} - b_{ij}).
\end{equation}
\end{lemma}
\begin{proof}Let $N = |V|$ and $M = |U|$. 
By applying the result of the previous Lemma to the batch $U$, the batch loss $\hat{L}$ for a specific batch $U$ can be approximated as:

\begin{equation}
\hat{L} \approx \frac{\beta^2}{2M} \sum_{i \in U} \mathrm{Var}_{j \in U}(\Delta_{ij}),
\end{equation}
where $\Delta_{ij} = a_{ij} - b_{ij}$, and $\mathrm{Var}_{j \in U}(\cdot)$ denotes the variance over the set $U$ (sample variance).
We consider the expectation of $\hat{L}$ with respect to the random choice of the batch $U$ (sampled uniformly without replacement from $V$).
By the linearity of expectation:
\begin{equation}
\mathbb{E}_{U}[\hat{L}] \approx \frac{\beta^2}{2} \mathbb{E}_U \left[ \frac{1}{M} \sum_{i \in U} \hat{\sigma}_i^2(U) \right],
\end{equation}
where $\hat{\sigma}_i^2(U) = \mathrm{Var}_{j \in U}(\Delta_{ij})$.
We can rewrite the expectation using an indicator variable $\mathbb{I}(i \in U)$:
\begin{align*}
\mathbb{E}_U \left[ \frac{1}{M} \sum_{i \in V} \mathbb{I}(i \in U) \hat{\sigma}_i^2(U) \right] &= \frac{1}{M} \sum_{i \in V} P(i \in U) \mathbb{E}[\hat{\sigma}_i^2(U) \mid i \in U] \\
&= \frac{1}{M} \sum_{i \in V} \frac{M}{N} \mathbb{E}[\hat{\sigma}_i^2(U) \mid i \in U] \\
&= \frac{1}{N} \sum_{i \in V} \mathbb{E}[\hat{\sigma}_i^2(U) \mid i \in U].
\end{align*}
Here, $\mathbb{E}[\hat{\sigma}_i^2(U) \mid i \in U]$ represents the expected variance of the values ${\Delta_{ij}},~{j \in U}$ when the batch $U$ is a random subset of size $M$ containing $i$.
Since the variance is computed over $j \in U$, and $U$ is drawn from $V$, we invoke the relationship between the expected sample variance (biased estimator) and the population variance. Let $\sigma_i^2 = \mathrm{Var}_{j \in V}(\Delta_{ij})$ be the population variance.
The expectation of the sample variance defined with denominator $M$ is given by:
\begin{equation}
\mathbb{E}[\hat{\sigma}_i^2(U)] = \frac{M-1}{M} \frac{N}{N-1} \sigma_i^2.
\end{equation}
Substituting this back into the expression for $\mathbb{E}_U[\hat{L}]$:
\begin{align*}
\mathbb{E}_U[\hat{L}] &\approx \frac{\beta^2}{2N} \sum_{i \in V} \left( \frac{M-1}{M} \frac{N}{N-1} \sigma_i^2 \right)\\
&= \frac{M-1}{M} \frac{N}{N-1} \left( \frac{\beta^2}{2N} \sum_{i \in V} \sigma_i^2 \right)
\end{align*}
Finally, we compute the difference $L - \mathbb{E}_U[\hat{L}]$:
\begin{align*}
L - \mathbb{E}_U[\hat{L}] &\approx \left( 1 - \frac{N(M-1)}{M(N-1)} \right)\left( \frac{\beta^2}{2N} \sum_{i \in V} \sigma_i^2 \right)\\
&= \frac{N - M}{M(N-1)}\left( \frac{\beta^2}{2N} \sum_{i \in V} \sigma_i^2 \right)
\end{align*}
This completes the proof.
\end{proof}

The proof above similarly holds for the case where $p(j|i) = \delta_{ij}$, resulting in an error order of $O(\beta^2\cdot\tfrac{|V| - |U|}{|V||U|})$. This result supports Proposition \ref{prop:node_sampling_error} regarding the error order due to node sampling.

Crucially, for the error upper bound of SupGCL, the term $\mathrm{Var}_j(a_{ij}-b_{ij})$ plays a significant role.
In the node-level probability density function, $a_{ij}, b_{ij}$ correspond to the similarity between nodes $\langle z^*_i|z_j^*\rangle$, whereas in the augmentation-level probability density function, $a_{ij}, b_{ij}$ represent the similarity between graphs $\mathrm{sim}_F(Z^a,Z^b)$.
Assuming that the similarity of each node is sampled from an identical distribution, i.e.,
\begin{align}
\langle z_i^a|z_i^b\rangle \sim  P^{a,b}, \label{Eq:app_iid_of_node_level_representation}
\end{align}
the graph-level similarity can be regarded as the sample mean of the node-level similarities.
Therefore, the variance of the graph-level similarity is given by:
\begin{align*}
\mathrm{Var}(\mathrm{sim}_F(Y^a,Y^b) - \mathrm{sim}_F(Z^a,Z^b)) &= \mathrm{Var}\left(\frac{1}{|V|}\sum_{i\in V}\langle y_i^a,y_i^b\rangle - \frac{1}{|V|}\sum_{i\in V}\langle z_i^a,z_i^b\rangle\right)\\
&= \frac{1}{|V|} \mathrm{Var}(\langle y^a,y^b \rangle - \langle z^a,z^b\rangle),
\end{align*}
where $z^*, y^*$ are random variables representing node-level features.
Consequently, the order of the error upper bound for SupGCL satisfies $O(\beta^2\cdot\tfrac{|V| - |U|}{|V||U|} \cdot \tfrac{1}{|V|})$, confirming the validity of Proposition \ref{prop:supgcl_sampling_error}.

In this proof, we adopt the i.i.d. assumption as shown in Eq. \ref{Eq:app_iid_of_node_level_representation}.
This constitutes a strong assumption that the inner products of node-level features follow a probability distribution that is independent of the nodes.
However, by employing Hoeffding's inequality, we can relax this assumption due to the boundedness of the feature inner products.
Consequently, the relaxed assumption requires only the independence of the node-level distributions, which is the condition for Hoeffding's inequality.

\color{black}

\section{Additional Results}\label{App:additional_result}
As additional experimental results, we performed the following six analyses:
\begin{enumerate}
    \item Performance evaluation of the proposed and existing methods using various evaluation metrics.
    \item Visualization of the latent states of pre-trained models.
    \item Performance comparison with varying embedding dimensions.
    \item Statistical significance testing for the performance metrics across downstream tasks.
    \item Evaluation on Link Prediction.
    \item Comparison of Data Augmentation Strategies
\end{enumerate}
Furthermore, an analysis of the robustness of SupGCL against dataset changes is provided in Appendix~\ref{app:robastness}.

\subsection{Additional Evaluation Metrics}
In the main paper, we presented results using only Accuracy (or subset accuracy).
Below, we report the results using other metrics for the same tasks.

Tables~\ref{tab:node-level-macro-f1-trunc} and \ref{tab:node-level-jaccard} show the Macro F1-score and Jaccard index results for node-level tasks.
Please note that for the cancer-related classification task, we evaluate only the F1-score because it involves binary data.
Additionally, Table~\ref{tab:Macro-F1-subtype} shows the Macro F1-score for subtype classification in breast cancer.
While our proposed method, SupGCL, did not individually achieve state-of-the-art results across all tasks and metrics, it demonstrated the most balanced performance overall.

\begin{table}[ht]
  \centering
  \small
  \caption{Node-level downstream task: macro F1-score}
  \label{tab:node-level-macro-f1-trunc}
  \begin{tabular}{l ccccc c}
    \toprule
    \textbf{Task} 
      & \textbf{w/o-pretrain} & \textbf{GAE}
      & \textbf{GraphCL}      & \textbf{GRACE}
      & \textbf{SGRL}         & \textbf{SupGCL} \\
    \midrule
    \textbf{BP.}  &&&&&&\\
     \hspace{2ex}Breast
       & 0.553±0.024 & 0.551±0.034 & 0.540±0.045 & \underline{0.558±0.022} & 0.543±0.022 & \textbf{0.571±0.025} \\
     \hspace{2ex}Lung
       & 0.538±0.039 & 0.546±0.021 & \textbf{0.584±0.065} & \underline{0.555±0.026} & 0.549±0.023 & 0.546±0.031 \\
     \hspace{2ex}Colorectal
       & 0.514±0.053 & 0.550±0.025 & 0.516±0.033 & \textbf{0.560±0.042} & \underline{0.560±0.040} & 0.547±0.038 \\
    \midrule
    \textbf{CC.}  &&&&&&\\
     \hspace{2ex}Breast
       & \underline{0.404±0.036} & 0.378±0.021 & 0.336±0.018 & 0.362±0.040 & 0.384±0.037 & \textbf{0.418±0.024} \\
     \hspace{2ex}Lung
       & 0.349±0.086 & \textbf{0.395±0.023} & 0.376±0.072 & \underline{0.393±0.026} & 0.385±0.026 & 0.387±0.028 \\
     \hspace{2ex}Colorectal
       & 0.288±0.060 & \textbf{0.403±0.032} & 0.265±0.029 & 0.372±0.047 & \underline{0.401±0.049} & 0.397±0.030 \\
    \midrule
    \textbf{Rel.} &&&&&&\\
     \hspace{2ex}Breast
       & 0.523±0.094 & 0.571±0.048 & \underline{0.593±0.072} & 0.591±0.038 & 0.578±0.067 & \textbf{0.610±0.070} \\
     \hspace{2ex}Lung
       & 0.507±0.117 & 0.559±0.045 & 0.538±0.236 & 0.535±0.139 & \underline{0.575±0.061} & \textbf{0.592±0.067} \\
     \hspace{2ex}Colorectal
       & 0.474±0.242 & \underline{0.582±0.081} & 0.556±0.124 & 0.547±0.197 & 0.569±0.145 & \textbf{0.596±0.060} \\
    \bottomrule
  \end{tabular}
\end{table}

\begin{table}[ht]
  \centering
  \small
  \caption{Node-level downstream task: Jaccard index}
  \label{tab:node-level-jaccard}
  \begin{tabular}{l ccccc c}
    \toprule
    \textbf{Task} 
      & \textbf{w/o-pretrain} & \textbf{GAE}
      & \textbf{GraphCL}      & \textbf{GRACE}
      & \textbf{SGRL}         & \textbf{SupGCL} \\
    \midrule
    \textbf{BP.}  &&&&&&\\
     \hspace{2ex}Breast
       & \underline{0.490±0.017} & 0.487±0.028 & 0.454±0.046 & 0.478±0.037 & 0.468±0.028 & \textbf{0.500±0.035} \\
     \hspace{2ex}Lung
       & \textbf{0.539±0.030} & 0.494±0.034 & 0.484±0.030 & 0.510±0.051 & 0.479±0.019 & \underline{0.518±0.027} \\
     \hspace{2ex}Colorectal
       & \textbf{0.537±0.031} & 0.506±0.019 & 0.500±0.036 & \underline{0.514±0.024} & 0.469±0.030 & 0.502±0.022 \\
    \midrule
    \textbf{CC.}  &&&&&&\\
     \hspace{2ex}Breast
       & \underline{0.402±0.052} & 0.378±0.021 & 0.303±0.028 & 0.359±0.028 & 0.377±0.029 & \textbf{0.422±0.028} \\
     \hspace{2ex}Lung
       & \underline{0.387±0.040} & 0.382±0.035 & 0.321±0.062 & 0.384±0.036 & 0.376±0.031 & \textbf{0.392±0.034} \\
     \hspace{2ex}Colorectal
       & 0.377±0.055 & 0.379±0.036 & 0.308±0.067 & \underline{0.388±0.036} & 0.360±0.053 & \textbf{0.395±0.033} \\
    \bottomrule
  \end{tabular}
\end{table}

\begin{table}[ht]
  \centering
  \small
  \caption{Macro F1-score for subtype classification}
  \label{tab:Macro-F1-subtype}
  \begin{tabular}{l ccccc c}
    \toprule
    \textbf{Task} 
      & \textbf{w/o-pretrain} & \textbf{GAE}
      & \textbf{GraphCL}      & \textbf{GRACE}
      & \textbf{SGRL}         & \textbf{SupGCL} \\
    \midrule
    \textbf{Subtype}  &&&&&&\\
    \hspace{2ex}Breast
       & 0.626 ± 0.070 & 0.720 ± 0.057 & 0.552 ± 0.089 & \underline{0.761 ± 0.063} & 0.715 ± 0.064 & \textbf{0.785 ± 0.056} \\
    \bottomrule
  \end{tabular}
\end{table}

\subsection{Additional Latent Space Analysis}
\paragraph{Node-level latent Space:}
Previously, in Sec~\ref{sec:result_experiment3}, we visualized the graph-level embedding space generated by pre-trained models (Figure~\ref{fig:GCL-latent-space}).
Node-level results for the breast, lung, and colorectal cancer datasets are presented in Figure~\ref{fig:zero_shot_all}.
Since subtype data were unavailable for the lung and colorectal cancer datasets, only node-level latent space visualizations are presented for these cancers.

These results confirm that both GRACE and our proposed method, SupGCL, yield stable latent representations for these cancers, with no observed latent space collapse.
A more detailed analysis on latent space collapse is provided in the next section.
Furthermore, a more detailed analysis of the latent space from a biological perspective is provided in Appendix~\ref{app:enrichment}

\begin{figure}[t]
  \centering
    \centering
    \includegraphics[width=1.0\linewidth]{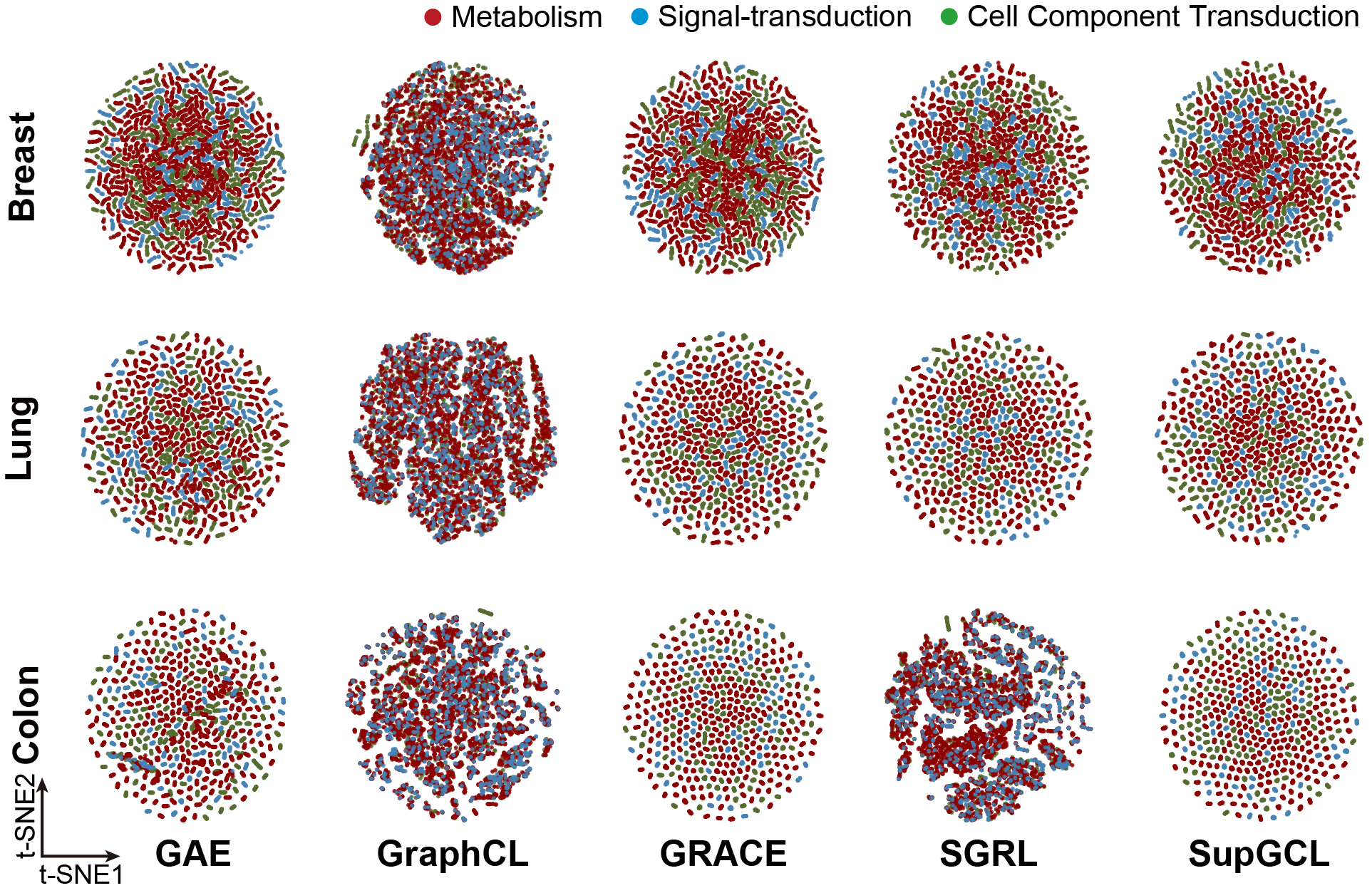}\\
  \caption{t-SNE visualization of pre-trained embeddings on breast, lung, and colorectal cancer GRNs.}
  \label{fig:zero_shot_all}
\end{figure}

\paragraph{Analysis of Latent Space Collapse:}
To further investigate the characteristics of the node-level latent spaces presented in Sec~\ref{sec:result_experiment3}, we employed Principal Component Analysis (PCA).
Figure~\ref{fig:zeroshot_PCA_analysis} displays the PCA-projected latent spaces from pre-trained models on the breast cancer dataset, along with their corresponding explained variance ratios.
For GraphCL, which previously exhibited tendencies towards latent space collapse, this analysis confirmed that its PCA explained variance ratio was overwhelmingly concentrated in the first principal component (PC1), accounting for 98.3\%.

\begin{figure}[t]
    \centering
    \begin{tabular}{ccc}
    \includegraphics[width=0.3\linewidth]{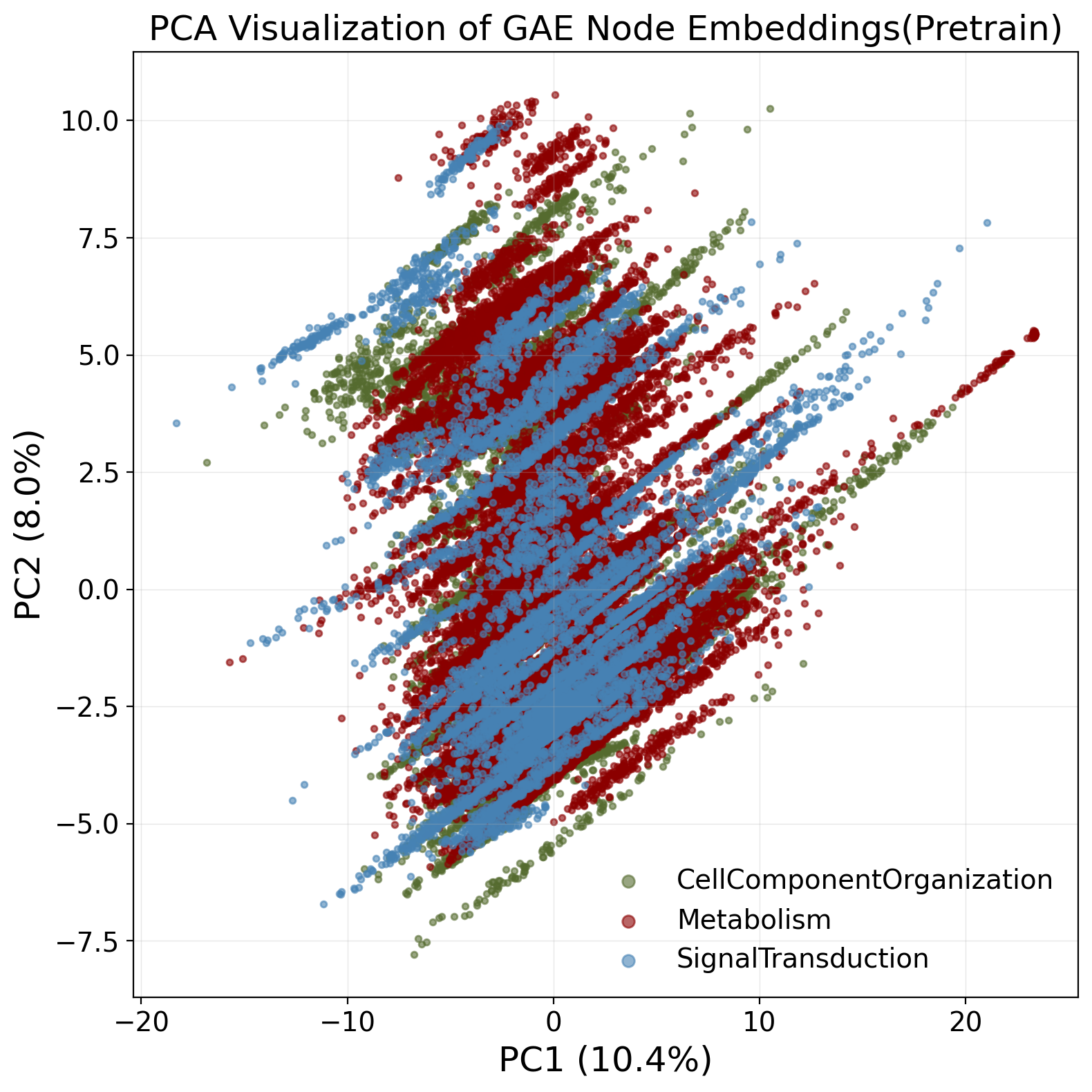}      &  \includegraphics[width=0.3\linewidth]{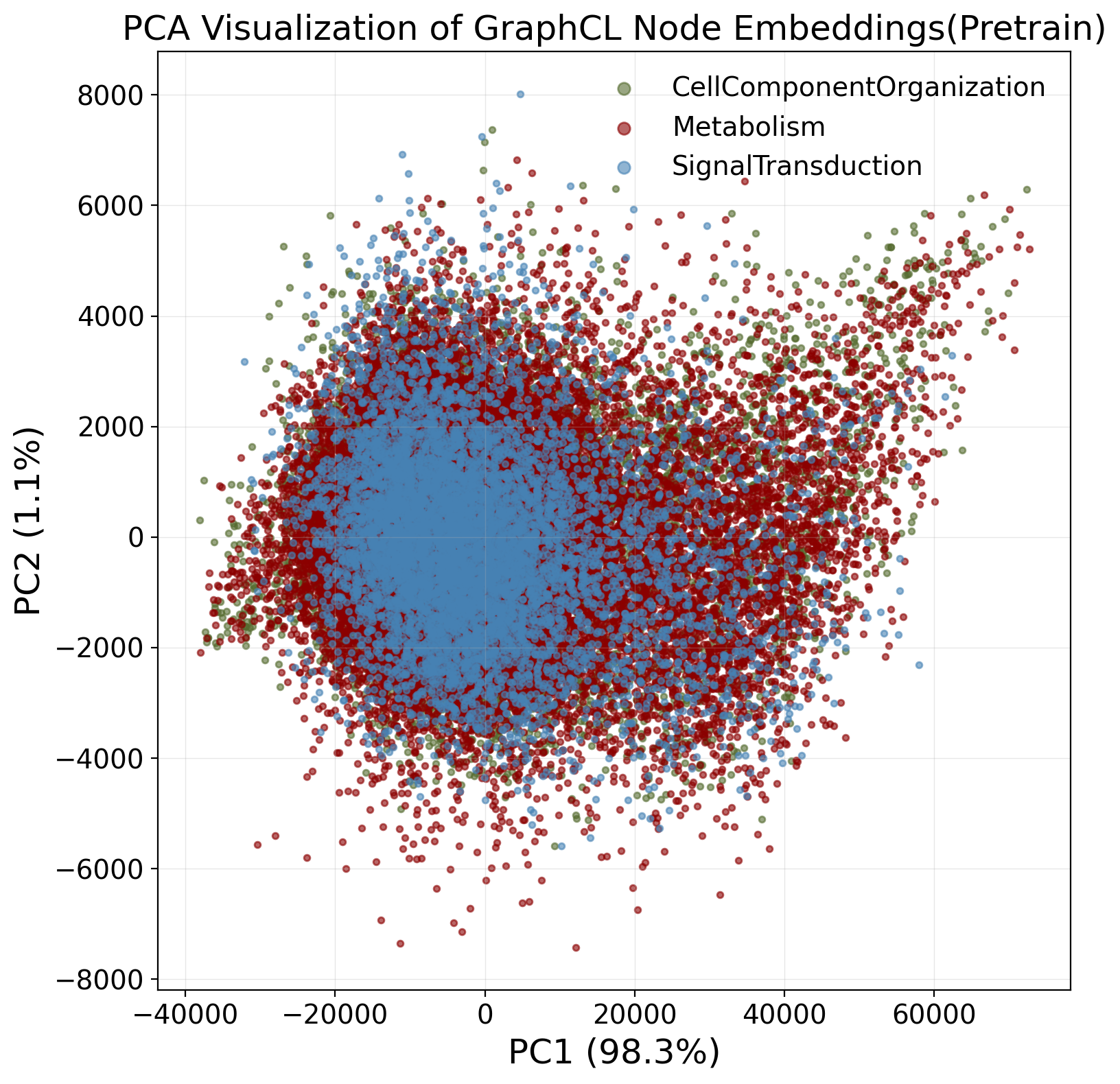} &\includegraphics[width=0.3\linewidth]{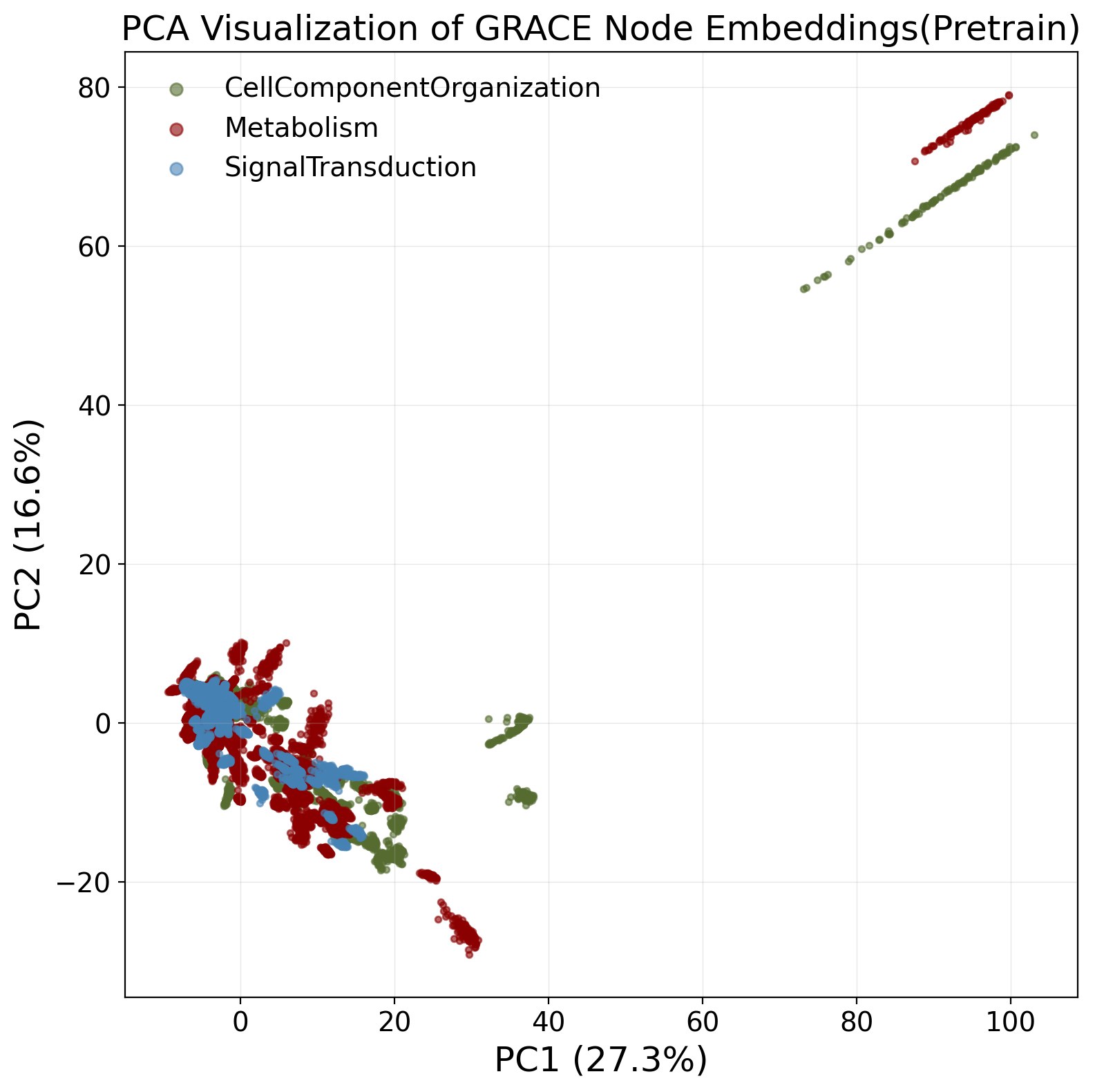}\\
    \text{GAE}&\text{GraphCL}&\text{GRACE}\\
    \end{tabular}
    \begin{tabular}{cc}
    \includegraphics[width=0.3\linewidth]{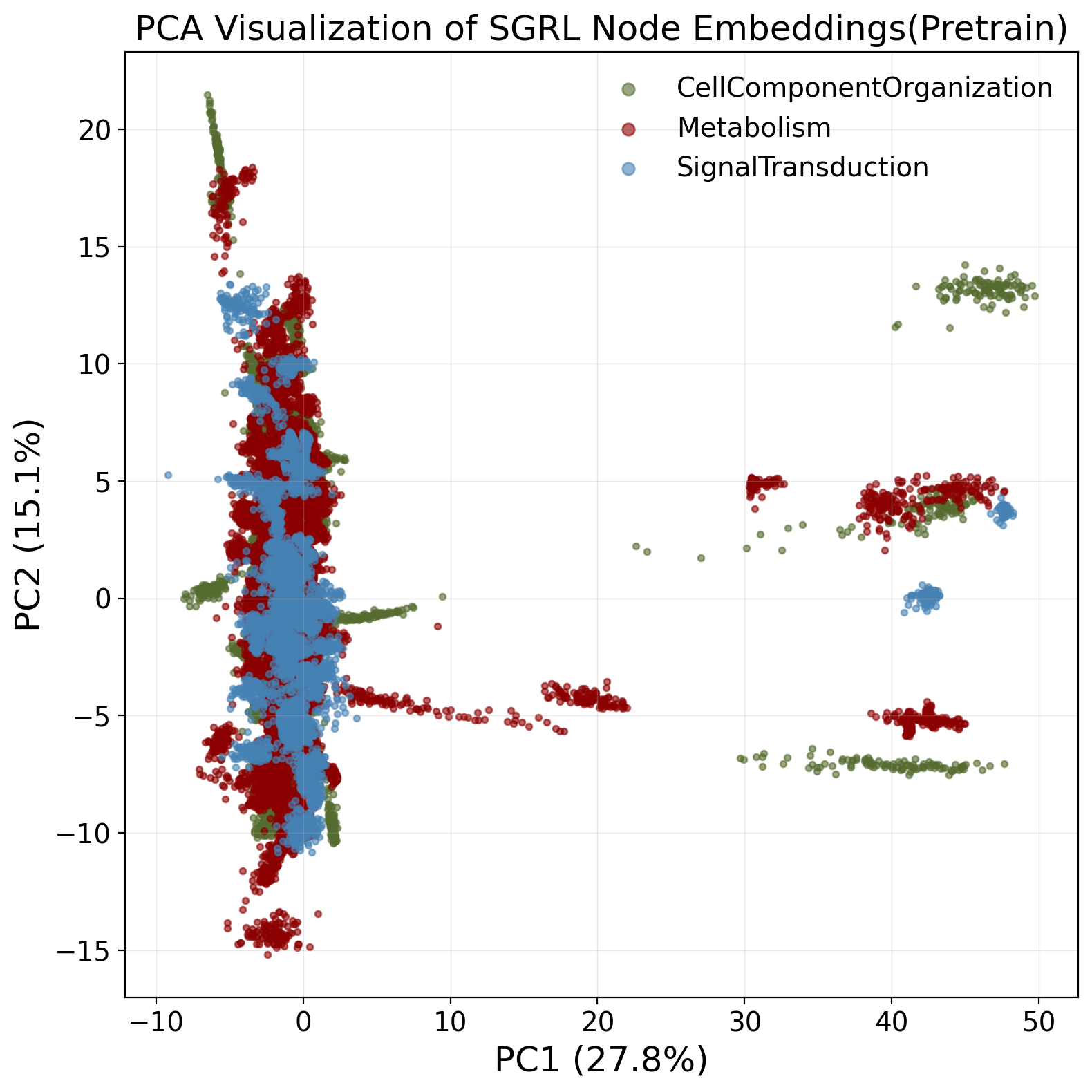}&\includegraphics[width=0.3\linewidth]{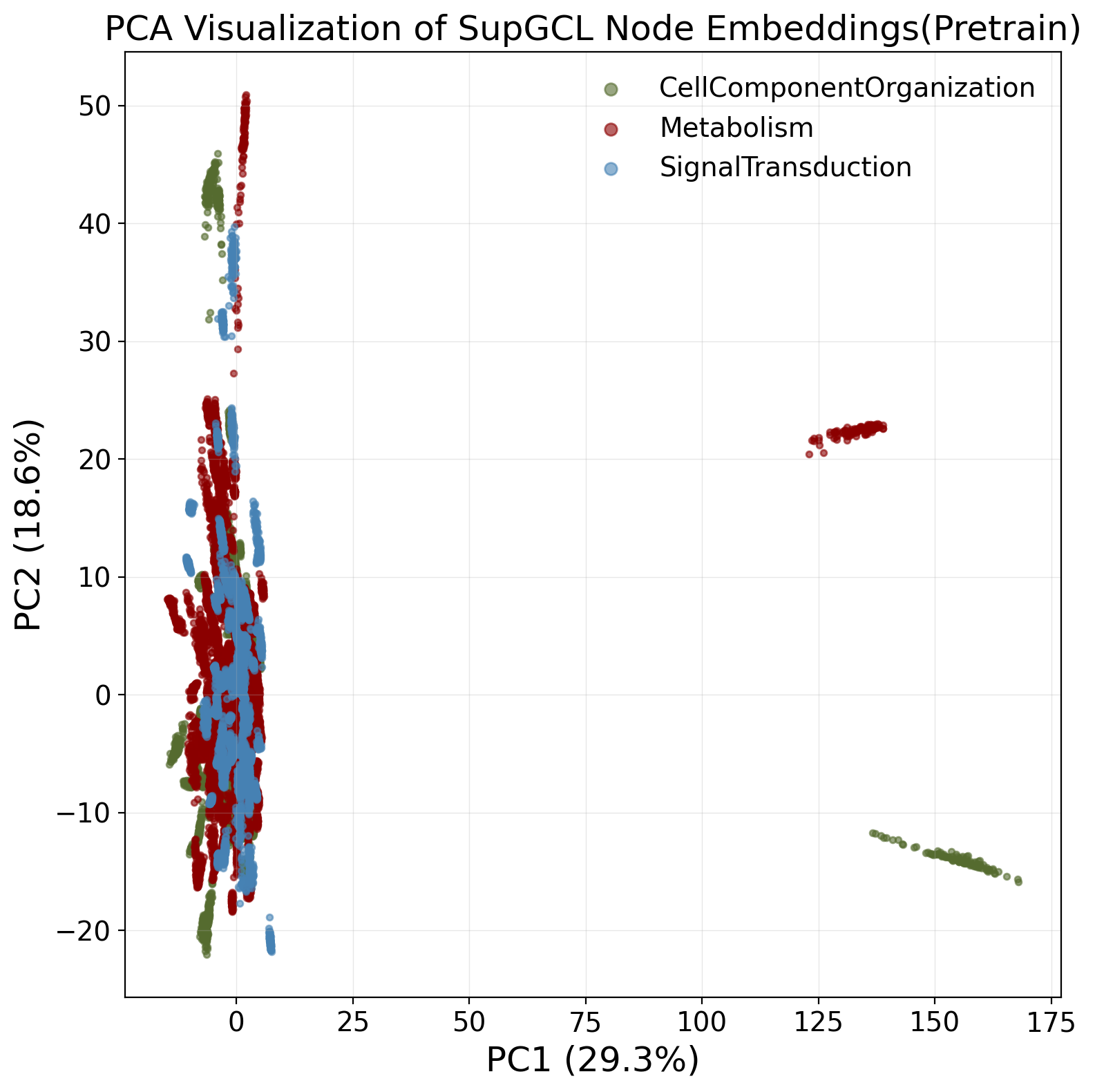} \\
    \text{SGRL}&\text{SupGCL} 
    \end{tabular}
    
    \caption{PCA analysis of the latent spaces of pre-trained models.}
    \label{fig:zeroshot_PCA_analysis}
\end{figure}

\subsection{Performance Evaluation across Different Embedding Dimensions}
Finally, we investigated the effect of varying embedding dimensions on performance.
Figure~\ref{fig:feature_dimention} presents the performance metrics and their corresponding standard deviations across 13 tasks for embedding dimensions of $\{8, 16, 32, 64\}$.
Excluding GraphCL, which exhibited instability in generating stable latent spaces, the other five methods showed only marginal performance gains when the embedding dimension was increased from 32 to 64.
Furthermore, the proposed method consistently achieved high performance across all tasks and embedding dimensions, experimentally demonstrating its superiority over existing representation learning approaches for biological downstream tasks.

\begin{figure}[htbp]
    \centering
    \includegraphics[width = 0.85\linewidth]{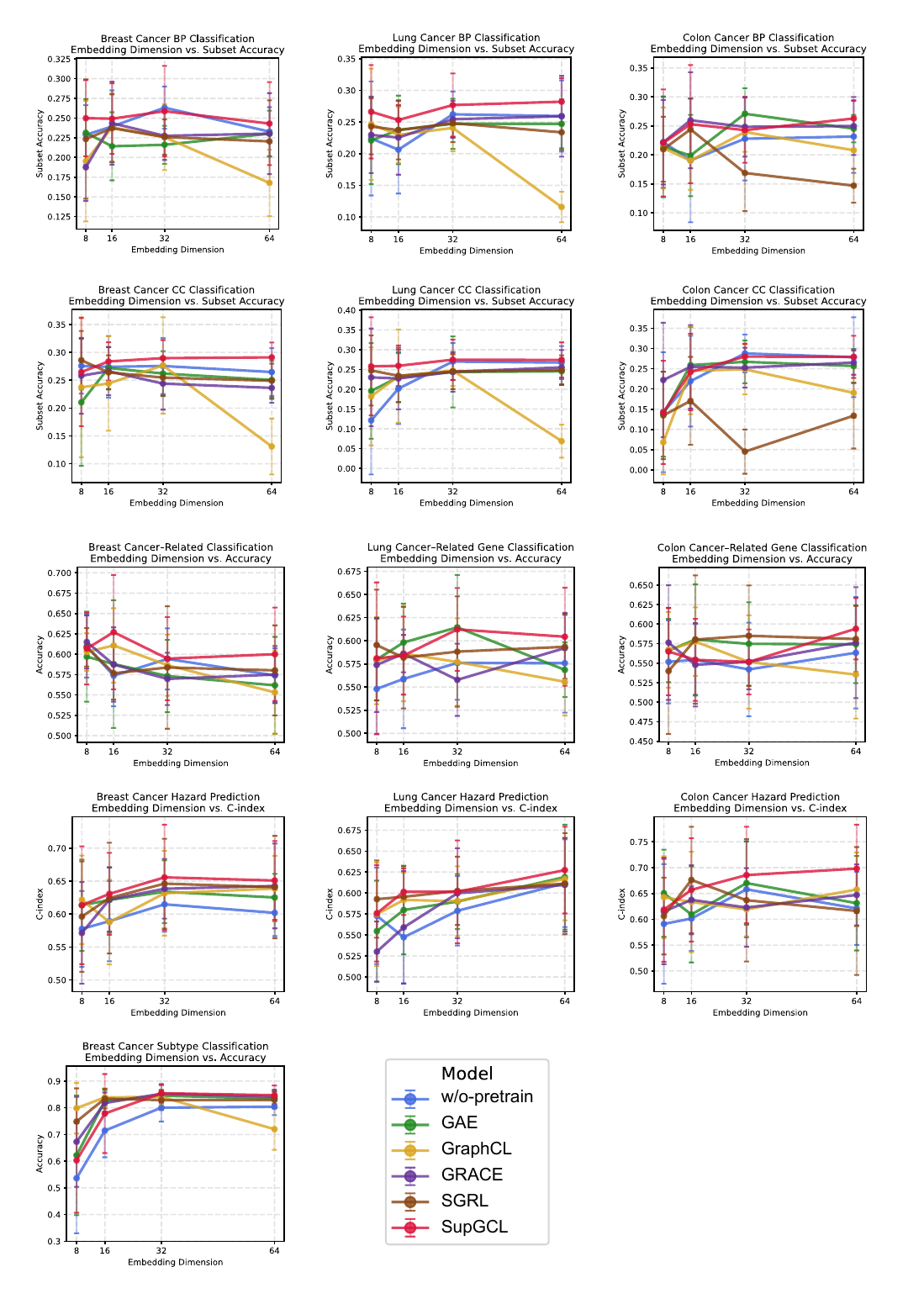} 
    \caption{Embedding dimension analysis. This figure shows the performance changes across 13 tasks as the embedding dimension varies. The left column shows the results for breast cancer, the center column for lung cancer, and the right column for colorectal cancer. The first row presents the subset accuracy of BP classification, the second row shows the subset accuracy of CC classification, the third row displays the accuracy of cancer-related gene classification, the fourth row indicates the C-index for hazard prediction, and the fifth row shows the results of subtype classification.}
    \label{fig:feature_dimention}
\end{figure}

\subsection{Statistical Significance Testing}\label{App:p-values}
We further conducted statistical significance testing for the performance metrics across downstream tasks. 
Specifically, we computed Bonferroni-corrected p-values for pairwise comparisons between our proposed method SupGCL and five baseline methods, using Student's t-test with a significance level of 5\%. 
While most comparisons did not reveal statistically significant differences, we observed significant improvements over GraphCL and SGRL in a subset of node-level tasks. 
Table~\ref{tab:significant-p-values} reports the tasks where significant differences were found, along with the corresponding p-values.

\begin{table}[ht]
\centering
\caption{Comparison of SupGCL with other methods using Bonferroni-corrected p-values.}
\begin{tabular}{lllr}
\toprule
\textbf{Task} & \textbf{Cancer Subtype} & \textbf{Compared Method} & \textbf{p-value} \\
\midrule
BP   & Lung        & GraphCL & $1.88 \times 10^{-3}$ \\
BP   & Colorectal  & SGRL    & $1.65 \times 10^{-2}$ \\
CC   & Breast      & GraphCL & $1.43 \times 10^{-2}$ \\
CC   & Lung        & GraphCL & $4.01 \times 10^{-3}$ \\
\bottomrule
\end{tabular}
\label{tab:significant-p-values}
\end{table}

\subsection{Evaluation on Link Prediction}

In addition to node-level and graph-level downstream tasks, we also evaluated performance on an edge-level task, namely link prediction. 
While the primary goal of SupGCL is to perform representation learning on already constructed GRNs rather than GRN inference itself, it is nevertheless possible to blind edges during training and use the learned representations to predict the existence of unseen edges. 
This provides a useful auxiliary evaluation of the learned embeddings.

We conducted experiments on the breast cancer GRN. 
Edges present in the GRN were treated as positive examples, and absent edges were treated as negatives. 
From these, 500 positive and 500 negative samples were drawn to form 1000 test cases, which were then split into train:test = 8:2 with a fixed seed. 
This procedure was repeated across 10 different seeds (0--9). 
Node embeddings were fed into a two-layer MLP head, and the inner product between resulting vectors was passed through a sigmoid function to predict edge existence. 
Binary cross-entropy (BCE) was used as the loss function, with target edges masked during encoding to simulate inference of unseen edges. 
Optimization settings were aligned with those of other tasks and kept consistent across all methods. 
We report Accuracy and F1 score as evaluation metrics, averaged over the 10 seeds with standard deviations.

Recent studies have also proposed supervised GNN-based approaches specifically designed for GRN inference. 
A representative example is GCLink~\citep{yu2025gclink}, which directly learns from ground-truth transcriptional networks. 
In contrast, our study focuses on GRNs estimated from gene expression via Bayesian network inference, which encode statistical causal relationships and are not restricted to transcription factor–regulon interactions. 
For reference, we additionally report results of applying a GCLink-style model to our setting. 
Note that the GCLink results in Table~\ref{tab:link-prediction} correspond not to the original setup (supervised by ground-truth transcriptional networks), but to a modified variant adapted to our estimated GRNs.

The results (Table~\ref{tab:link-prediction}) show that SupGCL outperforms existing baselines on edge-level GRN inference tasks as well. 
In particular, SupGCL achieved the highest Accuracy and F1 score compared to all other models, including GRACE. 
Performance was also comparable to or slightly better than GCLink. 
These findings suggest that although SupGCL is primarily designed for representation learning, it remains competitive for GRN inference tasks such as link prediction.

\begin{table}[H]
  \centering
  \small
  \caption{Performance on link prediction (Breast Cancer GRN).}
  \label{tab:link-prediction}
  \begin{tabular}{lcc}
    \toprule
    \textbf{Method} & \textbf{Accuracy} & \textbf{F1 score} \\
    \midrule
    w/o pretrain & $0.730 \pm 0.021$ & $0.843 \pm 0.014$ \\
    GAE          & $0.743 \pm 0.024$ & $0.846 \pm 0.014$ \\
    GraphCL      & $0.714 \pm 0.031$ & $0.824 \pm 0.022$ \\
    GRACE        & $0.757 \pm 0.031$ & $0.848 \pm 0.016$ \\
    SGRL         & $0.741 \pm 0.032$ & $0.835 \pm 0.023$ \\
    GCLink       & $0.756 \pm 0.031$ & $0.853 \pm 0.018$ \\
    SupGCL (Ours) & $\mathbf{0.763 \pm 0.028}$ & $\mathbf{0.855 \pm 0.018}$ \\
    \bottomrule
  \end{tabular}
\end{table}

\subsection{Additional Ablation Studies}\label{App:addition_ablation}

To complement the breast cancer results reported in Table~\ref{tab:tau_a-analysis} of the main text, we report additional ablation results on lung and colon cancer datasets under different values of the augmentation-level temperature parameter $\tau_a$.
The parameter $\tau_a$ controls the strength of the supervised augmentation information induced by the reference model.
As $\tau_a$ increases, the influence of the supervised information is weakened.
The limiting case $\tau_a = \infty$ corresponds to removing augmentation labels from the supervised contrastive objective.
The results in Tables~\ref{tab:performance-taua-lung} and~\ref{tab:performance-taua-colon} show that the performance at $\tau_a = \infty$ is comparable to that of GRACE.
This trend is consistent with the theoretical observation that SupGCL approaches an unsupervised node-level graph contrastive learning objective as $\tau_a$ increases.

\begin{table}[H]
  \centering
  \small
  \caption{Lung Cancer Performance comparison across BP, CC, REL, and HAZARD.}
  \label{tab:performance-taua-lung}
  \begin{tabular}{lcccc}
    \toprule
    \textbf{Method} & \textbf{BP} & \textbf{CC} & \textbf{REL} & \textbf{HAZARD} \\
    \midrule
    $\tau = 0.1$     & $0.262 \pm 0.043$ & $0.271 \pm 0.053$ & $0.603 \pm 0.051$ & $0.625 \pm 0.066$ \\
    $\tau = 0.25$    & $0.282 \pm 0.037$ & $0.274 \pm 0.044$ & $0.604 \pm 0.053$ & $0.627 \pm 0.051$ \\
    $\tau = 0.5$     & $0.263 \pm 0.036$ & $0.279 \pm 0.067$ & $0.624 \pm 0.069$ & $0.611 \pm 0.033$ \\
    $\tau = 1.0$     & $0.273 \pm 0.051$ & $0.278 \pm 0.050$ & $0.611 \pm 0.033$ & $0.614 \pm 0.041$ \\
    $\tau = 2.0$     & $0.267 \pm 0.067$ & $0.267 \pm 0.034$ & $0.603 \pm 0.043$ & $0.607 \pm 0.040$ \\
    $\tau = \infty$  & $0.253 \pm 0.052$ & $0.254 \pm 0.027$ & $0.589 \pm 0.023$ & $0.596 \pm 0.041$ \\
    GRACE            & $0.259 \pm 0.063$ & $0.255 \pm 0.043$ & $0.592 \pm 0.038$ & $0.609 \pm 0.055$ \\
    \bottomrule
  \end{tabular}
\end{table}

\begin{table}[H]
  \centering
  \small
  \caption{Colon Cancer Performance comparison across BP, CC, REL, and HAZARD.}
  \label{tab:performance-taua-colon}
  \begin{tabular}{lcccc}
    \toprule
    \textbf{Method} & \textbf{BP} & \textbf{CC} & \textbf{REL} & \textbf{HAZARD} \\
    \midrule
    $\tau = 0.1$     & $0.267 \pm 0.041$ & $0.262 \pm 0.042$ & $0.594 \pm 0.050$ & $0.682 \pm 0.074$ \\
    $\tau = 0.25$    & $0.262 \pm 0.030$ & $0.279 \pm 0.052$ & $0.594 \pm 0.039$ & $0.698 \pm 0.085$ \\
    $\tau = 0.5$     & $0.278 \pm 0.037$ & $0.278 \pm 0.040$ & $0.604 \pm 0.067$ & $0.728 \pm 0.084$ \\
    $\tau = 1.0$     & $0.265 \pm 0.033$ & $0.270 \pm 0.044$ & $0.588 \pm 0.077$ & $0.681 \pm 0.116$ \\
    $\tau = 2.0$     & $0.276 \pm 0.033$ & $0.262 \pm 0.034$ & $0.585 \pm 0.065$ & $0.673 \pm 0.046$ \\
    $\tau = \infty$  & $0.250 \pm 0.041$ & $0.252 \pm 0.035$ & $0.571 \pm 0.061$ & $0.652 \pm 0.043$ \\
    GRACE            & $0.249 \pm 0.050$ & $0.265 \pm 0.030$ & $0.576 \pm 0.071$ & $0.647 \pm 0.059$ \\
    \bottomrule
  \end{tabular}
\end{table}

\section{Cross-Domain Training}\label{App:Cross_domain}
\subsection{Cross-Domain Evaluation Results}

Our proposed framework ultimately aims to enable cancer-type–agnostic representation learning of GRNs. 
SupGCL is scale-free with respect to both the gene set and the set of knockdown perturbations, and can in principle be applied to diverse GRNs derived from different cancer types. 
Accordingly, pre-training remains feasible even when the patient graphs $\mathcal{G}$ and the supervision graphs $\mathcal{H}_a$ originate from different cancer types. 
Downstream tasks can also be learned when the cancer types used for pre-training and fine-tuning differ, and it is further possible to apply a model fine-tuned on one cancer type directly to outcome prediction in another. 
However, the degree to which SupGCL maintains predictive accuracy under such cross-domain conditions requires careful empirical evaluation.

To investigate cancer-type dependency, we conducted three types of cross-domain experiments:
\begin{enumerate}[(i)]
    \item \textbf{Pre-training phase}: a setting in which knockdown GRNs from a different cancer type than the patient GRNs are used as supervision.
    \item \textbf{Fine-tuning phase}: a setting in which a model pre-trained on one cancer type is fine-tuned on data from another cancer type.
    \item \textbf{OOD setting}: a setting in which a model pre-trained and fine-tuned on one cancer type is directly applied to patient GRNs from an unseen cancer type.
\end{enumerate}
The configurations and results for SupGCL are summarized in Table~\ref{tab:cross-domain-performance}.

In addition, to examine whether this behavior is specific to SupGCL or common to GCL methods more broadly, we performed the same cross-domain analysis with GRACE, a representative general-purpose GCL model. 
For a fair comparison, we aligned the settings for (ii) cross-domain fine-tuning and (iii) OOD application with those used for SupGCL and evaluated both methods under identical conditions. 
The corresponding results for GRACE are reported in Table~\ref{tab:cross-domain-grace}.

For SupGCL, experiment (i) shows that using breast knockdown data as supervision for lung or colorectal patient GRNs yields performance comparable to the in-domain setting, indicating that SupGCL can generalize across cancer types at the pre-training stage when only the teacher GRN domain is changed.
In experiment (ii), performance is largely preserved when using lung as the pre-training source and breast as the target for fine-tuning, whereas using colorectal as the source leads to a noticeable degradation. 
A similar trend is observed in experiment (iii), where directly applying a breast-trained model to lung or colorectal patients results in a substantial drop in performance.

GRACE exhibits similar degradation patterns under the same cross-domain fine-tuning and OOD configurations (Table~\ref{tab:cross-domain-grace}). 
In particular, cross-domain fine-tuning (Breast$\rightarrow$Breast vs.\ Lung$\rightarrow$Breast or Colorectal$\rightarrow$Breast) results in only moderate changes in performance, whereas OOD application (Breast / Lung, Breast / Colorectal) leads to pronounced decreases in both hazard prediction and BP classification accuracy. 
This suggests that the observed behavior is not a limitation unique to SupGCL, but rather reflects a broader challenge for GCL methods on GRNs: models fine-tuned on a single cancer type are not robust when naively transferred to unseen cancer types.

Taken together, these results indicate that SupGCL can maintain reasonable generalization when using knockdown GRNs from different cancer types during pre-training, but—much like GRACE—suffers from performance deterioration when the cancer type changes at the fine-tuning and inference stages. 
In particular, models fine-tuned on a single cancer type lack robustness when directly applied to unseen cancer types, highlighting the need for learning strategies that more explicitly promote cross-cancer generalization.

\begin{table}[H]
  \caption{Cross-domain performance of SupGCL across three evaluation settings. 
(i) \textbf{Pre-training phase}: 
``original'' indicates settings in which the cancer types of the patient GRNs and the teacher knockdown GRNs coincide (e.g., Lung / Lung KD, Colorectal / Colorectal KD). 
``cross domain 1, 2'' denote settings in which the cancer types of the patient and teacher GRNs differ, namely Lung / Breast KD and Colorectal / Breast KD, respectively. 
(ii) \textbf{Fine-tuning phase}: 
``original'' indicates a model pre-trained on breast cancer and fine-tuned on breast patient GRNs (Breast$\rightarrow$Breast). 
``cross domain 1, 2'' indicate models pre-trained on lung or colorectal GRNs and then fine-tuned on breast patient GRNs (\texttt{Lung}$\rightarrow$Breast and \texttt{Colorectal}$\rightarrow$Breast). 
(iii) \textbf{OOD setting}: 
``original'' indicates that the cancer types for training (pre-training and fine-tuning) and prediction coincide (e.g., Lung / Lung, Colorectal / Colorectal). 
``cross domain 1, 2'' indicate settings in which a model trained on breast cancer is directly applied to lung or colorectal patients (Breast / Lung, Breast / Colorectal).}
  \label{tab:cross-domain-performance}
\begin{center}
    \begin{small}
        \begin{sc}

  \setlength{\tabcolsep}{6pt}
  \renewcommand{\arraystretch}{1.15}
  \begin{tabular}{llcc}
    \toprule
    \textbf{Setting} & \textbf{Configuration} & \textbf{Hazard (c\text{-}index)} & \textbf{BP (Subset Acc.)} \\
    \midrule
    \multicolumn{4}{@{}l@{}}{\textbf{(i) Cross Domain at Pre-training Phase: patient / cell line (teacher graph)}} \\
    \cmidrule(l{0pt}r{0pt}){1-4}
     & original: Lung / Lung KD
       & $0.627 \pm 0.051$ & $0.282 \pm 0.037$ \\
     & cross domain 1: Lung / Breast KD
       & $0.633 \pm 0.069$ & $0.270 \pm 0.060$ \\
     & original: Colorectal / Colorectal KD
       & $0.698 \pm 0.085$ & $0.262 \pm 0.030$ \\
     & cross domain 2: Colorectal / Breast KD
       & $0.687 \pm 0.092$ & $0.257 \pm 0.044$ \\
    \midrule
    \multicolumn{4}{@{}l@{}}{\textbf{(ii) Cross Domain at Fine-tuning Phase: pre-trained model $\rightarrow$ downstream task}} \\
    \cmidrule(l{0pt}r{0pt}){1-4}
     & original: model \texttt{Breast}$\rightarrow$Breast
       & $0.650 \pm 0.059$ & $0.243 \pm 0.052$ \\
     & cross domain 1: model \texttt{Lung}$\rightarrow$Breast
       & $0.654 \pm 0.075$ & $0.248 \pm 0.043$ \\
     & cross domain 2: model \texttt{Colorectal}$\rightarrow$Breast
       & $0.632 \pm 0.072$ & $0.249 \pm 0.030$ \\
    \midrule
    \multicolumn{4}{@{}l@{}}{\textbf{(iii) OOD: cancer type of fine-tuning / prediction}} \\
    \cmidrule(l{0pt}r{0pt}){1-4}
     & original: Lung / Lung
       & $0.627 \pm 0.051$ & $0.282 \pm 0.037$ \\
     & cross domain 1: Breast / Lung
       & $0.603$ & $0.163$ \\
     & original: Colorectal / Colorectal
       & $0.698 \pm 0.085$ & $0.262 \pm 0.030$ \\
     & cross domain 2: Breast / Colorectal
       & $0.583$ & $0.162$ \\
    \bottomrule
  \end{tabular}
        \end{sc}
    \end{small}
\end{center}
\end{table}

\begin{table}[H]
  \caption{Cross-domain performance of GRACE}
  \label{tab:cross-domain-grace}
\begin{center}
    \begin{small}
        \begin{sc}

  \begin{tabular}{llcc}
    \toprule
    \textbf{Setting} & \textbf{Configuration} & \textbf{Hazard (c\text{-}index)} & \textbf{BP (Subset Acc.)} \\
    \midrule
    \multicolumn{4}{@{}l@{}}{\textbf{(ii) Cross Domain at Fine-tuning Phase: pre-trained model $\rightarrow$ downstream task}} \\
    \cmidrule(l{0pt}r{0pt}){1-4}
     & original: model \texttt{Breast}$\rightarrow$Breast
       & $0.642 \pm 0.064$ & $0.230 \pm 0.051$ \\
     & cross domain 1: model \texttt{Lung}$\rightarrow$Breast
       & $0.610 \pm 0.041$ & $0.225 \pm 0.033$ \\
     & cross domain 2: model \texttt{Colorectal}$\rightarrow$Breast
       & $0.615 \pm 0.063$ & $0.232 \pm 0.032$ \\
    \midrule
    \multicolumn{4}{@{}l@{}}{\textbf{(iii) OOD: cancer type of fine-tuning / prediction}} \\
    \cmidrule(l{0pt}r{0pt}){1-4}
     & original: Lung / Lung
       & $0.609 \pm 0.055$ & $0.259 \pm 0.063$ \\
     & cross domain 1: Breast / Lung
       & $0.534$ & $0.148$ \\
     & original: Colorectal / Colorectal
       & $0.647 \pm 0.059$ & $0.249 \pm 0.050$ \\
     & cross domain 2: Breast / Colorectal
       & $0.562$ & $0.133$ \\
    \bottomrule
  \end{tabular}
        \end{sc}
    \end{small}
\end{center}
\end{table}

\subsection{Toward Pan-Cancer Representation Learning}

The results in Section~H.1 show that both SupGCL and GRACE suffer from performance degradation when the cancer type differs at the fine-tuning and inference stages. 
This suggests that pre-training on a single cancer type has inherent limitations in terms of generalizability and motivates learning strategies that incorporate more diverse biological backgrounds.

To explore the potential of more cancer-type–agnostic representations, we conducted a preliminary pan-cancer pre-training experiment in which GRNs from three cancer types (breast, lung, and colorectal) were jointly used during pre-training. 
In this setting, only the pre-training phase is multi-cancer (pan-cancer), while fine-tuning and downstream evaluation are restricted to breast cancer data. 
We then compared this pan-cancer pre-training with the single-cancer pre-training model that uses only breast GRNs.

The results are summarized in Table~\ref{tab:pancancer}. 
We found that pan-cancer pre-training does not cause a substantial drop in performance compared with breast-only pre-training. 
This indicates that integrating multiple cancer types at the pre-training stage can retain, and potentially enrich, the learned representations without harming downstream performance on a specific cancer type. 
We expect that extending this approach to include a broader set of cancer types and combining it with domain adaptation techniques could further enhance cross-cancer generalization in future work.

\begin{table}[H]
  \caption{Pan-cancer pre-training vs. single-cancer pre-training (downstream task: breast cancer).}
  \label{tab:pancancer}
\begin{center}
    \begin{small}
        \begin{sc}
  \begin{tabular}{lcc}
    \toprule
    \textbf{Model Setting} & \textbf{Hazard (c-index)} & \textbf{BP (Subset Accuracy)} \\
    \midrule
    Breast $\rightarrow$ Breast & $0.650 \pm 0.059$ & $0.243 \pm 0.052$ \\
    Pan-cancer $\rightarrow$ Breast & $0.649 \pm 0.048$ & $0.241 \pm 0.052$ \\
    \bottomrule
  \end{tabular}            
        \end{sc}
    \end{small}
\end{center}
\end{table}

\section{Robustness Analysis of SupGCL}~\label{app:robastness}

In this section, we evaluate the robustness of the proposed SupGCL method. 
We analyze three aspects: 1. scaling with respect to the number of pretraining samples, 2. the impact of the sample size of supervision graphs, and 3. robustness to noise in the estimated GRNs.

\subsection{Scaling with Respect to Pretraining Data Size}
To assess the effect of pretraining sample size, we progressively reduced the number of breast cancer patient GRNs and evaluated performance on downstream tasks. 
Specifically, we compared the original dataset with conditions using $1/2$ and $1/4$ of the samples, evaluating hazard prediction (graph-level task) and BP classification (node-level task) (Table~\ref{tab:scaling-patient-sample}).

The results show a consistent performance improvement in hazard prediction with increasing data size. 
In contrast, the impact of sample size was marginal for node-level BP classification, suggesting that node-level GCL methods, including SupGCL, can effectively learn node representations even from a limited number of graphs (in some cases a single graph).

\begin{table}[H]
  \centering
  \small
  \caption{Performance as a function of patient sample size (Breast Cancer GRN).}
  \label{tab:scaling-patient-sample}
  \begin{tabular}{lcc}
    \toprule
    \textbf{Sample Setting} & \textbf{Hazard (c-index)} & \textbf{BP (Subset Accuracy)} \\
    \midrule
    original (N=1092) & $0.650 \pm 0.059$ & $0.243 \pm 0.052$ \\
    1/2 sample (N=546) & $0.640 \pm 0.040$ & $0.243 \pm 0.026$ \\
    1/4 sample (N=273) & $0.631 \pm 0.045$ & $0.247 \pm 0.038$ \\
    \bottomrule
  \end{tabular}
\end{table}

\subsection{Effect of Supervision Graph Sample Size}
Next, we examined the effect of reducing the number of knockdown samples used to construct supervision graphs. 
For breast cancer cell lines, we randomly subsampled the knockdown dataset from $8793$ (original) $\rightarrow$ $4397$ ($\simeq 1/2$ original) $\rightarrow$ $2199$ ($\simeq 1/4$ original) and evaluated downstream task performance (Table~\ref{tab:scaling-knockdown-sample}).

Even with a reduction to one quarter of the original size, performance degradation was minimal. 
SupGCL targets GRNs with 975 genes, meaning that the 1/4 setting ($2199$ samples) corresponds to approximately two experiments per gene. 
This indicates that the method is robust to limited supervision and does not easily overfit, making it suitable for realistic data conditions.

\begin{table}[H]
  \centering
  \small
  \caption{Performance under reduced supervision graph sample size (Breast Cancer GRN).}
  \label{tab:scaling-knockdown-sample}
  \begin{tabular}{lcc}
    \toprule
    \textbf{Supervision Sample Setting} & \textbf{Hazard (c-index)} & \textbf{BP (Subset Accuracy)} \\
    \midrule
    original (N=8793) & $0.650 \pm 0.059$ & $0.243 \pm 0.052$ \\
    1/2 sample (N=4397) & $0.656 \pm 0.062$ & $0.239 \pm 0.034$ \\
    1/4 sample (N=2199) & $0.644 \pm 0.053$ & $0.245 \pm 0.043$ \\
    \bottomrule
  \end{tabular}
\end{table}

\subsection{Robustness to Noise in Estimated GRNs}
In our framework, edge contribution scores derived from estimated GRNs are used as edge features for the GNN. 
To evaluate how robust the proposed method is to noise in these estimated GRNs, we conducted additional experiments.

We adopt Bayesian network inference for GRN estimation (see Appendix~D for details), which is known to be sensitive to initialization and can introduce errors. 
To mitigate this, we estimate GRNs 1000 times and apply frequency-based filtering of edges to reduce noise. 
In the main setting, we apply a uniform cutoff of 5\% for both the patient GRN $\mathcal{G}$ and the teacher GRN $\mathcal{H}$. 
For comparison, we additionally evaluate thresholds of 3\% and 10\% for both graphs, and also consider an asymmetric configuration in which we retain only higher-confidence edges in the patient GRN while allowing lower-confidence edges to remain in the teacher GRN, i.e., $(\mathcal{G}:10\%, \mathcal{H}:3\%)$.
The results are summarized in Table~\ref{tab:noise-robustness}, and the statistics of each GRN are presented in Table~\ref{tab:noisy-GRN statistics}.

Overall, differences in downstream performance across cutoff values are small, indicating that SupGCL can pretrain robustly despite uncertainty inherent in GRN estimation. 
Moreover, even when the teacher GRN contains a non-negligible amount of noise (e.g., in the $(\mathcal{G}:10\%, \mathcal{H}:3\%)$ setting), the performance of SupGCL does not substantially degrade, suggesting that the method also exhibits robustness to noise in the perturbation graphs used as supervision.

\begin{table}[H]
  \centering
  \small
  \caption{Downstream performance under different filtering thresholds in GRN estimation (breast cancer GRN).}
  \label{tab:noise-robustness}
  \begin{tabular}{lcc}
    \toprule
    \textbf{Setting} & \textbf{Hazard (C-index)} & \textbf{BP (Subset Accuracy)} \\
    \midrule
    SupGCL ($\mathcal{G}$:3\%, $\mathcal{H}$:3\%)  
      & $0.665 \pm 0.059$ & $0.238 \pm 0.029$ \\
    SupGCL ($\mathcal{G}$:5\%, $\mathcal{H}$:5\%, original) 
      & $0.650 \pm 0.059$ & $0.243 \pm 0.052$ \\
    SupGCL ($\mathcal{G}$:10\%, $\mathcal{H}$:10\%) 
      & $0.646 \pm 0.039$ & $0.244 \pm 0.029$ \\
    \midrule
    SupGCL ($\mathcal{G}$:10\%, $\mathcal{H}$:3\%)  
      & $0.649 \pm 0.055$ & $0.242 \pm 0.045$ \\
    \bottomrule
  \end{tabular}
\end{table}

\begin{table}[ht]
\centering
\small
\caption{Statistics of GRNs constructed with different edge cutoff thresholds}
\label{tab:noisy-GRN statistics}
\begin{tabular}{lrrrrrr}
\hline
 & \multicolumn{2}{c}{3\%} & \multicolumn{2}{c}{5\%} & \multicolumn{2}{c}{10\%} \\
 & TCGA & LINCS & TCGA & LINCS & TCGA & LINCS \\
\hline
number of nodes & 975 & 975 & 975 & 975 & 975 & 975 \\
number of edges & 15405 & 12057 & 13170 & 11209 & 10201 & 10061 \\
average degree   & 15.8   & 12.366154   & 13.507692   & 11.49641    & 10.462564   & 10.318974 \\
\hline
\end{tabular}
\end{table}

\section{Biological Enrichment Analysis Using Latent Representations}\label{app:enrichment}

In this section, we investigate the biological interpretability and validity of the gene embeddings learned by GCL models through enrichment analyses based on Gene Ontology (GO) and KEGG. 
We focus on breast cancer, whose molecular mechanisms are well characterized, and examine the embeddings obtained by our proposed method and baseline approaches. 
The analysis consists of three steps: (a) clustering in the latent space, (b) GO enrichment analysis for each cluster, and (c) KEGG pathway enrichment analysis.

\subsection{Clustering in the Latent Space}
We applied k-means clustering ($K=7$) to the gene embeddings obtained from each method and performed GO:BP (Biological Process) and KEGG enrichment analyses using g:Profiler for each cluster. 
Since embeddings are computed for each node in each patient graph, we used patient-averaged gene embeddings for clustering. 
The number of clusters ($K=7$) was determined based on the elbow criterion (see Fig.~\ref{fig:elbow_gene_level_embedding}).

\subsubsection{Clustering Results}
The clustering results are shown in Table~\ref{tab:clustering-results} and Fig.~\ref{fig:gene_level_embedding}. 
SupGCL and GAE achieved lower Gini Index values compared to other methods, indicating more balanced cluster formation.

\begin{figure}[t]
    \centering
    \begin{tabular}{ccc}
    \includegraphics[width=0.3\linewidth]{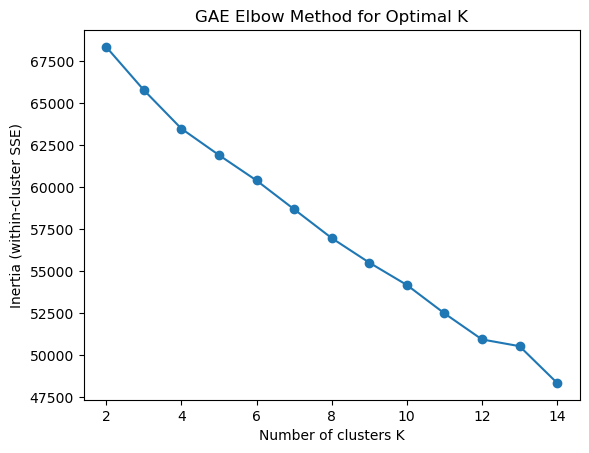}&\includegraphics[width=0.3\linewidth]{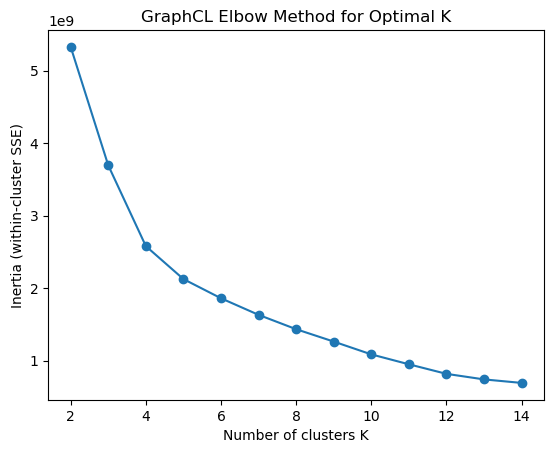}&\includegraphics[width=0.3\linewidth]{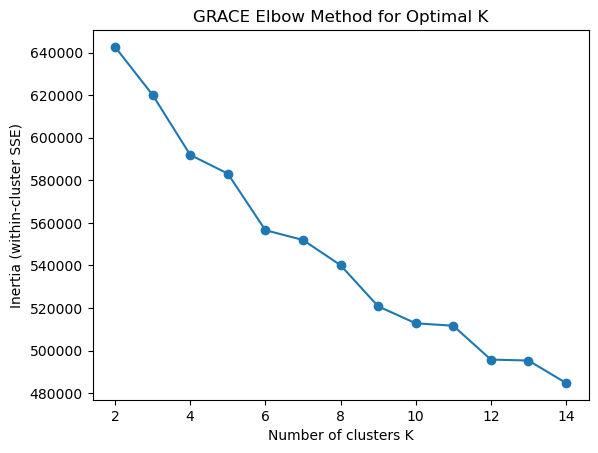} \\
    \text{GAE}&\text{GraphCL}&\text{GRACE}
    \end{tabular}
    \begin{tabular}{cc}
    \includegraphics[width=0.3\linewidth]{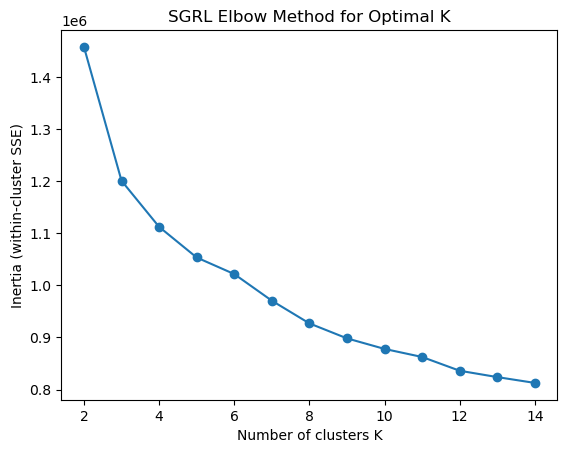}     &\includegraphics[width=0.3\linewidth]{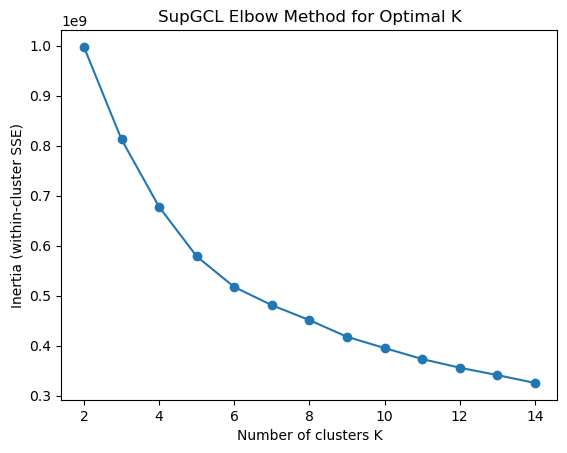} \\
    \text{SGRL}&\text{SupGCL(our)}
    \end{tabular}    
    \caption{Elbows of k-means clustering}
    \label{fig:elbow_gene_level_embedding}
\end{figure}

\begin{figure}[t]
    \centering
    \begin{tabular}{ccc}
    \includegraphics[width=0.3\linewidth]{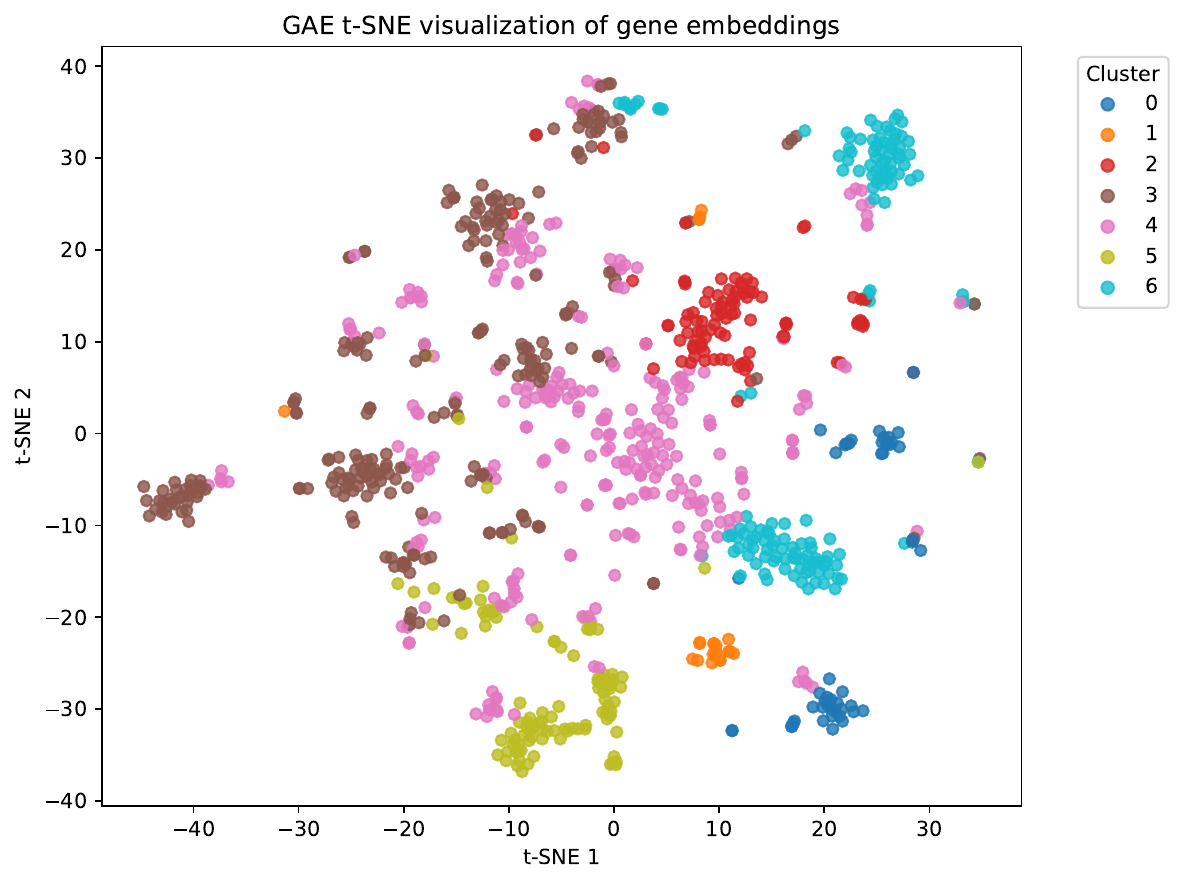}&\includegraphics[width=0.3\linewidth]{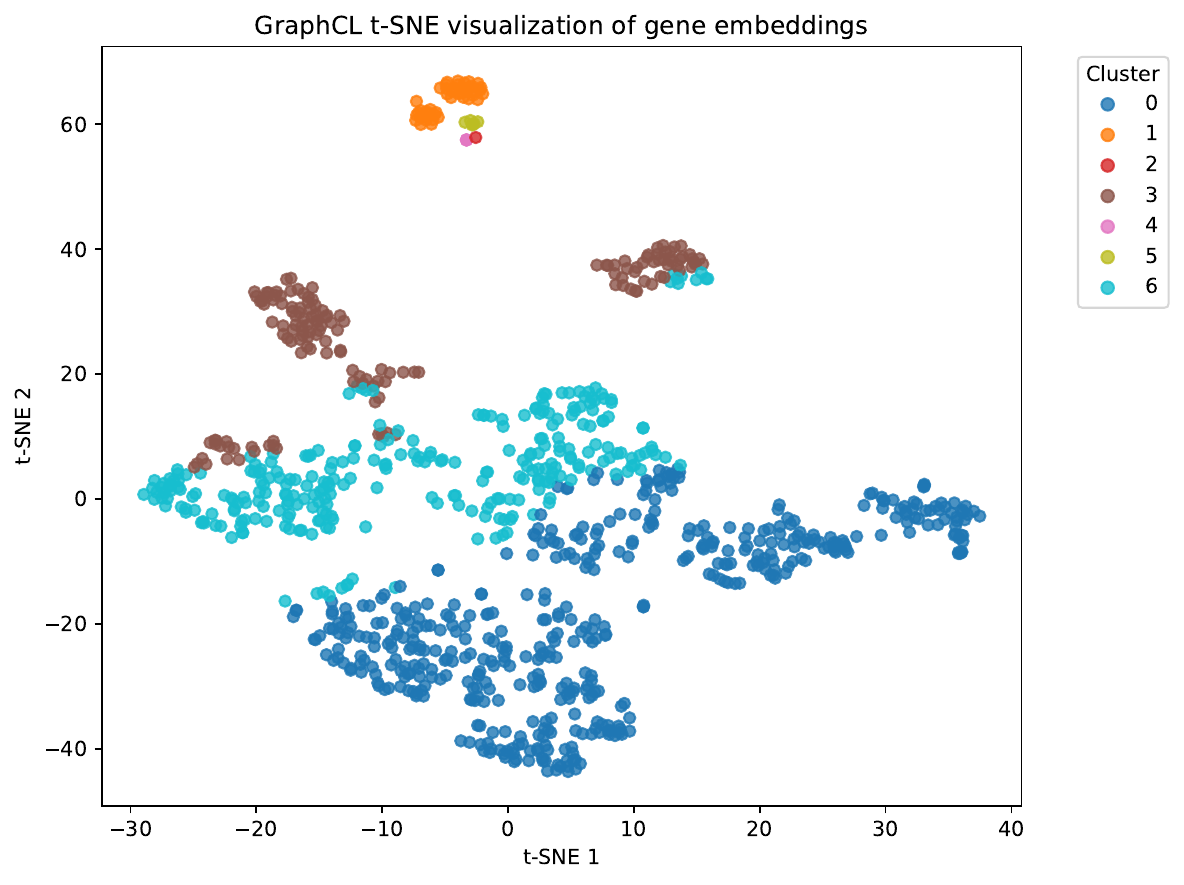}&\includegraphics[width=0.3\linewidth]{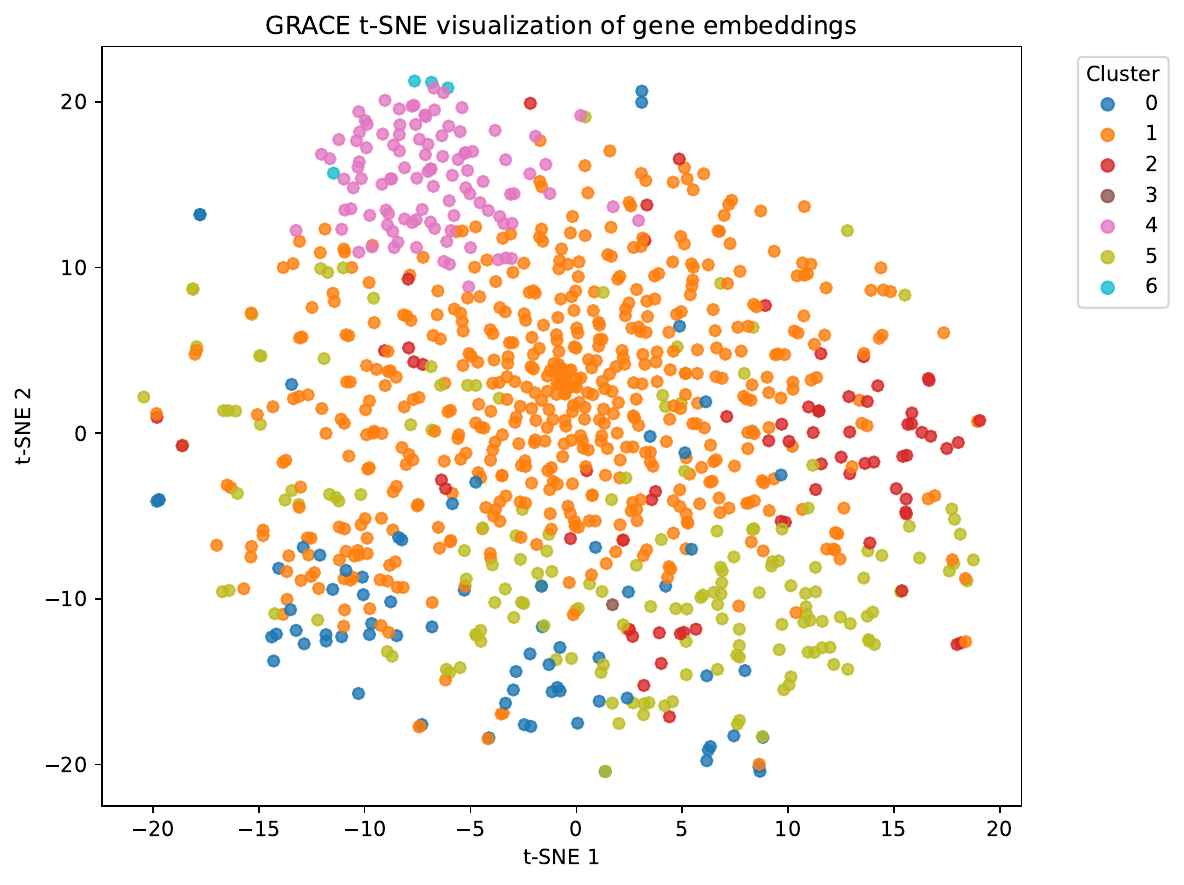} \\
    \text{GAE}&\text{GraphCL}&\text{GRACE}
    \end{tabular}
    \begin{tabular}{cc}
    \includegraphics[width=0.3\linewidth]{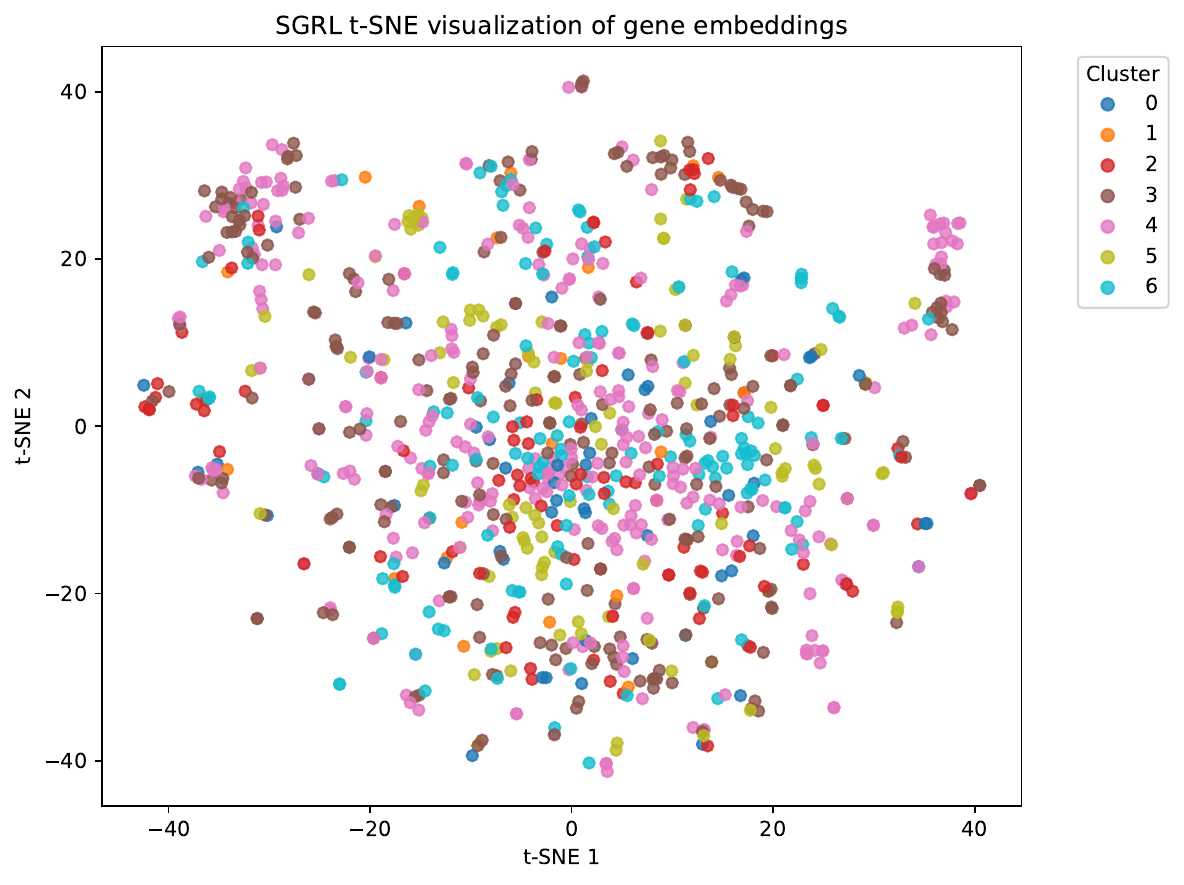}     &\includegraphics[width=0.3\linewidth]{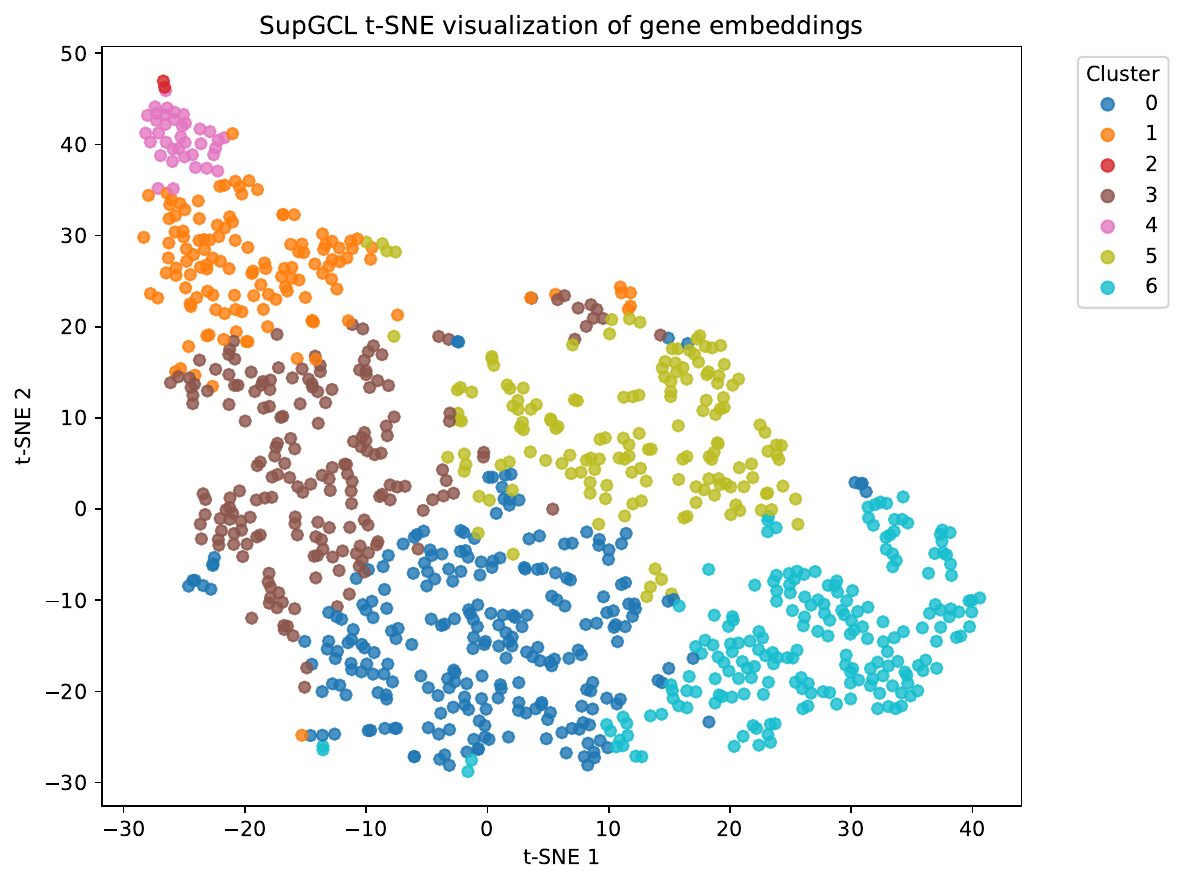} \\
    \text{SGRL}&\text{SupGCL(our)}
    \end{tabular}    
    \caption{Gene-level embedding and the k-means clustering}
    \label{fig:gene_level_embedding}
\end{figure}

\begin{table}[t]
  \centering
  \footnotesize
  \caption{Clustering results (number of genes per cluster) for each method.}
  \label{tab:clustering-results}
  \begin{tabular}{lccccc}
    \toprule
    Cluster & GAE & GraphCL & GRACE & SGRL & SupGCL \\
    \midrule
    Cl 0 & 53  & 482 & 73  & 658 & 251 \\
    Cl 1 & 22  & 44  & 556 & 11  & 130 \\
    Cl 2 & 96  & 1   & 70  & 129 & 2   \\
    Cl 3 & 262 & 151 & 1   & 1   & 192 \\
    Cl 4 & 293 & 3   & 102 & 25  & 37  \\
    Cl 5 & 105 & 6   & 169 & 1   & 160 \\
    Cl 6 & 144 & 288 & 4   & 150 & 203 \\
    \midrule
    Gini Index & 0.375 & 0.632 & 0.594 & 0.699 & \textbf{0.334} \\
    \bottomrule
  \end{tabular}
\end{table}

\subsection{Gene Ontology Enrichment Analysis}

For enrichment analysis, we primarily focus on GAE, a reconstruction-based baseline, and GRACE, which corresponds to an ablation of our proposed method. 
Clusters containing fewer than five genes were excluded from the analysis.

\subsubsection{GO Results}
Table~\ref{tab:go-results} reports the top 5 significantly enriched GO terms for each cluster and method.  

Overall, SupGCL and GAE recovered significant functional modules in 6 out of 7 clusters, whereas GRACE succeeded in only 4 clusters.  
Notably, SupGCL captured distinct separation of autophagy-related processes: regulatory terms (Cl~3) versus execution processes (Cl~6). Cl~0 represented a composite cluster including organelle organization, macromolecule localization, and stress/catabolism.  
GAE revealed clear boundaries between ROS response, mitophagy, and other biological processes.  
In contrast, GRACE detected only a limited number of significant terms in most clusters, indicating lower resolution.
\clearpage
\begin{table}[H]
  \centering
  \small
  \caption{GO enrichment results per cluster (Top 5 terms). ``**skip**'' indicates clusters with fewer than five genes (excluded), and ``—'' indicates no significant enrichment.}
  \label{tab:go-results}
  \resizebox{\textwidth}{!}{%
  \begin{tabular}{l|p{3.2cm}p{3.2cm}p{3.2cm}p{3.2cm}p{3.2cm}}
    \toprule
    Cluster & GAE & GraphCL & GRACE & SGRL & SupGCL \\
    \midrule
    Cl 0 & 
    Negative regulation of reactive oxygen species metabolic process; Negative regulation of intrinsic apoptotic signaling pathway; Intrinsic apoptotic signaling pathway; Regulation of reactive oxygen species metabolic process; Lymphoid progenitor cell differentiation &
    Mitotic cell cycle process; Cell cycle process; Cell cycle; Cellular response to stress; Mitotic cell cycle & Intrinsic apoptotic signaling pathway; Regulation of programmed cell death; Intrinsic apoptotic signaling pathway in response to DNA damage; Apoptotic process; Programmed cell death
 & Positive regulation of cellular process; Positive regulation of biological process; Cellular response to stress; Regulation of cellular component organization; Catabolic process &
    Organelle organization; Macromolecule localization; Chromosome organization; Catabolic process; Cellular response to stress \\
    \midrule
    Cl 1 & — & Antigen processing and presentation of exogenous peptide antigen via MHC class II; Catabolic process &
    Catabolic process; Cellular response to stress; Positive regulation of cellular process; Protein metabolic process; Regulation of metabolic process & Mitotic cell cycle process; Mitotic cell cycle; Cell cycle phase transition; Mitotic cell cycle phase transition; Cell cycle process &
    Cell cycle; Positive regulation of hydrolase activity; Cell cycle process; Regulation of cell cycle \\
    \midrule
    Cl 2 & Catabolic process; Autophagy of mitochondrion & **skip** & — & Cell cycle process; Cellular response to stress; Chromosome organization; Mitotic cell cycle process; DNA metabolic process & **skip** \\
    \midrule
    Cl 3 & 
    Regulation of cellular component organization; Positive regulation of biological process; Cellular response to stress; Organelle organization; Positive regulation of cellular process
 & Programmed cell death; Cell death; Regulation of programmed cell death; Negative regulation of programmed cell death; Apoptotic process & **skip** & **skip** &
    Regulation of macroautophagy; Positive regulation of macroautophagy; Positive regulation of autophagy; Positive regulation of organelle assembly; Regulation of autophagy \\
    \midrule
    Cl 4 & 
    Positive regulation of cellular process; Cell cycle; Positive regulation of biological process; Regulation of cell cycle; Cell cycle process & **skip** &
    Mitotic cell cycle process; Mitotic cell cycle; Cell cycle process; Cell cycle; Sister chromatid segregation
 & Mitotic sister chromatid segregation; Sister chromatid segregation; Mitotic nuclear division; Nuclear chromosome segregation; Mitotic cell cycle &
    Positive regulation of biological process; System development; Positive regulation of cellular process; Nervous system development; Regulation of multicellular organismal development
 \\
    \midrule
    Cl 5 & 
    Cell migration; Organelle organization; Cellular response to stress; Positive regulation of cellular process &
    Regulation of calcium ion transport & Catabolic process; Establishment of protein localization; Protein transport & **skip** &Small molecule metabolic process; Small molecule biosynthetic process; Response to endogenous stimulus; Carboxylic acid metabolic process; Oxoacid metabolic process
 \\
    \midrule
    Cl 6 &
    Macromolecule localization; Establishment of protein localization; Protein transport; Protein localization; Protein localization to organelle & Positive regulation of cellular process; Positive regulation of biological process; Positive regulation of metabolic process; Catabolic process; Regulation of metabolic process & **skip** &
    Amino acid metabolic process; Catabolic process; Small molecule metabolic process; Apoptotic mitochondrial changes; Sulfur compound metabolic process & Catabolic process; Process utilizing autophagic mechanism; Autophagy; Macroautophagy; Regulation of programmed cell death\\
    \bottomrule
  \end{tabular}}
\end{table}

\subsection{KEGG Pathway Enrichment Analysis}
\subsubsection{KEGG Results}
The KEGG enrichment results are shown in Table~\ref{tab:kegg-results}.  
SupGCL identified breast cancer–related modules, capturing endocrine resistance, ER stress, and estrogen signaling in Cl~6, and separating FoxO/p53/metabolic pathways into Cl~5.  
GAE distinguished DNA damage response, cell cycle, and metabolism, reflecting clear functional boundaries.  
In contrast, GRACE grouped infection-related and metabolic pathways into broad clusters, showing lower resolution.

\begin{table}[H]
  \centering
  \small
  \caption{KEGG enrichment results per cluster (Top 5 terms). ``**skip**'' indicates clusters with fewer than five genes (excluded), and ``—'' indicates no significant enrichment.}
  \label{tab:kegg-results}
  \resizebox{\textwidth}{!}{%
  \begin{tabular}{l|p{3.2cm}p{3.2cm}p{3.2cm}p{3.2cm}p{3.2cm}}
    \toprule
    Cluster & GAE & GraphCL & GRACE & SGRL & SupGCL \\
    \midrule
    Cl 0 & Amino sugar and nucleotide sugar metabolism & Cell cycle; Mismatch repair; Nucleotide excision repair; DNA replication; Terpenoid backbone biosynthesis &
    Kaposi sarcoma-associated herpesvirus infection; Epstein-Barr virus infection; Hepatitis B; Colorectal cancer; Human immunodeficiency virus 1 infection & MAPK signaling pathway; Epstein-Barr virus infection; Biosynthesis of nucleotide sugars; Valine, leucine and isoleucine degradation; Endocrine resistance &
    Mismatch repair \\
    \midrule
    Cl 1 & — & Lysosome &
    Protein processing in endoplasmic reticulum; Lysosome; Vibrio cholerae infection & Cell cycle; Motor proteins & — \\
    \midrule
    Cl 2 & — & **skip** &
    Protein processing in ER; Lysosome & — & **skip** \\
    \midrule
    Cl 3 & Mismatch repair; Endometrial cancer; Nucleotide excision repair; Platinum drug resistance; DNA replication & Metabolic pathways & — & **skip** & — \\
    \midrule
    Cl 4 & Cell cycle; p53 signaling pathway & **skip** & — & Cell cycle & — \\
    \midrule
    Cl 5 & — & — &
    Fructose and mannose metabolism & **skip** &
    FoxO signaling pathway; p53 signaling pathway; Insulin resistance; Metabolic pathways \\
    \midrule
    Cl 6 & Valine, leucine and isoleucine degradation; Arginine and proline metabolism & Colorectal cancer; Endometrial cancer; Pancreatic cancer; Breast cancer; Pathways in cancer & **skip** & Nucleotide excision repair	 &
    Protein processing in endoplasmic reticulum; Endocrine resistance; Estrogen signaling pathway; Vibrio cholerae infection \\
    \bottomrule
  \end{tabular}}
\end{table}

\subsection{Summary of Enrichment Analysis}
In breast cancer samples, SupGCL demonstrated superior biological interpretability by:  
(i) separating autophagy into regulatory (Cl~3) and execution (Cl~6) processes in GO analysis,  
(ii) distinguishing endocrine resistance/ER stress (Cl~6) and FoxO/p53/metabolic modules (Cl~5) in KEGG analysis, and  
(iii) isolating DNA repair (Cl~0) as an independent cluster.  

While GAE also delineated major biological functions, it failed to detect breast cancer–specific endocrine pathways as distinct clusters.  
GRACE exhibited limited resolution, with insufficient reflection of cancer-specific pathway differentiation.  
These results suggest that SupGCL provides more biologically meaningful and interpretable latent representations in the context of breast cancer.

\newpage

\section{Comparison with Other Models}
\label{sec:comparison-other-models}

\begin{table}[h]
  \centering
  \small
  \caption{Comparison with additional models (breast cancer). Geneformer condition \# 1: freeze all except last layer, \# 2: freeze first 2 layers, \# 3: full fine-tuning.}
  \label{tab:comparison-other-models}
  \begin{sc}
  \begin{tabular}{lcc}
    \toprule
    \textbf{Model} & \textbf{Hazard (C-index)} & \textbf{BP (Subset Accuracy)} \\
    \midrule
    GCA          & $0.620 \pm 0.039$ & $0.241 \pm 0.021$ \\
    AFGRL        & $0.616 \pm 0.037$ & $0.240 \pm 0.026$ \\
    SimGRACE     & $0.638 \pm 0.032$ & $0.228 \pm 0.044$ \\
    AD-GCL       & $0.636 \pm 0.036$ & $0.228 \pm 0.023$ \\
    AutoGCL      & $0.620 \pm 0.058$ & $0.214 \pm 0.021$ \\
    ArieL        & $0.626 \pm 0.041$ & $0.230 \pm 0.039$ \\
    \midrule
    Geneformer \# 1
                & $0.604 \pm 0.096$ & $0.241 \pm 0.076$ \\
    Geneformer \# 2
                & $0.595 \pm 0.054$ & $0.230 \pm 0.057$ \\
    Geneformer \# 3
                & $0.585 \pm 0.048$ & $0.226 \pm 0.066$ \\
    \midrule
    Muse-GNN (adapted)
                & $0.639 \pm 0.046$ & $0.238 \pm 0.037$ \\
    \midrule
    \textbf{SupGCL (Ours)} & $\mathbf{0.650 \pm 0.059}$ & $\mathbf{0.243 \pm 0.052}$ \\
    \bottomrule
  \end{tabular}
  \end{sc}
\end{table}

\subsection{Comparison with other graph contrastive learning models}
In graph contrastive learning (GCL), topological changes introduced by augmentations such as node or edge dropping are known to degrade performance. 
To address this issue, recent works have proposed frameworks that either correct or avoid the effects of augmentation. 
For instance, SGRL~\citep{he2024exploitation}, the successor to BGRL~\citeapp{thakoor2021large}, has established a state-of-the-art augmentation-free paradigm. 
Other representative methods include GCA~\citeapp{Zhu_2021}, which automatically adjusts and corrects augmentation effects; 
AFGRL~\citep{Lee_Lee_Park_2022}, which integrates reconstruction and bootstrap learning; 
and SimGRACE~\citep{xia2022simgrace}, which perturbs model parameters with simple Gaussian noise.

Beyond these, more advanced frameworks explicitly treat augmentations as learnable objects. 
AD-GCL~\citep{suresh2021adversarial} adopts a min--max adversarial optimization scheme in which the encoder and a perturbation generator compete to search for destructive transformations. 
AutoGCL~\citep{yin2022autogcl} introduces a learnable policy generator that selects node-wise drop/mask/keep operations, and 
ArieL~\citep{feng2024ariel} employs Projected Gradient Descent (PGD) to construct adversarial views together with an information regularization term. 
All of these methods share the common philosophy of not fixing the augmentation design in advance, but instead optimizing augmentations jointly with the encoder. 
However, the perturbations learned in these frameworks are not semantic operations grounded in real-world mechanisms; they remain generic regularization noise, so the amount of data that \emph{explains} or constrains the augmentations themselves is not increased.

In contrast, SupGCL differs fundamentally from augmentation-free and augmentation-adjustment approaches. 
Rather than avoiding topological changes, SupGCL leverages them as informative signals by supervising augmentations with knockdown-derived GRNs. 
That is, while conventional approaches aim to circumvent or correct augmentation-induced topology shifts, SupGCL explicitly exploits them as positive supervision, transforming a traditionally negative factor into a beneficial learning signal.

Table~\ref{tab:comparison-other-models} reports the comparison on breast cancer GRNs for hazard prediction (graph-level) and BP classification (node-level). 
As shown in the results, SupGCL consistently outperforms representative augmentation-correction and augmentation-free models (GCA, AFGRL, SimGRACE, AD-GCL, AutoGCL, ArieL). 
This suggests that actively learning from topology changes as supervised signals, grounded in knockdown GRNs, leads to improved representation learning performance.

\subsection{Comparison with biological foundation model (Geneformer)}
Recent years have seen rapid progress in the development of biological foundation models. A representative example is Geneformer~\citeapp{theodoris2023transfer}, a transformer-based foundation model pretrained on tens of millions of single-cell transcriptomes. Although the data modality and task setup differ from those in this work, both SupGCL and Geneformer ultimately aim to learn useful representations of genes. Therefore, comparing SupGCL against such a foundation model is important for positioning and validating our approach.

Geneformer converts each cell’s gene expression vector into a sequence of gene tokens ordered by expression rank, and then uses a Transformer to learn context-dependent representations of genes and cells. In this study, we use the publicly available default model Geneformer-316M (embedding dimension 1152, 18 transformer layers) and finetune it on our task settings. Specifically, we consider three finetuning regimes:
\begin{enumerate}[(i)]
    \item updating only the final transformer layer and the prediction head,
    \item partially finetuning by freezing only the first two encoder layers while updating the remaining layers and the prediction head
    \item full finetuning, where all layers are updated.
\end{enumerate}

For the node-level task, we feed each patient’s bulk expression profile into Geneformer and use the resulting gene embeddings for multi-label GO-BP classification. For the graph-level tasks, we construct patient representations by pooling the CLS token and apply them to breast cancer subtype classification and hazard prediction.

Table~\ref{tab:comparison-other-models} summarizes the comparison between SupGCL and Geneformer. On the node-level gene classification task, Geneformer achieves reasonable performance when updating only the last layer or under partial finetuning, confirming the representational power of a transformer-based foundation model. However, for the graph-level tasks of patient prognosis prediction and subtype classification, SupGCL consistently outperforms Geneformer under all finetuning settings, with particularly poor performance observed under full finetuning. This degradation is likely attributable to a distributional mismatch, as Geneformer is designed and pretrained for single-cell transcriptomes, as well as to its inability to directly exploit patient-specific GRN structure.

Overall, while Geneformer---a large-scale transformer foundation model trained on single-cell data---provides competitive performance for node-level functional label prediction, it is outperformed by SupGCL on the GRN-based graph-level tasks that are the focus of this work. In particular, SupGCL’s direct use of knockdown GRNs as supervision appears crucial for achieving strong performance on patient-level outcomes.

\subsection{Comparison with MuSeGNN (adapted)}

We also compared SupGCL with MuSe-GNN~\citep{liu2023muse}, a representative multi-omics graph framework. While MuSe-GNN typically integrates distinct omics modalities (e.g., scRNA-seq, scATAC-seq, spatial transcriptomics), we adapted it to our setting.
Note that this experimental setup utilizes data types that differ from the specific multi-omics modalities originally assumed by MuSe-GNN, but maintains the core principle of multi-view representation learning.

Specifically, we employed the patient GRNs $\mathcal{G}$ derived from TCGA bulk RNA-seq and the teacher knockdown GRNs $\mathcal{H}^*$ derived from LINCS experiments as the dual input sources. To rigorously implement this comparison, we constructed a MuSe-GNN-adapted architecture with independent encoders $f_\mathcal{G}$ and $f_\mathcal{H}$ for each domain:
$$
f_\mathcal{G}: \mathcal{G}_a \mapsto Z^a = \{z_i^a\}_i, \quad
f_\mathcal{H}: \mathcal{H}_a \mapsto Y^a = \{y_i^a\}_i
$$

In the original MuSe-GNN framework, the total loss is defined as a combination of three components:
\begin{align}
\mathrm{Loss}_{\rm MuSe-GNN} \triangleq  \mathrm{Loss}_{\rm Recon} + \mathrm{Loss}_{\rm Common\text{-}Sim} + \mathrm{Loss}_{\rm Diff\text{-}Sim},
\end{align}
which correspond to (i) edge reconstruction loss, (ii) similarity learning for genes common to both modalities, and (iii) contrastive learning for genes specific to one modality.

In our experimental setting, the node sets for $\mathcal{G}$ and $\mathcal{H}^*$ are identical. Consequently, the modality-specific term $\mathrm{Loss}_{\rm Diff\text{-}Sim}$ is not applicable. We therefore optimized the model using $\mathrm{Loss}_{\rm Recon}$ and $\mathrm{Loss}_{\rm Common\text{-}Sim}$.
For the reconstruction term, we applied the standard graph reconstruction loss to both domains. For the similarity term, we adopted a standard InfoNCE formulation to promote the alignment of the corresponding node embeddings $z_i^a$ and $y_i^a$ across the two domains.

The results are summarized in Table~\ref{tab:comparison-other-models}. Although the adapted MuSe-GNN demonstrated high performance in both graph-level and node-level tasks, it did not surpass SupGCL. These results suggest that our proposed setting—where the knockdown GRN $\mathcal{H}$ provides a teacher distribution for the learner GRN $\mathcal{G}$—allows SupGCL to formulate the structural relationship more directly and validly than treating them as multi-modal inputs.

\section{Augmentation in GCL}\label{App:Augmentation}
\subsection{Meaning of Augmentation Learning of GCL}

In this study, we adopted a supervised approach where biological priors determine the graph perturbations. A natural counter-argument might suggest optimizing the graph augmentation strategy itself—learning to generate the "optimal" augmented graph rather than using fixed rules. Here, we theoretically analyze why learning the augmentation model within our contrastive framework leads to trivial solutions, justifying our choice of fixed biological supervision.

Our proposed SupGCL framework is rigorously grounded in the minimization of the KL divergence between the joint distributions of node pairs $(i,j)$ and augmentation pairs $(a,b)$.
The loss function is defined as:

\begin{align*}
 \mathrm{Loss}_{\rm GCL} =  D_\mathrm{KL}(p_{\phi}(i,j,a,b) \mid q_{\phi}(i,j,a,b)).
\end{align*}

Hereafter, we define $a, b \in \mathcal{K}$ as the indices of the target genes subjected to knockdown experiments, rather than abstract augmentation indices. This allows us to formulate the augmentation generation process as follows.

To discuss ``optimal graph augmentation'' in this context, one must consider modeling a virtual graph augmentation generator $q_\psi(\mathcal{G}_a \mid \mathcal{G}, a)$, parameterized by $\psi$. Here, $\mathcal{G}_a$ denotes the generated graph augmentation corresponding to the knockdown index $a$, which aims to approximate the biological ground truth $\mathcal{H}_a$. The objective would be to learn this generator $q_\psi$ while simultaneously minimizing the KL divergence over the knockdown set $\mathcal{K}$ and node set $\mathcal{V}$.

However, if the graph augmentation model $q_\psi(\mathcal{G}_a \mid \mathcal{G}, a)$ is not fixed and is trained jointly with the representation parameters $\phi$, the optimization becomes ill-posed. Specifically, the reference distribution $p_\phi(i,j,a,b)$ and the target distribution $q_{\psi,\phi}(i,j,a,b)$ would be optimized via separate parameters without sufficient constraints. This formulation allows the model to converge to a trivial solution.

A representative example of such a trivial solution is the mapping $g_\psi(\mathcal{G}, a) = \mathcal{H}_a,~\forall \mathcal{G}$.
In this scenario, the generator learns to output the teacher graph $\mathcal{H}_a$ directly, completely ignoring the structural information of the input patient graph $\mathcal{G}$.
While this minimizes the distributional distance, it results in a "data extension" that is independent of the input graph $\mathcal{G}$.
Consequently, the learned representations would fail to capture the unique topological features of the patient-specific networks, rendering them useless for downstream tasks that require patient-specific biological insights.

Therefore, we argue that a mathematical formulation seeking "contrastively effective data augmentation" often results in a "contrastive formulation capable of trivial solutions." Such solutions may be mathematically valid in an unconstrained optimization landscape but are unsuitable for downstream tasks requiring biological fidelity.

While it is technically possible to prevent this collapse by introducing additional constraints—such as reconstruction losses via AutoEncoders or regularization on the weights of $q_\psi$—our goal in this work was to establish the simplest and most interpretable formulation of supervised contrastive learning. Thus, we deliberately excluded learned augmentations in favor of a probabilistic formulation that directly leverages biological experimental data as explicit supervision.

\subsection{Comparison of Data Augmentation Strategies}\label{App:aug-variety}

In this work, we adopt an artificial augmentation scheme in which the node features of a perturbed gene are set to zero and all its incident edges are deleted. 
The motivation is to mimic the effect of a gene knockdown, which reduces the expression level of a specific gene and weakens its interactions with other genes. 
Indeed, prior studies on virtual knockdown and in silico gene perturbation employ similar strategies: 
CellOracle~\citep{yang2023gene} simulates genetic perturbations by replacing the expression of the target gene with zero, 
while GenKI~\citep{kamimoto2023dissecting} and scTenifoldKnk~\citep{osorio2022sctenifoldknk} model knockdown effects on GRNs by removing all edges adjacent to the perturbed gene. 
Our artificial augmentation design follows these precedents.

In addition to this default augmentation, we further evaluated several alternative perturbation modes on the breast cancer dataset: 
(i) \textbf{node copy}: replacing the node features of the target gene with those of another randomly chosen gene; 
(ii) \textbf{in-edge}: deleting only the incoming edges of the target gene; and 
(iii) \textbf{out-edge}: deleting only its outgoing edges. 
The results are summarized in Table~\ref{tab:perturbation-modes}.

Overall, the node-copy scheme (i) yields slightly worse performance than the others, but the differences across perturbation modes are modest. 
One plausible interpretation is that, whereas knockdown corresponds to a decrease in expression, randomly copying features from another gene can sometimes increase the effective expression of the target gene, thereby inducing perturbations that are misaligned with the underlying biological process. 
These observations suggest that augmentations that more faithfully reflect the biology of knockdown-like perturbations tend to support better representation learning.

\begin{table}[H]
  \caption{Performance comparison under different perturbation modes (breast cancer GRN).}
  \label{tab:perturbation-modes}
\begin{center}
    \begin{small}
        \begin{sc}
  \begin{tabular}{lcc}
    \toprule
    \textbf{Method} & \textbf{Hazard (c-index)} & \textbf{BP (Subset Accuracy)} \\
    \midrule
    node copy     & $0.647 \pm 0.052$ & $0.236 \pm 0.026$ \\
    in-edge       & $0.653 \pm 0.077$ & $0.242 \pm 0.039$ \\
    out-edge      & $0.667 \pm 0.059 $ & $0.240 \pm 0.042$ \\
    \midrule
    \textit{original setting} & $0.650 \pm 0.059$ & $0.243 \pm 0.052$ \\
    \bottomrule
  \end{tabular}
        \end{sc}
    \end{small}
\end{center}
\end{table}

\end{document}